\newcommand{\circled}[1]{\small{\raisebox{.6pt}{\textcircled{\raisebox{-.8pt}{#1}}}}}
\newcommand{\cfrakR}{\mathfrak{R}} %\usepackage{eufrak}
\newcommand{\relu}[1]{\sigma\pth{#1}}
\newcommand{\bbx}{\overset{\rightharpoonup}{\bx}}
\newcommand{\bbw}{\overset{\rightharpoonup}{\bw}}
\newcommand{\bbe}{\overset{\rightharpoonup}{\be}}
\newcommand{\bbq}{\overset{\rightharpoonup}{\bq}}
\newcommand{\cFnn}{\cF_{\mathop\mathrm{NN}}}
\newcommand{\cFext}{\cF_{\mathop\mathrm{ext}}}
\newcommand{\Kint}{{K}^{\mathop\mathrm{(int)}}}
\newcommand{\hKint}{{\hat K}^{\mathop\mathrm{(int)}}}
\newcommand{\bKint}{{\bK}^{\mathop\mathrm{(int)}}}
\newcommand{\hbKint}{{\hat \bK}^{\mathop\mathrm{(int)}}}
\newcommand{\bEint}{{\bE}^{\mathop\mathrm{(int)}}}
\newcommand{\bSigmaint}{{\bSigma}^{\mathop\mathrm{(int)}}}
\newcommand{\bUint}{{\bU}^{\mathop\mathrm{(int)}}}
\newcommand{\lambdaint}{{\lambda}^{\mathop\mathrm{(int)}}}
\newcommand{\hlambdaint}{{\hat \lambda}^{\mathop\mathrm{(int)}}}
\newcommand{\tK}{\tilde K}
\newcommand{\cHKint}{\cH_{\Kint}}
\newcommand{\vint}{v^{\mathop\mathrm{(int)}}}
\newcounter{optproblem}
\newtheoremstyle{mytheoremstyle} % name
    {\topsep}                    % Space above
    {\topsep}                    % Space below
    {\normalfont}                % Body font
    {}                           % Indent amount
\theoremstyle{mytheoremstyle}
\newtheorem{theorem}{Theorem}[section]
\newtheorem{proposition}[theorem]{Proposition}
\newtheorem*{theorem*}{Theorem}
\newtheorem*{lemma*}{Lemma}
\newtheorem*{remark*}{Remark}
\newtheorem{lemma}[theorem]{Lemma}%[section]
\theoremstyle{remark}
\newtheorem{definition}{Definition}[section]
\DeclareMathAlphabet{\pazocal}{OMS}{zplm}{m}{n}
\DeclareMathAlphabet{\mathpzc}{OMS}{pzc}{m}{it}
\setlist[itemize]{leftmargin=*}
\renewcommand{\hat}{\widehat}
\newcommand{\bfm}[1]{\ensuremath{\mathbf{#1}}}
\newcommand{\bfsym}[1]{\ensuremath{\boldsymbol{#1}}}
\def\ba{\boldsymbol a}   \def\bA{\bfm A}
   \def\bD{\bfm D}  
\def\be{\bfm e}   \def\bE{\bfm E}
   \def\bH{\bfm H}  
   \def\bI{\bfm I}  
   \def\bK{\bfm K}  
   \def\bM{\bfm M}
\def\bq{\bfm q}   \def\bQ{\bfm Q}  
   \def\bR{\bfm R}  \def\RR{\mathbb{R}}
\def\bs{\bfm s}   \def\bS{\bfm S}  
\def\bu{\bfm u}   \def\bU{\bfm U}  
\def\bv{\bfm v}     
\def\bw{\bfm w}   \def\bW{\bfm W}  
\def\bx{\bfm x}   \def\bX{\bfm X}  
\def\by{\bfm y}     
   \def\bZ{\bfm Z}  
\def\bzero{\bfm 0}
 \def\cB{{\cal  B}}
 \def\cE{{\cal  E}}
 \def\cF{{\cal  F}}
 \def\cH{{\cal  H}}
 \def\cN{{\cal  N}}
 \def\cO{{\cal  O}}
 \def\cR{{\cal  R}}
 \def\cV{{\cal  V}}
 \def\cW{{\cal  W}}
 \def\cX{{\cal  X}}
\def\bbeta{\bfsym \beta}
\def\bmu{\bfsym {\mu}}
\def\bSigma{\bfsym \Sigma}
\def\+#1{\mathcal{#1}}
\def\-#1{\textup{#1}}
\def\set#1{\left\{ #1 \right\}}
\def\pth#1{\left( #1 \right)}
\def\bth#1{\left[ #1 \right]}
\def\abth#1{\left | #1 \right |}
\def\defeq {\coloneqq}
\newcommand{\vect}[1]{{\textup{vec}\pth{#1}}}
\def \longmid {\,\middle\vert\,}
\DeclareMathSymbol{\relcolon}{\mathrel}{operators}{"3A}
\newcommand{\La}{\left\langle\kern-0.64ex\left\langle}
\newcommand{\Ra}{\right\rangle\kern-0.64ex\right\rangle}
\def\Norm#1#2{{\left\vert\kern-0.4ex\left\vert\kern-0.4ex\left\vert #1
    \right\vert\kern-0.4ex\right\vert\kern-0.4ex\right\vert}_{#2}}
\def\norm#1#2{{\left\|#1\right\|}_{#2}}
\def\ltwonorm#1{\norm{#1}{2}}
\def\fnorm#1{\norm{#1}{\textup{F}}}
\def\supnorm#1{\norm{#1}{\infty}}
\def\tr#1{\textup{tr}\left(#1\right)}
\newcommand{\1}{{\rm 1}\kern-0.25em{\rm I}}
\def\indict#1{{\rm 1}\kern-0.25em{\rm I}_{\set{#1}}}
\def \eps  {\epsilon}
\def \eps {\varepsilon}
\def \diff {{\rm d}}
\def \iprod#1#2{\left\langle #1, #2 \right\rangle}
\def\set#1{\left\{#1\right\}}
\def\ball#1#2#3{\bfm{B}^{#1}\left(#2; #3\right)}
\def\unitsphere#1{\mathbb{S}^{#1}}
\def \E {\mathbb{E}}
\def\Expect#1#2{\E_{#1}\left[#2\right]}
\def \Pr {\textup{Pr}}
\newcommand{\Prob}[1]{\Pr\left[#1\right]}
\def \Var#1{\textup{Var}\left[#1\right]}
\def \lsim {\lesssim}
\def \gsim {\gtrsim}
\newcommand{\Unif}[1]{{\mathop\mathrm{Unif}}\left( #1 \right)}
\newcommand{\beq}{\begin{equation}}
\newcommand{\eeq}{\end{equation}}
\newcommand{\beqa}{\begin{eqnarray}}
\newcommand{\eeqa}{\end{eqnarray}}
\newcommand{\beqas}{\begin{eqnarray*}}
\newcommand{\eeqas}{\end{eqnarray*}}
\def\bal#1\eal{\begin{align}#1\end{align}}
\def\bals#1\eals{\begin{align*}#1\end{align*}}
\def\bsal#1\esal{\begin{small}\begin{align}#1\end{align}\end{small}}
\def\bsals#1\esals{\begin{small}\begin{align*}#1\end{align*}\end{small}}
\def\bsfal#1\esfal{\begin{small}\begin{flalign}#1\end{flalign}\end{small}}
\newcommand{\BlackBox}{\rule{1.5ex}{1.5ex}}  % end of proof
    \renewenvironment{proof}{\par\noindent{\bf Proof\ }}{\hfill\BlackBox\\[2mm]}
    \newenvironment{proof}{\par\noindent{\bf Proof\ }}{\hfill\BlackBox\\[2mm]}
\begin{document}

\title{Sharp Generalization for Nonparametric Regression in Interpolation Space by Over-Parameterized Neural Networks Trained with Preconditioned Gradient Descent and Early Stopping}

\author{\name Yingzhen Yang \email yingzhen.yang@asu.edu \\
       \addr School of Computing and Augmented Intelligence\\
       Arizona State University\\
       Tempe, AZ 85281, USA
       \AND
       \name Ping Li \email pingli98@gmail.com  \\
       \addr VecML Inc.\\
       Bellevue, WA 98004, USA}

\editor{My editor}

\maketitle

\begin{abstract}%   <- trailing '%' for backward compatibility of .sty file
We study nonparametric regression using an over-parameterized two-layer neural networks trained with algorithmic guarantees in this paper. We consider the setting where the training features are drawn uniformly from the unit sphere in $\RR^d$, and the target function lies in an interpolation space commonly studied in statistical learning theory. We demonstrate that training the neural network with a novel Preconditioned Gradient Descent (PGD) algorithm, equipped with early stopping, achieves a sharp regression rate of
$\cO(n^{-\frac{2\alpha s'}{2\alpha s'+1}})$ when the target function is in the interpolation space $\bth{\cH_K}^{s'}$ with $s' \ge 3$. This rate is even sharper than the currently known nearly-optimal rate of
$\cO(n^{-\frac{2\alpha s'}{2\alpha s'+1}})\log^2(1/\delta)$~\citep{Li2024-edr-general-domain},
where $n$ is the size of the training data and $\delta \in (0,1)$ is a small probability. This rate is also
sharper than the standard kernel regression rate of $\cO(n^{-\frac{2\alpha}{2\alpha+1}})$ obtained under the regular Neural Tangent Kernel (NTK) regime when training the neural network with the vanilla gradient descent (GD), where $2\alpha = d/(d-1)$. Our analysis is based on two key technical contributions. First, we present a principled decomposition of the network output at each PGD step into a function in the reproducing kernel Hilbert space (RKHS) of a newly induced integral kernel, and a residual function with small $L^{\infty}$-norm. Second, leveraging this decomposition, we apply local Rademacher complexity theory to tightly control the complexity of the function class comprising all the neural network functions obtained in the
PGD iterates. Our results further suggest that PGD enables the neural network to escape the linear NTK regime and achieve improved generalization, as it effectively induces a new kernel of lower kernel complexity, which is the integral kernel, compared to the regular NTK arising from the vanilla GD.

%{\color{red}{Section D of this supplementary is the section which contains the simulation of the main paper.}}
\end{abstract}

\vspace{0.3in}
\begin{keywords}
 Nonparametric Regression, Over-Parameterized Neural Network, Preconditioned Gradient Descent, Minimax Optimal Rate
\end{keywords}

\newpage

\section{Introduction}

The remarkable success of deep learning across numerous domains~\citep{YannLecunNature05-DeepLearning} has spurred intense interest in understanding the generalization properties of neural networks within statistical and deep learning theory. A substantial body of work has addressed the optimization dynamics of deep neural networks (DNNs), establishing that gradient descent (GD) and stochastic gradient descent (SGD) can achieve vanishing training loss under various settings~\citep{du2018gradient-gd-dnns,AllenZhuLS19-convergence-dnns,DuLL0Z19-GD-dnns,AroraDHLW19-fine-grained-two-layer,ZouG19,SuY19-convergence-spectral}. Parallel efforts have focused on generalization theory, aiming to provide algorithmic guarantees, that is, bounds for the generalization error of neural networks trained with gradient-based or other optimization methods.
A prominent line of research has shown that in the over-parameterized regime, where hidden layers have a sufficiently large number of neurons, the training dynamics of DNNs closely resemble those of kernel methods with Neural Tangent Kernel (NTK)~\citep{JacotHG18-NTK}. Specifically, the NTK induced by the architecture of the network characterizes this behavior. The core insight behind NTK-based analysis is that, for highly over-parameterized networks, weights remain close to their random initialization throughout training. This enables first-order Taylor expansion around the initialization point, allowing for tractable approximation and generalization analysis~\citep{CaoG19a-sgd-wide-dnns,AroraDHLW19-fine-grained-two-layer,Ghorbani2021-linearized-two-layer-nn}. In addition, certain infinite-width networks can learn features~\citep{YangH21-feature-learning-infinite-network-width}.

While NTK captures important aspects of learning in wide networks, it operates within a linearized regime of the NTK, limiting its capacity to capture feature learning. Existing works make efforts to learn features beyond NTK's linear approximation. For instance, the QuadNTK~\citep{BaiL20-quadratic-NTK} employs higher-order approximations for improve generalization. Other works, such as~\citet{Nichani0L22-escape-ntk}, combine NTK and QuadNTK to learn structured polynomial targets, while~\citet{DamianLS22-nn-representation-learning} and~\citet{TakakuraS24-mean-field-two-layer} explore optimization procedures and mean-field analyses to capture the feature learning effects in two-layer neural networks.

Although much of the generalization literature focuses on clean data, a fundamental question in statistical learning concerns how neural networks perform in the presence of noise, particularly for nonparametric regression. Prior work has demonstrated that various DNN architectures achieve minimax optimal rates for both smooth~\citep{Yarotsky17-error-approximation-dnns,Bauer2019-regression-dnns-curse-dim,SchmidtHieber2020-regression-relu-dnns,Jiao2023-regression-polynomial-prefactors,ZhangW23-weight-decay-dnns} and non-smooth~\citep{ImaizumiF19-nonsmooth-regression} target functions. However, these results often lack algorithmic guarantees, that is, the network constructions are not generally attainable through standard optimization procedures such as GD. Furthermore, these constructions may assume sparse connectivity structures that are not representative of contemporary over-parameterized architectures. Studying the generalization of neural networks with algorithmic guarantees
is particularly important to understand the behavior of neural networks trained in practice, revealed by the efforts in
this direction such as~\citet{HuWLC21-regularization-minimax-uniform-spherical,
SuhKH22-overparameterized-gd-minimax,Li2024-edr-general-domain,
yang2024gradientdescentfindsoverparameterized,
Yang2025-generalization-two-layer-regression}.
%A notable exception is~\citet{HuWLC21-regularization-minimax-uniform-spherical}, which proves that GD with $\ell^2$ regularization on a two-layer over-parameterized network can achieve the minimax optimal rate for regression under a spherical uniform data distribution.

In this work, we study nonparametric regression using an over-parameterized two-layer neural network with algorithmic guarantees. When the target function resides in an interpolation space characterized by a spectral bias to
be defined in Section~\ref{sec:setup-kernels-target-function}, and the input training features follow a spherical uniform distribution, we show that
early-stopped training by a novel Preconditioned Gradient Descent (PGD) algorithm achieves the minimax optimal risk rate of $\cO(n^{-\frac{2\alpha(s+2)}{2\alpha(s+2)+1}}) =
\cO(n^{-\frac{d(s+2)}{d(s+2)+d-1}})$ for $2\alpha = d/(d-1)$, where $s \ge 1$ is specified by the preconditioner in PGD. This result improves upon existing bounds, as summarized in Section~\ref{sec:summary-main-results}. Although GD-based methods with preconditioning have been used in other over-parameterized settings, such as low-rank matrix sensing~\citep{Xu2023-PGD-overparameterized-low-rank-matrix-sensing} and nonconvex factorization~\citep{Zhang2023-PGD-overparameterized-nonconvex-factorization}, this paper is among the first works to design a PGD method achieving sharp minimax optimal rates for nonparametric regression using over-parameterized neural networks. Furthermore, PGD induces a novel kernel, termed the integral kernel, rather than the
well-studied usual NTK induced by the vanilla GD. The integral kernel has lower kernel complexity than the vanilla NTK, and our rate of
$\cO(n^{-\frac{2\alpha(s+2)}{2\alpha(s+2)+1}})$ is in fact the minimax optimal rate of kernel regression with the~integral~kernel over the interpolation space.

We organize this paper as follows. The rest of this section introduces the necessary notations. Section~\ref{sec:setup}
details the problem setup, including the definition of the interpolation
space. Section~\ref{sec:training} introduces the training algoirithm for the over-parameterized two-layer neural network with the PGD.
Section~\ref{sec:summary-main-results}
summarizes our main contributions,
and Section~\ref{sec:main-results} presents them in detail.
The proof roadmap with our key technical results, difference from existing kernel learning theory, and the novel proof
strategy are introduced in
Section~\ref{sec:proof-roadmap}. Simulation results are presented in Section~\ref{sec:simulation}.

%\subsection{Notations}
%\label{sec:notations}

\vspace{0.1in}
\noindent \textbf{Notations.} We use bold letters for matrices and vectors, and regular lowercase letter for scalars throughout this paper. The bold letter with a single superscript indicates the corresponding column of a matrix, e.g., $\bA^{(i)}$ is the $i$-th column of matrix $\bA$, and the bold letter with subscripts indicates the corresponding rows or elements of a matrix or a vector. We put an arrow on top of
a letter with subscript if it denotes a vector, e.g.,
$\bbx_i$ denotes the $i$-th training
feature. $\fnorm{\cdot}$ and
$\norm{\cdot}{p}$ denote the Frobenius norm and the vector $\ell^{p}$-norm or the matrix $p$-norm. $[m\relcolon n]$ denotes all the integers between $m$ and $n$ inclusively, and $[1\relcolon n]$ is also written as $[n]$. $\Var{\cdot}$ denotes the variance of a random variable. $\bI_n$ is a $n \times n$ identity matrix.  $\indict{E}$ is an indicator function which takes the value of $1$ if event $E$ happens, or $0$ otherwise. The complement of a set $A$ is denoted by $A^c$, and $\abth{A}$ is the cardinality of the set $A$. $\vect{\cdot}$ denotes the vectorization of a matrix or a set of vectors, and $\tr{\cdot}$ is the trace of a matrix.
We denote the unit sphere in $d$-dimensional Euclidean space by $\unitsphere{d-1} \defeq \{\bx \colon  \bx \in \RR^d, \ltwonorm{\bx} =1\}$. Let $L^2(\unitsphere{d-1}, \mu) $ denote the space of square-integrable functions on $\unitsphere{d-1}$ with probability measure $\mu$, and the inner product $\iprod{\cdot}{\cdot}_{\mu}$ and $\norm{\cdot}{\mu}^2$ are defined as $\iprod{f}{g}_{L^2} \coloneqq \int_{\unitsphere{d-1}}f(x)g(x) \diff \mu(x)$ and $\norm{f}{L^2}^2 \defeq \int_{\unitsphere{d-1}}f^2(x) \diff \mu (x) <\infty$. $\ball{}{\bx}{r}$ is the Euclidean closed ball centered at $\bx$ with radius $r$. Given a function $g \colon \unitsphere{d-1} \to \RR$, its $L^{\infty}$-norm is denoted by $\norm{g}{\infty} \defeq \sup_{\bx \in \unitsphere{d-1}} \abth{g(\bx)}$. $L^{\infty}$ is the function class whose elements have almost surely bounded $L^{\infty}$-norm. $\iprod{\cdot}{\cdot}_{\cH}$ and $\norm{\cdot}{\cH}$ denote the inner product and the norm in the Hilbert space $\cH$. $a = \cO(b)$ or $a \lsim b$ indicates that there exists a constant $c>0$ such that $a \le cb$. $\tilde \cO$ indicates there are specific requirements in the constants of the $\cO$ notation. $a = o(b)$ and $a = w(b)$ indicate that $\lim \abth{a/b}  = 0$ and $\lim \abth{a/b}  = \infty$, respectively. $a \asymp b$  or $a = \Theta(b)$ denotes that
there exist constants $c_1,c_2>0$ such that $c_1b \le a \le c_2b$.
Throughout this paper we let the input space $\cX = \unitsphere{d-1}$, and $\Unif{\cX}$ denotes the uniform distribution on $\cX$.
%${\rm supp}(\cdot)$ is the support of a vector, $\mathbb P_{\cS'}$ is the operator of orthogonal projection onto the subspace $\cS'$.
%$\sigma_{t}(\cdot)$ denotes the $t$-th largest singular value of a matrix, $\sigma_{\max}(\cdot)$ and $\sigma_{\min}(\cdot)$ indicate the largest and smallest singular value of a matrix respectively.
%For a function defined on $\cX$, we define $\norm{f}{\cX} \defeq \sup_{\bx \in \cX} \abth{f(\bx)}$.
The constants defined throughout this paper may change from line to line.
For a Reproducing Kernel Hilbert Space (RKHS) $\cH$, $\cH(\mu_0)$ denotes
the ball centered at the origin with radius $\mu_0$ in $\cH$.
We use $\Expect{P}{\cdot}$ to denote the expectation with respect to the distribution $P$.

\section{Problem Setup}
\label{sec:setup}
We first introduce the problem setup for nonparametric regression with a two-layer neural network where the target function belongs to an interpolation space with spectral bias.
\subsection{Two-Layer Neural Network}
\label{sec:setup-two-layer-nn}

We are given the training data $\set{(\bbx_i,  y_i)}_{i=1}^n$ where each data point is a tuple of feature vector $\bbx_i \in \cX$ and its response $ y_i \in \RR$. Throughout this paper we assume
that no two training features coincide, that is, $\bbx_i \neq \bbx_j$ for all $i,j \in [n]$ and $i \neq j$.
We denote the training feature vectors by $\bS = \set{\bbx_i}_{i=1}^n$, and denote by $P_n$ the empirical distribution over $\bS$. All the responses are stacked as a vector $ \by = [ y_1, \ldots,  y_n]^\top \in \RR^n$.
The response $ y_i$ is given by $ y_i= f^*(\bbx_i) + w_i$ for
$i \in [n]$,
where $\set{w_i}_{i=1}^n$ are i.i.d. sub-Gaussian random variables as the noise with mean $0$ and variance proxy $\sigma_0^2$, that is, $\Expect{}{\exp(\lambda w_i)} \le \exp(\lambda^2 \sigma_0^2/2)$ for any $\lambda \in \RR$. $f^*$ is the target function to be detailed later. We define $\by \defeq \bth{y_1,\ldots,y_n}$, $\bw \defeq \bth{w_1,\ldots,w_n}^{\top}$, and use $f^*(\bS) \defeq \bth{f^*(\bbx_1),\ldots,f^*(\bbx_n)}^{\top}$ to denote the clean target responses.
The feature vectors in $\bS$ are drawn i.i.d. according to the spherical uniform distribution $P =\Unif{\cX}$ with $\mu$ being the probability measure of $P$.

We consider a two-layer Neural Network (NN) in this paper whose
mapping function is

\vspace{-0.2in}
\bal\label{eq:two-layer-nn}
&f(\bW,\bx) =  \frac{1}{\sqrt{m}}\sum_{r=1}^{m}
a_r \relu{{\bbw_r}^\top \bx},
\eal\vspace{-0.15in}

\noindent where $\bx \in \cX$ is the input, $m$ is the number of neurons, $\bW = \set{\bbw_r}_{r=1}^m$ with $\bbw_r \in \RR^d$ for $r \in [m]$ denotes the weight vectors in the first layer. $\ba = \bth{a_1, \ldots, a_m} \in \RR^m$ denotes the weights of the second layer. Our goal is to
establish a sharp convergence rate of $\Expect{P}{\pth{f(\bW,\cdot) - f^*}^2}$
with $f(\bW,\cdot)$ trained from the training data. We also write $\bW$ as $\bW_{\bS}$ to indicate that the weighting vectors, or the weights $\bW$, are trained on the training features $\bS$.

\subsection{Kernel and Target Function}
\label{sec:setup-kernels-target-function}

We define the kernel function
\bal\label{eq:kernel-two-layer}
&K(\bu,\bv) \defeq  \frac{ \iprod{\bu}{\bv}}{2\pi} \pth{\pi -\arccos \iprod{\bu}{\bv}}, ~~~ \forall ~ \bu, \bv \in \cX,
\eal%
which is in fact the NTK associated with
the two-layer NN (\ref{eq:two-layer-nn})~\citep{du2018gradient-gd-dnns} when trained with the vanilla GD with fixed $\ba$.
Let the gram matrix of $K$ over the training features $\bS$ be $\bK \in \RR^{n \times n}, \bK_{ij} = K(\bbx_i,\bbx_j)$ for $i,j \in [n]$,
and $\bK_n \defeq \bK/n$. Let the eigendecomposition  of $\bK_n$ be $\bK_n = \bU \bSigma {\bU}^{\top}$ where $\bU$ is a $n \times n$ orthogonal matrix, and $\bSigma$ is a diagonal matrix with its diagonal elements $\set{\hat \lambda_i}_{i=1}^n$ being the eigenvalues of $\bK_n$ and sorted in a non-increasing order.  It is proved in existing works, such as~\citet{du2018gradient-gd-dnns}, that $\bK_n$ is non-singular, and it can be verified that $\hat \lambda_1 \in (0, 1/2)$.
%$\bU_{k}$ is the $k$-th eigenvector which corresponds to the $k$-th largest %eigenvalue $\hat \lambda_k$ for $k \in [n]$.
%We will also use the notation  $\bK_{\bS,n}$ to denote $\bK_n$ explicitly indicating the training features $\bS$ for which the  matrix $\bK_n$ is computed.We define the function class $L^2_{k_0}$ with $k_0 \ge 1$ by
%$L^2_{k_0} \defeq \set{f \in L^2(\cX,\mu)
%\colon f = \sum\limits_{j =1}^{m_{k_0}} \alpha_j e_j  }$,
%which comprises all the square integrable functions which can be
%spanned by eigenfunctions in the first $k_0$ eigenspaces of
%$T_K$. It is clear that $L^2_{k_0} \subseteq \cH_K$.
%We consider the case that $f^* \in \cH_{K}(\mu_0) \cap L^2_{k_0}$.
Let $\cH_{K}$ be the RKHS associated with
$K$. Because $K$ is continuous on the compact set $\cX \times \cX$, the integral operator $T_K \colon L^2(\cX,\mu) \to L^2(\cX,\mu), \pth{T_K f}(\bx) \defeq \int_{\cX} K(\bx,\bx') f(\bx') \diff \mu(\bx')$ is a positive, self-adjoint, and compact operator on $L^2(\cX,\mu)$. By the spectral theorem, there is a countable orthonormal basis $\set{e_j}_{j \ge 1} \subseteq L^2(\cX,\mu)$ and $\set{\lambda_j}_{j \ge 1}$ with $\frac 12 \ge \lambda_1 \ge \lambda_2 \ge \ldots > 0$ such that $e_j$ is the eigenfunction of $T_K$ with
$\lambda_j$ being the corresponding eigenvalue. That is, $T_K e_j = \lambda_j e_j, j \ge 1$.
Let $\set{\mu_{\ell}}_{\ell \ge 1}$ be the distinct eigenvalues associated with $T_K$, and let
$m_{\ell}$ be the sum of multiplicity of the eigenvalue $\set{\mu_{\ell'}}_{\ell'=1}^{\ell}$, so that
$m_{\ell} - m_{\ell-1}$ is the multiplicity of $\mu_{\ell}$ with $m_0 = 0$.
It is well known that $\set{v_j = \sqrt {\lambda_j} e_j}_{j\ \ge 1}$ is an orthonormal basis of $\cH_K$. For a positive constant $\mu_0$, we define $\cH_{K}(\mu_0) \defeq \set{f \in \cH_{K} \colon \norm{f}{\cH} \le \mu_0}$ as the closed  ball in $\cH_K$ centered at the origin with radius $\mu_0$. We note that $\cH_{K}(\mu_0)$ is also specified by $\cH_{K}(\mu_0) = \set{f \in L^2(\cX,\mu)\colon f = \sum_{j =1}^{\infty} \beta_j e_j,
\sum_{j = 1}^{\infty} \beta_j^2/\lambda_j \le \mu_0^2}$.

\vspace{0.15in}
\noindent  \textbf{Target Function in an Interpolation Space with Spectral Bias.} Extensive theoretical and empirical studies find that it is easy for neural networks to learn spectrally biased target functions or low-frequency information in the training data
\citep{rahaman19a-spectral-bias,AroraDHLW19-fine-grained-two-layer,CaoFWZG21-spectral-bias,
ChorariaD0MC22-spectral-bias-pnns}. For example, the studies in
\citet{AroraDHLW19-fine-grained-two-layer,
CaoFWZG21-spectral-bias} reveal that
it is easier for over-parameterized neural networks to learn target functions with spectral bias, for example, polynomials of low-degree with
spherical uniform data distribution on $\cX$, or the low-rank part of the ground truth training class labels, or simple patterns of low-frequency.
This observation motivates
us to restrict the target function $f^*$ to a smaller class than $\cH_K(\mu_0)$, which is $\cH_{\Kint}(\mu_0)$ to be studied in this paper.

We then define the integral kernel $\Kint$, and explain why functions in $\cH_{\Kint}(\mu_0)$
has stronger spectral bias than that in $\cH_{K}(\mu_0)$.
Given the kernel $K$,  we first define another kernel $K^{(s)}$ with $s \ge 1$ by
\bal\label{eq:Ks-def}
K^{(s)}(\bx,\bx') \defeq \sum\limits_{j \ge 1} \lambda_j^{s} e_j(\bx)e_j(\bx'),
\quad \forall \bx,\bx' \in \cX.
\eal\vspace{-0.15in}

\noindent It can be verified that $K^{(s)}$ is positive-definite (PD), and it is shown in Theorem~\ref{theorem:Kint-bounded} deferred to Section~\ref{sec:RKHS-Kint-more-results} of the appendix that $K^{(s)}$ is well-defined since the RHS of (\ref{eq:Ks-def}) converges uniformly on $\cX \times \cX$ with $s \ge 1$, and $K^{(s)}$ is bounded with $\sup_{\bx,\bx' \in \cX} \abth{K^{(s)}(\bx,\bx')} \le 1/2$. We define $\bK^{(s)} \in \RR^{N \times N}$ as the kernel gram matrix of $K^{(s)}$ specified by $\bK^{(s)}_{ij} = K^{(s)}(\bbx_i,\bbx_j)$ for all $i,j \in [N]$, and
$\bK^{(s)}_N = \bK^{(s)}/N$.
We then define the integral kernel $\Kint $ as well as its empirical version of $\hKint$ as
\bal
\Kint (\bx,\bx') &\defeq  \int_{\cX \times \cX} K(\bx,\bv) K^{(s)}(\bv,\bv')  K(\bv',\bx')  \diff \mu(\bv) \otimes \mu(\bv'), \label{eq:Kint-def} \\
\hKint (\bx,\bx') &\defeq \frac{1}{N^2} \sum\limits_{i,j=1}^N K(\bx,\bbq_i)  K^{(s)}(\bbq_i,\bbq_j) K(\bbq_j,\bx'). \label{eq:hatKint-def}
\eal
An i.i.d. sample $\bQ = \set{\bbq_i}_{i=1}^N$ such that
$\bbq_i \sim \Unif{\cX}$ for all $i \in [N]$ is used in (\ref{eq:hatKint-def}).
Theorem~\ref{theorem:spectrum-Kint} in Appendix~\ref{sec:RKHS-Kint-more-results}  shows that the integral operator associated with $\Kint$, $T_{\Kint}$, has the same eigenfunctions $\set{e_j}_{j \ge 1}$  as $T_K$, and the eigenvalue corresponding to $e_j$ is $\lambdaint_j = \lambda^{s+2}_j \in (0,1/8]$ for all $j \ge 1$ and $s \ge 1$. Because $\Kint$ is still a PD kernel, the
RKHS associated with $\Kint$ is well-defined, and $\cH_{\Kint}(\mu_0)$ indicates a subset of $\cH_{\Kint}$ with the RKHS-norm $\norm{\cdot}{\Kint}$ bounded by $\mu_0$. It can
be verified that $\cH_{\Kint}(\mu_0) = \{f \in L^2(\cX,\mu) \colon f = \sum_{j =1}^{\infty} \beta_j e_j,\sum_{j =1}^{\infty} \beta_j^2/\lambda^{s+2}_j \le \mu_0^2\}$.
As $\lambda_j \to 0$ with $j \to \infty$ and $\lambda_j^{s+2} < \lambda_j < 1$ which follow from the spectral theorem, we have $\cH_{\Kint}(\mu_0) \subseteq \cH_K(\mu_0)$. Compared to a function in $\cH_K(\mu_0)$, a function in $\cH_{\Kint}(\mu_0)$ has the
expansion coefficients
$\set{\beta_j}_{j \ge 1}$ which concentrate more on the leading eigenfunctions with smaller index $j$. In this sense, we say that the target function $f^* \in \cH_{\Kint}(\mu_0)$ has stronger spectral bias than that in $\cH_{K}(\mu_0)$. We note that such target functions with spectral bias have been widely studied in the deep learning literature. For example,
\citet{Ghorbani2021-linearized-two-layer-nn,BaiL20-quadratic-NTK,
CaoFWZG21-spectral-bias} study the problem of learning polynomials of a low-degree $\ell$ with $\ell \ge 0$ by a linearization or
a higher-order approximation to neural networks. Such polynomials
can be expanded as a finite linear combination of the leading eigenfunctions
$\set{e_j}_{j=1}^{m_{\ell+1}}$. With properly chosen $\mu_0$ and coefficients $\set{\beta_j}_{j=1}^{m_{\ell+1}}$, we have
$\sum_{j=1}^{m_{\ell+1}} \beta_j e_j \in \cH_{\Kint}(\mu_0)$.  It is remarked that $\cH_{\Kint}(\mu_0)$ is also the interpolation
space $\bth{\cH_K}^{s'}$ (with $s' \defeq s+2$) of the RKHS, which characterizes the regularity of the regression function studied
in~\citet{Steinwart2012,Fischer2020-sobolev-norm-regularized-least-square}. The interpolation space for general $s' > 0$ is defined as $\bth{\cH_K}^{s'}(\mu_0) \defeq
\set{\sum_{j \ge 1} a_j \lambda_j^{s'/2} e_j \colon \sum_{j \ge 1} a^2_j \le \mu_0}$. In this work we consider $\bth{\cH_K}^{s'}$ with $s' \ge 3$, and it can be verified that $\cH_{\Kint}(\mu_0) = \bth{\cH_K}^{s'}(\mu_0)$
with $s' = s+2$ and $s \ge 1$.
In summary, \textit{if $f^* \in \cH_{\Kint}(\mu_0) =\bth{\cH_K}^{s'}(\mu_0) \subseteq \cH_K(\mu_0)$, then we say that
$f^*$ is a target function in an interpolation space with spectral bias. }

\vspace{0.15in}
\noindent \textbf{The Task of Nonparametric Regression.} The task of nonparametric regression studied in this paper is to find an estimator $\hat f$ from the training data $\set{(\bbx_i,  y_i)}_{i=1}^n$ so that
the risk $\Expect{P}{\pth{\hat f - f^*}^2}$ can converge to $0$ with a fast rate, with $f^* \in \bth{\cH_K}^{s'}(\mu_0)=\cH_{\Kint}(\mu_0)$. The over-parameterized NN (\ref{eq:two-layer-nn}) trained from the training data serves as the estimator $\hat f$.

% \begin{figure}[!htbp]
%     \centering
%     \begin{minipage}{.42\columnwidth}
%         \begin{algorithm}[H]
%         \renewcommand{\algorithmicrequire}{\textbf{Input:}}
% \renewcommand\algorithmicensure {\textbf{Output:} }
% \caption{Training the Two-Layer NN by GD}
% \label{alg:PGD}
% \begin{algorithmic}[1]
% \State $\bW(T) \leftarrow$ Training-by-GD($T,\bW(0)$)
% \State \textbf{\bf input: } $T,\bW(0)$
% \State \textbf{\bf for } $t=1,\ldots,T$ \,\,\textbf{\bf do }
% \State \quad Perform the $t$-th step of GD by
% (\ref{eq:GD-two-layer-nn})
% \State \textbf{\bf end for }
% \State \textbf{\bf return} $\bW(T)$
% \end{algorithmic}
% \end{algorithm}
% \end{minipage}%
%     \hspace{2mm}
%     \begin{minipage}{.55\columnwidth}
% \begin{algorithm}[H]
% \renewcommand{\algorithmicrequire}{\textbf{Input:}}
% \renewcommand\algorithmicensure {\textbf{Output:} }
% \caption{Training the Two-Layer NN by PGD}
% \label{alg:PGD}
% \begin{algorithmic}[1]
% \State $\bW(T) \leftarrow$ Training-by-PGD($T,\bW(0),\bM$)
% \State \textbf{\bf input: } $T,\bW(0),\bM$
% \State \textbf{\bf for } $t=1,\ldots,T$ \,\,\textbf{\bf do }
% \State \quad Perform the $t$-th step of PGD by
% (\ref{eq:PGD-two-layer-nn})
% \State \textbf{\bf end for }
% \State \textbf{\bf return} $\bW(T)$
% \end{algorithmic}
% \end{algorithm}
%   \end{minipage}%
% \end{figure}

\section{Summary of Main Results}
\label{sec:summary-main-results}

\begin{algorithm}[h]
\renewcommand{\algorithmicrequire}{\textbf{Input:}}
\renewcommand\algorithmicensure {\textbf{Output:} }
\caption{Training the Two-Layer NN by PGD}
\label{alg:PGD}
\begin{algorithmic}[1]
\State $\bW(T) \leftarrow$ Training-by-PGD($T,\bW(0),\bM$)
\State \textbf{\bf input: } $T,\bW(0),\bM$
\State \textbf{\bf for } $t=1,\ldots,T$ \,\,\textbf{\bf do }
\State \quad Perform the $t$-th step of PGD by
(\ref{eq:PGD-two-layer-nn})
\State \textbf{\bf end for }
\State \textbf{\bf return} $\bW(T)$
\end{algorithmic}
\end{algorithm}
%\begin{table*}[t!]
\begin{table*}[!htbp]
\centering
\caption{Comparisons with the existing works on the risk rates and assumptions for nonparametric regression
by training over-parameterized neural networks with algorithmic guarantees. The listed results are under a common and popular setup where $f^* \in \cH_{\tK}$ and the responses $\set{y_i}_{i=1}^n$ are corrupted by i.i.d. Gaussian or sub-Gaussian noise. Here $P$ is the distribution of the training features, $\tK$ is the kernel induced by
the neural architecture and the optimization method in each particular
work. $\tK$ is the regular NTK for all the works except for this work,
and in this work $\tK = \Kint$.
%$\eps_{\tK,n}^2$ is the fixed point of the kernel complexity function corresponding to $\tK$ defined in Section~\ref{sec:kernel-complexity}.
We note that $s' = s+2$ with $s \ge 1$, $\alpha = d/(2(d-1))$, and both our work and
\citet{Li2024-edr-general-domain} consider the target function
$f^* \in \bth{\cH_K}^{s'}$. Also,~\citet[Proposition 13]{Li2024-edr-general-domain}
is adapted to our setting with no bias/intercept learned in the first layer so that the EDR is
$\lambda_j \asymp j^{-\frac{d}{d-1}}$ instead of
$\lambda_j \asymp j^{-\frac{d+1}{d}}$.}
        \resizebox{1\linewidth}{!}{
        \begin{tabular}{|l|c|c|c|c|}
                \hline
                \textbf{Existing Works and Our Result}
                & \textbf{Distributional Assumptions} & \textbf{Eigenvalue Decay Rate (EDR)} &\textbf{Rate of Nonparametric Regression Risk }
\\ \hline
\citep[Theorem 2]{KuzborskijS21-minimax-early-stopping}
 & No  & -- &$\sigma^2 + \cO(n^{\frac{-2}{2+d}})$ \\ \hline
\begin{tabular}{@{}c@{}}
\citep[Theorem 5.2]{HuWLC21-regularization-minimax-uniform-spherical},\\
 ~\citep[Theorem 3.11]{SuhKH22-overparameterized-gd-minimax}
\end{tabular}
&$P$ is $\Unif{\cX}$ & $\lambda_j \asymp j^{-\frac{d}{d-1}}$ &$\cO(n^{-\frac{d}{2d-1}})$ \\ \hline
\citep[Proposition 13]{Li2024-edr-general-domain}
&\begin{tabular}{@{}c@{}}
$P$ is sub-Gaussian
%$P$ satisfies \\
%a restrictive condition: \\ the density $p(\bx)$ for $\bx \in \RR^d$ satisfies\\
%$p(x) \lsim (1+\ltwonorm{\bx}^2)^{-{(d+2)/2}}$.
\end{tabular}
  &$\lambda_j \asymp j^{-\frac{d}{d-1}}$
  &\begin{tabular}{@{}c@{}}
  $\cO(n^{-\frac{d s'}{ds'+d-1}})\log^2(1/\delta)$ \\
  for $s' > 1/(d+1)$
  \end{tabular}\\ \hline
\citep[Corollary 5.2]{Yang2025-generalization-two-layer-regression}
&\begin{tabular}{@{}c@{}}$P$ ensures the polynomial EDR $\lambda_j \asymp j^{-\frac{d}{d-1}}$ \end{tabular}
  &$\lambda_j \asymp j^{-\frac{d}{d-1}}$
  & $\cO(n^{-\frac{d}{2d-1}})$
  %\begin{tabular}{@{}c@{}}$\cO(\eps_{\tK,n}^2)$, and
%  $\cO(\eps_{\tK,n}^2)=\cO(n^{-\frac{d}{2d-1}})$ \\ when $P$ is $\Unif{\cX}$
%  \end{tabular}
 \\ \hline
Our Result (Theorem~\ref{theorem:minimax-nonparametric-regression-Kint})
 &\cellcolor{blue!15}
 \begin{tabular}{@{}c@{}}
 $P$ is $\Unif{\cX}$
 \end{tabular}
 &\cellcolor{blue!15}
{$\lambdaint_j \asymp j^{-\frac{d(s+2)}{d-1}}$}
&\cellcolor{blue!15}
\begin{tabular}{@{}c@{}}
 $\cO(n^{-\frac{2\alpha s'}{2\alpha s' +1}})
 =\cO(n^{-\frac{ds'}{d s'+d-1}})$. \\
 for $s' \ge 3$
 \end{tabular}
\\ \hline
       \end{tabular}
        }
\label{table:main-results-comparison}
\end{table*}
It is remarked that all the results and discussions in this paper are for the setting with fixed dimension $d$, which is a widely adopted setting in the existing works \citep{HuWLC21-regularization-minimax-uniform-spherical,
SuhKH22-overparameterized-gd-minimax,Li2024-edr-general-domain,yang2024gradientdescentfindsoverparameterized,Yang2025-generalization-two-layer-regression}, and we consider $d \ge 5$ in this paper.
The main result of this paper is presented in
Theorem~\ref{theorem:minimax-nonparametric-regression-Kint} of
Section~\ref{sec:nonparametric-regression-Kint}, which shows that training the two-layer NN~\eqref{eq:two-layer-nn} by a new Preconditioned Gradient Descent (PGD) method with early stopping,
described by Algorithm~\ref{alg:PGD}, yields a sharper rate for nonparametric regression risk compared to existing results, under the assumption that the target function lies in the interpolation
spaces $\bth{\cH_K}^{s'}$ widely analyzed in the statistical learning literature
~\citep{Steinwart2012,Fischer2020-sobolev-norm-regularized-least-square}. The comparison with relevant existing  works is summarized in Table~\ref{table:main-results-comparison}. Prior works such as~\citet{RaskuttiWY14-early-stopping-kernel-regression} demonstrate that kernel regression with the kernel $\tK$, which is the NTK of the neural network considered in each specific work in Table~\ref{table:main-results-comparison} except for this paper, achieves the minimax optimal risk rate of $\cO(\eps_{\tK,n}^2)$, where $\eps_{\tK,n}^2$ denotes the kernel complexity of $\tK$ formalized in Section~\ref{sec:kernel-complexity}. When the eigenvalues of $\tK$ exhibit a polynomial eigenvalue decay rate (EDR) $\lambda_j \asymp j^{-2\alpha}$ for $\alpha > 1/2$, it is well known that $\eps_{\tK,n}^2 \asymp n^{-\frac{2\alpha}{2\alpha+1}}$~\citep[Corollary 3]{RaskuttiWY14-early-stopping-kernel-regression}. Moreover, for spherical uniform input distributions,~\citet[Lemma 3.1]{HuWLC21-regularization-minimax-uniform-spherical} shows that $2\alpha = d/(d-1)$, leading to the minimax rate $\cO(n^{-\frac{d}{2d-1}})$.
Consequently, nearly all existing works in Table~\ref{table:main-results-comparison} excluding~\citep{Li2024-edr-general-domain} report the same minimax rate of $\cO(n^{-\frac{d}{2d-1}})$ under the spherical uniformly
distributed training features when the target function lies in
a bounded ball of $\cH_{\tK}$. Such a rate arises because the EDR of kernel $\tK$ satisfies
$\Theta(j^{-2\alpha}) = \Theta(j^{-\frac{d}{d-1}})$, and the vanilla GD is used during training. \citet{Li2024-edr-general-domain} improves upon the prior results by obtaining a nearly optimal rate $\cO(n^{-\frac{2\alpha s'}{2\alpha s'+1}})\log^2(1/\delta)$ when the target function belongs to the interpolation space $\bth{\cH_K}^{s'}(R)$ for $s' > 1/(d+1)$ and $R > 0$. However,
this rate is less sharp than the rate derived in this paper, $\cO(n^{-\frac{2\alpha s'}{2\alpha s'+1}})=\cO(n^{-\frac{d s'}{d s' +d-1}}) $, for all $s' \ge 3$. In particular, when our PGD is employed to train the two-layer NN~\eqref{eq:two-layer-nn}, the induced kernel is the integral kernel $\Kint$ instead of the
regular NTK $K$ (\ref{eq:kernel-two-layer}). Under the spherical uniform distribution for the training features and the source condition that the target function $f^* \in \bth{\cH_K}^{s'}(\mu_0)$, the rate derived
in this paper is sharper than the prior risk rates,
including those in~\citet{KuzborskijS21-minimax-early-stopping,HuWLC21-regularization-minimax-uniform-spherical,
SuhKH22-overparameterized-gd-minimax,Li2024-edr-general-domain,yang2024gradientdescentfindsoverparameterized,Yang2025-generalization-two-layer-regression}. We note that since the spherical uniform distribution is sub-Gaussian, the rate of $\cO(n^{-\frac{2\alpha s'}{2\alpha s'+1}})\log^2(1/\delta)$ in \citet[Proposition 13]{Li2024-edr-general-domain}, while originally developed under the sub-Gaussian distribution of the training features, still holds under the spherical uniform distribution of the training features considered in this paper.

While the rate of $\cO(n^{-\frac{d}{2d-1}})$ in~\citet{HuWLC21-regularization-minimax-uniform-spherical,SuhKH22-overparameterized-gd-minimax,
yang2024gradientdescentfindsoverparameterized}  remains minimax optimal in the context of kernel regression with the regular NTK $\tK$ when $f^* \in \cH_{\tK}(\mu_0)$, a faster rate is achievable if the target function lies in a stricter space. In particular, if $f^* \in \cH_{\Kint}(\mu_0) = \bth{\cH_K}^{s'}(\mu_0) \subseteq \cH_K(\mu_0)$, then kernel regression using the integral kernel $\Kint$ attains the sharper rate $\cO(n^{-\frac{d s'}{d s' +d-1}})$, which, as  in Theorem~\ref{theorem:minimax-nonparametric-regression-Kint}, is minimax optimal in the sense of  regression by the kernel $\Kint$ over the space $\cH_{\Kint}(\mu_0)$~\citep{Stone1985,Yang1999-minimax-rates-convergence,Yuan2016-minimax-additive-models}.
In fact, Theorem~\ref{theorem:spectrum-Kint} shows that $\Kint$ has the EDR of $\lambdaint_j = \lambda_j^{s'} \asymp j^{-2\alpha s'}$ with $s' = s+2$ for all $s \ge 1$ and $j \ge 1$. Accordingly, the associated fixed point of the kernel complexity function associated with the integral kernel $\Kint$ is $\cO(n^{-\frac{d s'}{d s' +d-1}})$, yielding the claimed sharper risk rate of $\cO(n^{-\frac{d s'}{d s' +d-1}})$.

Our findings demonstrate that PGD induces the integral kernel $\Kint$ through the training of the over-parameterized two-layer NN, therefore, it escapes the linearized NTK regime specified by the regular NTK (\ref{eq:kernel-two-layer}) induced by the vanilla GD, to be detailed in Section~\ref{sec:difference-existing-kernel-theory}. Due to the lower complexity of the integral kernel compared to that of the regular NTK (\ref{eq:kernel-two-layer}), a sharper regression rate is obtained, compared to that obtained by the vanilla GD. Our result is also sharper than the minimax lower rate
in~\citet[Theorem 2]{Caponnetto2007_optimal_rates_least_square} for kernel regression using the regular NTK (\ref{eq:kernel-two-layer}); also see Section~\ref{sec:difference-existing-kernel-theory}.

%Second, in Theorem~\ref{theorem:LRC-population-NN-K-fixed-point} in Section~\ref{sec:nonparametric-regression-K}, we prove that the neural network
%(\ref{eq:two-layer-nn}) trained by GD with early stopping using Algorithm~\ref{alg:PGD} achieves a sharp rate of the
% nonparametric regression risk \textit{without distributional assumption} for general target function. The neural network width and the training time are quantized by the critical population rate of the NTK (\ref{eq:kernel-two-layer}), where  is widely studied in the kernel learning literature~\citep{Yang2017-fast-sketch-kernel,RaskuttiWY14-early-stopping-kernel-regression}. Our result is more general than the current result~\citep{HuWLC21-regularization-minimax-uniform-spherical} which requires uniform data distribution on the unit sphere. Under the uniform data distribution on unit sphere, Theorem~\ref{theorem:minimax-nonparametric-regression-K} shows the same minimax optimal rate as~\citep{HuWLC21-regularization-minimax-uniform-spherical} with the neural network
% width explicitly quantized in terms of $n,d$ where $n$ is the training data size and $d$ is the data dimension.

\vspace{-0.1in}
\section{Training by Gradient Descent and Preconditioned Gradient Descent}
\label{sec:training}
\begin{algorithm}[b!]
\renewcommand{\algorithmicrequire}{\textbf{Input:}}
\renewcommand\algorithmicensure {\textbf{Output:} }
\caption{Algorithm that generates the preconditioner $\bM \in \RR^{md \times md}$}
\label{alg:preconditioner}
\begin{algorithmic}[1]
\State $\bM \in \RR^{m \times m} \leftarrow$ Generate-Precondition-Matrix($N$)
\State \textbf{\bf input: } $N,\bW(0)$
\State Generate an i.i.d. sample $\bQ = \set{\bbq_i}_{i=1}^N$ such that
$\bbq_i \sim \Unif{\cX}$ for all $i \in [N]$.
\State Compute the matrix $\bZ_{\bQ}(0) \in \RR^{md \times N}$ according to (\ref{eq:GD-Z-two-layer-nn}) with
$\bS$ replaced by $\bQ$, using the randomly initialized weighting vectors $\bW(0) = \set{\bbw_{r}(0)}_{r=1}^m$.
\State Compute the preconditioner $\bM = \frac{1}{N}\bZ_{\bQ}(0) \bK^{(s)}_N \bZ_{\bQ}(0)^{\top} \in \RR^{md \times md}$.
\State \textbf{\bf return} $\bM$
\end{algorithmic}
\end{algorithm}

%\begin{algorithm}[!hbt]
%\renewcommand{\algorithmicrequire}{\textbf{Input:}}
%\renewcommand\algorithmicensure {\textbf{Output:} }
%\caption{Algorithm that generates the Graph Adjacency Matrix $\bK_{\bQ}
% \in \RR^{N \times n}$}
%\label{alg:preconditioner}
%\begin{algorithmic}[1]
%\State $\bM \in \RR^{m \times m} \leftarrow$ Generate-Precondition-Matrix($N$)
%\State \textbf{\bf input: } $N,\bW(0)$
%\State Generate an i.i.d. sample $\bQ = \set{\bbq_i}_{i=1}^N$ such that
%$\bbq_i \sim \Unif{\cX}$ for $i \in [N]$,
%
%\State Compute the $\bth{\bK_{\bQ}}_{ij}
%= \frac{1}{\sqrt {nN}} K(\bbq_i, \bbx_j)$.
%\State \textbf{\bf return} $\bK_{\bQ}$
%\end{algorithmic}
%\end{algorithm}

In the training process, only $\bW$ is optimized with $\ba$ randomly initialized to $\pm 1$ and then fixed. The following quadratic loss function is minimized during the training process:

\vspace{-0.22in}
\bal \label{eq:obj-dnns}
%&L(\bW) \defeq \frac{1}{2n {\sqrt N}}
%\ltwonorm{\bK_{N,n} \pth{f(\bW,\bS)-\by}}^2.
L(\bW) \defeq \frac{1}{2n} \sum_{i=1}^{n} \pth{f(\bW,\bbx_i) - y_i}^2.
\eal\vspace{-0.22in}

\noindent
In the $(t+1)$-th step of GD with $t \ge 0$, the weights of the neural network, $\bW_{\bS}$, are updated by one-step of GD through

\vspace{-0.23in}
\bal\label{eq:GD-two-layer-nn}
\vect{\bW_{\bS}(t+1)} - \vect{\bW_{\bS}(t)} = - \frac{\eta}{n} \bZ_{\bS}(t)
(\hat \by(t) -  \by).
\eal
\vspace{-0.22in}

When the target function lies on the interpolation space $\bth{\cH_K}^{s'}(\mu_0)$ and the distribution of the training features is the uniform distribution on $\cX$, we propose to use a  novel Precondtioned Gradient Descent (PGD) instead of GD to train the network,  which achieves a provably sharper generalization bound than the regular
GD detailed in Section~\ref{sec:nonparametric-regression-Kint}. Let the preconditioner $\bM \in \RR^{md \times md}$ be generated by
Algorithm~\ref{alg:preconditioner}. $\bM$ is computed by the sample $\bQ$ of $N$ i.i.d. random variables
with uniform distribution on $\cX$ used in the definition of $\hKint$ in (\ref{eq:hatKint-def}), and $\bQ$ is independent of $\bW(0)$. In the $(t+1)$-th step of PGD with $t \ge 0$, the weights of the neural network, $\bW_{\bS}$, are updated by one-step of PGD,

\vspace{-0.22in}
\bal\label{eq:PGD-two-layer-nn}
\vect{\bW_{\bS}(t+1)} - \vect{\bW_{\bS}(t)} = - \frac{\eta}{n} \bM \bZ_{\bS}(t)
(\hat \by(t) -  \by),
\eal
\vspace{-0.10in}
\noindent where $\by_i = y_i$, $\hat \by(t) \in \RR^n$ with $\bth{\hat \by(t)}_i = f(\bW(t),\bbx_i)$. We also denote $f(\bW(t),\cdot)$ by
$f_t(\cdot)$~as\\

\noindent the neural network function with weighting vectors $\bW(t)$ obtained after the $t$-th step of PGD.
$\bZ_{\bS}(t) \in \RR^{md \times n}$ is defined as

\vspace{-0.22in}
\bal\label{eq:GD-Z-two-layer-nn}
\bth{\bZ_{\bS}(t)}_{[(r-1)d+1:rd]i} = \frac {1}{{\sqrt m}}\sum_{i = 1}^n
\indict{\bbw_r(t)^\top \bbx_i \ge 0}  \bbx_i a_r, \, i \in [n], r \in [m],
\eal
\vspace{-0.2in}

\noindent where $\bth{\bZ_{\bS}(t)}_{[(r-1)d+1:rd]i} \in \RR^d$ is a vector with elements in the $i$-th column of $\bZ_{\bS}(t)$ with indices in
$[(r-1)d+1:rd]$.
We employ the following particular symmetric random initialization so that $\hat \by(0) = \bzero$, which has been used in existing works such as~\citet{Chizat2019-lazy-training-differentiable-programming,ZhangXLM20}. In our two-layer NN, $m$ is even, $\set{\bbw_{2r'}(0)}_{r'=1}^{m/2}$ and $\set{a_{2r'}}_{r'=1}^{m/2}$ are initialized randomly and independently according to
\bal\label{eq:random-init}
\bbw_{2r'}(0) \sim \cN(\bzero,\kappa^2 \bI_d), a_{2r'} \sim {\textup {unif}}\pth{\left\{-1,1\right\}}, \quad \forall r' \in [m/2],
\eal%
where $\cN(\bmu,\bSigma)$ denotes a Gaussian distribution with mean $\bmu$ and covariance $\bSigma$, ${\textup {unif}}\pth{\left\{-1,1\right\}}$ denotes the uniform distribution over $\set{1,-1}$, $\kappa = \Theta(1) \in (0,1)$ controls the magnitude of initialization. We set $\bbw_{2r'-1}(0) = \bbw_{2r'}(0)$ and $a_{2r'-1} = -a_{2r'}$ for all $r' \in [m/2]$. One can verify that $\hat \by(0) = \bzero$, that is, the initial output of the two-layer NN is zero. Similar to~\citet{du2018gradient-gd-dnns,yang2024gradientdescentfindsoverparameterized,
Yang2025-generalization-two-layer-regression}, once randomly initialized, $\ba$ is fixed during the training. We use $\bW(0)$ to denote the set of all the random weighting vectors at initialization, that is, \ $\bW(0) = \set{\bbw_r(0)}_{r=1}^m$.
We run Algorithm~\ref{alg:PGD} to train the two-layer NN by PGD, where
$T$ is the total number of steps for PGD. Early stopping is enforced in Algorithm~\ref{alg:PGD} through a bounded $T$.

%It is remarked that in our Algorithm~\ref{alg:PGD} for PGD, we do not need to
%actually run $T$ steps of PGD by (\ref{eq:PGD-two-layer-nn}). As the preconditioner $\bM$ is independent of the initialization $\bW(0)$
%and the weights of the neural network, we can perform Algorithm~\ref{alg:PGD} to train the network by GD, and then multiply the weights difference
%${\textup{vec}(\tilde \bW(T)) - \vect{\bW(0)}}$ by $\bM$ and add it to $\bW(0)$ to obtain the weights of the neural network trained by PGD. In this manner, the preconditioner is only involved in one matrix
%multiplication instead of $T$ multiplications, considerably improving the efficiency of training by PGD.

\section{Main Results}
\label{sec:main-results}
We present the definition of kernel complexity in this section, and then introduce the main results for nonparametric regression of this paper.
%There is broad interest in the machine learning community
%to consider function class of finite dimension in $\cH_{\Kint}$ or $L^2(\cX,\mu)$.
%For example,~\citep{SuY19-convergence-spectral} considers optimization of a two-layer
%neural network with target functions in
%$\cH_{K}(\mu_0) \cap L^2_{k_0}$ so that the exponent in the
%linear convergence of the training loss is well bounded.
%The assumption about the target function $f^*$ is presented below.
%Throughout this paper we assume that $m \ge \max\set{d,n}$.

\subsection{Kernel Complexity}
\label{sec:kernel-complexity}
The local kernel complexity has been studied by~\citet{bartlett2005,koltchinskii2006,Mendelson02-geometric-kernel-machine}. For the PD kernel $K$, we define the empirical kernel complexity $\hat R_K$
and the population kernel complexity $R_K$ as
\bal\label{eq:kernel-LRC-empirical}
\hat R_K(\eps) \defeq \sqrt{\frac 1n
\sum\limits_{i=1}^n \min\set{\hat \lambda_i,\eps^2}},
\quad
R_K(\eps) \defeq \sqrt{\frac 1n \sum\limits_{i=1}^{\infty} \min\set{\lambda_i,\eps^2}}.
\eal
It can be verified that both
 $\sigma_0 R_K(\eps)$ and $\sigma_0 \hat R_K(\eps)$ are
 sub-root functions~\citep{bartlett2005} in terms of $\eps^2$.
 Sub-root functions are defined in
 Definition~\ref{def:sub-root-function}.
For a given noise ratio $\sigma_0$, the critical empirical radius
$\hat \eps_{K,n} > 0$ is the smallest positive solution to the inequality
$\hat R_K(\eps) \le {\eps^2}/{\sigma_0}$, where
$\hat \eps_{K,n}^2$ is the also the fixed point of $\sigma_0 \hat R_K(\eps)$ as a function
of $\eps^2$: $\sigma_0 \hat R_K(\hat \eps_{K,n}) = \hat \eps_{K,n}^2$.
Similarly, the critical population rate $\eps_{K,n}$ is
defined to be the smallest positive solution to the inequality
$R_K(\eps) \le {\eps^2}/{\sigma_0}$, where
$\eps_{K,n}^2$ is the fixed point of $\sigma_0 \hat R_K(
\eps)$ as a function of
$\eps^2$: $\sigma_0 R_K(\eps_{K,n}) = \eps_{K,n}^2$.
Kernel complexity can also be defined for the integral kernel $\Kint$, leading to the
empirical kernel complexity $\hat R_{\Kint}$ and the population kernel complexity $R_{\Kint}$ for $\Kint$, with the critical empirical radius $\hat \eps_{\Kint,n}$ and
the critical population rate $\eps_{\Kint,n}$, respectively. For simplicity of the notations, we use $\eps_n$ and $\hat \eps_n$ to denote $\eps_{\Kint,n}$ and $\hat \eps_{\Kint,n}$, respectively. In this paper we consider the kernel~$K$ such that
 $\min\set{\eps_{K,n},\eps_n} \cdot n \to \infty$ as $n \to \infty$, which covers most popular positive~semi-definite~kernels including the
 kernel (\ref{eq:kernel-two-layer}) and a broad range of data distributions~\citep{Yang2017-fast-sketch-kernel}.
%\bal\label{eq:kernel-LRC-empirical-inequality}
%\hat R_K(\eps) \le \frac{\eps^2}{\sigma_0}.
%\eal

Let $\eta_t \defeq \eta t$ for all $t \ge 0$, we then define the
 stopping time $\hat T$
 as
\bal\label{eq:stopping-time-hatT}
%\hat T_K \defeq \min\set{T \colon
%\hat R_K(\sqrt{1/\eta_t}) > (\sigma_0 \eta_t)^{-1}}-1,
\hat T \defeq \min\set{T \colon \hat R_{\Kint}(\sqrt{1/\eta_t}) > (\sigma_0 \eta_t)^{-1}}-1.
\eal
The stopping time in fact limits the number of steps $T$ for Algorithm~\ref{alg:PGD}, which enforces the early stopping mechanism. In fact, as will
be shown later in this section, we need to have $T \le \hat T$ when training the two-layer NN (\ref{eq:two-layer-nn}) by PGD with Algorithm~\ref{alg:PGD}.
\subsection{Nonparametric Regression for Target Function with Spectral Bias}
\label{sec:nonparametric-regression-Kint}
We present in Theorem~\ref{theorem:minimax-nonparametric-regression-Kint} our main results for nonparametric regression where $f^* \in \cH_{\Kint}(\mu_0) = \bth{\cH_K}^{s'}(\mu_0)$ by training the two-layer NN (\ref{eq:two-layer-nn}) with PGD and early stopping using Algorithm~\ref{alg:PGD}.
%\begin{assumption}
%\label{assumption:Kint}
%Suppose $P$ is the uniform distribution on $\cX$, and $c_d \log n \le d \le n^{c'_d}$ for two positive constants $c_d, c'_d > 0$.
%\end{assumption}
We note that when $P$ is the uniform distribution on $\cX$, $\lambda_j \asymp j^{-2\alpha}$ for $j \ge 1$ with $2\alpha = d/(d-1)$, which is shown by existing works such as~\citet[Lemma 3.1]{HuWLC21-regularization-minimax-uniform-spherical}.
Such spherical uniform distribution has been widely adopted in
the theoretical analysis of neural networks
\citep{BaiL20-quadratic-NTK,Ghorbani2021-linearized-two-layer-nn,
HuWLC21-regularization-minimax-uniform-spherical}.
%the standard results in nonparametric kernel regression~\citep{RaskuttiWY14-early-stopping-kernel-regression}
%to show that the shallow neural network (\ref{eq:two-layer-nn})
%with sufficient parameterization, that is, sufficiently enough $m$,
\begin{theorem}
\label{theorem:minimax-nonparametric-regression-Kint}
Let $s \ge 1$, $\alpha  = d/(2(d-1))$, $c_T, c_t \in (0,1]$ be positive constants, and
$c_T \hat T \le T \le \hat T$. Suppose $\delta \in (0,1)$,
\bal\label{eq:m-N-cond-LRC-population-NN-concrete}
m \gsim  n^{\frac{25\alpha(s+2)}{2\alpha(s+2)+1}}d^{\frac 52},
\quad
N \gsim n^{\frac{8\alpha(s+2)}{2\alpha(s+2)+1}} \log (n/{\delta}),
\eal
and the neural network
$f_t = f(\bW(t),\cdot)$ is
trained by PGD in Algorithm~\ref{alg:PGD}
 with the learning rate $\eta = \Theta(1) \in (0,8)$ with
 $T \le \hat T$.
Then for every $t \in [c_t T \colon T]$, with probability (w.p.) at least
$1-\exp\pth{-\Theta(n)}-\delta
-7\exp\pth{-\Theta(n^{\frac{1}{2\alpha(s+2)+1}})} - 2/n$
over $\bw,\bS,\bQ,\bW(0)$, the stopping time satisfies
$\hat T \asymp n^{\frac{2\alpha(s+2)}{2\alpha(s+2)+1}} $, and
\vspace{-.2in}
\bal\label{eq:minimax-nonparametric-regression-Kint}
\Expect{P}{(f_t-f^*)^2} \lsim n^{-\frac{2\alpha(s+2)}{2\alpha(s+2)+1}}.
\eal
\end{theorem}
\subsection{Significance of Theorem~\ref{theorem:minimax-nonparametric-regression-Kint} and Its Proof}
Under the spherical uniform distribution $P$, many existing works with algorithm guarantees establish the regression risk rate of
$\cO(n^{-\frac{2\alpha}{2\alpha+1}}) = \cO(n^{-\frac{d}{2d-1}})$ with $2\alpha = d/(d-1)$~\citep{HuWLC21-regularization-minimax-uniform-spherical,SuhKH22-overparameterized-gd-minimax,Li2024-edr-general-domain,
yang2024gradientdescentfindsoverparameterized,Yang2025-generalization-two-layer-regression} as discussed in Section~\ref{sec:summary-main-results}. Even though such risk bound is minimax optimal in the RKHS associated with the NTK induced by the vanilla GD, our theorem, for the first time, shows that we can achieve an even faster convergence rate of
$\cO(n^{-\frac{2\alpha(s+2)}{2\alpha(s+2)+1}}) =
\cO(n^{-\frac{d s'}{d s'+d-1}})$ for nonparametric regression under such spherical uniform distribution when the target function
$f^* \in \cH_{\Kint}(\mu_0) = \bth{\cH_K}^{s'}(\mu_0) $
is in the interpolation space of the RKHS with spectral bias, where
$s' = s+2$ and $s \ge 1$.
Under the conditions of this theorem, Theorem~\ref{theorem:spectrum-Kint} in Section~\ref{sec:RKHS-Kint-more-results} of the appendix
shows that  $\lambdaint_j \asymp j^{-2\alpha s'}$ for $j \ge 1$. With such EDR, the convergence rate $\cO(n^{-\frac{2\alpha(s+2)}{2\alpha(s+2)+1}})$
achieved by (\ref{eq:minimax-nonparametric-regression-Kint}) is in fact minimax optimal
\citep{Stone1985,Yang1999-minimax-rates-convergence,
Yuan2016-minimax-additive-models} in the sense of kernel regression using the integral kernel $\Kint$ over the RKHS associated with $\Kint$. Interestingly, such minimax optimal rate of  $\cO(n^{-\frac{2\alpha(s+2)}{2\alpha(s+2)+1}}) $
cannot be achieved by the current analysis of over-parameterized neural networks using the vanilla GD, because the vanilla GD only induces the regular NTK such as (\ref{eq:kernel-two-layer}). The proposed PGD avoids the usual linear regime specified by the NTK (\ref{eq:kernel-two-layer}).
In fact, the kernel induced by PGD is $\Kint$ instead of $K$ defined in (\ref{eq:kernel-two-layer}). Due to the fact that $\lambdaint_j = \lambda^{s'}_j < \lambda_j$ with $s' \ge 3$, the underlying reason for the sharper risk rate achieved by PGD is that the kernel complexity of $\Kint$ is lower than that of $K$.

%\begin{remark}
%It follows from~\citep{Stone1985,Yang1999-minimax-rates-convergence,
%Yuan2016-minimax-additive-models} that the rate
% ${1}/n^{\frac{2\alpha}{2\alpha+1}}$ obtained
% in Theorem~\ref{theorem:minimax-nonparametric-regression-Kint} is
% is minimax-optimal for the case that the eigenvalue $\lambda_j$ of $T_k$ is decayed by $\lambda_j \asymp j^{-2\alpha}$. It is noted that
%~\citep{HuWLC21-regularization-minimax-uniform-spherical} obtains
%the minimax rate of $n^{-\frac{d}{2d-1}}$ under the special case
%that the training data are uniformed distributed on $\cX$ and
%$\lambda_j \asymp j^{-\frac{d}{d-1}}$ (in this case $\alpha = \frac{d}{2(d-1)}$).
%However, our minimax rate in Theorem~\ref{theorem:minimax-nonparametric-regression} is derived using more general result in Theorem~\ref{theorem:LRC-population-NN-detail},
%which does assume a particular distribution of the training data.
%\end{remark}
Next, we  present the proof of Theorem~\ref{theorem:minimax-nonparametric-regression-Kint},  based on the key technical results about
the uniform convergence of $\hKint(\cdot,\bx')$ to $K(\cdot,\bx')$ for every fixed $\bx' \in \cX$ in Theorem~\ref{theorem:hatKint-close-to-Kint-supnorm-informal},
the generalization result in Theorem~\ref{theorem:LRC-population-NN-eigenvalue-informal}, and the optimization results in
Lemma~\ref{lemma:empirical-loss-convergence-informal},
Theorem~\ref{theorem:bounded-NN-class-informal}, and
Lemma~\ref{lemma:empirical-loss-bound-informal}.

\section{Roadmap of Proofs}
\label{sec:proof-roadmap}
Our main result is Theorem~\ref{theorem:minimax-nonparametric-regression-Kint}, as detailed in Section~\ref{sec:main-results}. In this section,
we first present the results about the uniform convergence to the NTK (\ref{eq:kernel-two-layer}) during the training process by PGD in Section~\ref{sec:uniform-convergence-ntk}, then introduce the basic definitions in Section~\ref{sec:basic-definition}.
The detailed roadmap and key technical results for the main result are presented in Section~\ref{sec:detailed-roadmap-key-results}. The difference
between this work and the existing kernel learning theory~\citep{Caponnetto2007_optimal_rates_least_square} is introduced in Section~\ref{sec:difference-existing-kernel-theory}, and our novel proof strategy is described in Section~\ref{sec:novel-proof-strategy}.

\subsection{Uniform Convergence to the NTK and More}
\label{sec:uniform-convergence-ntk}
We define the following functions:
\bal
h(\bw,\bx,\by) &\defeq \bx^{\top} \by \indict{\bw^{\top} \bx \ge 0} \indict{\bw^{\top} \by \ge 0}, \quad &\hat h(\bW,\bx,\by) &\defeq \frac {1}{m} \sum\limits_{r=1}^m h(\bbw_r,\bx,\by), \label{eq:h-hat-h}  \\
v_R(\bw,\bx) &\defeq \indict{\abth{\bw^{\top}\bx} \le R}, \quad &\hat v_R(\bW,\bx) &\defeq  \frac 1m \sum\limits_{r=1}^m v_R(\bbw_r,\bx) \label{eq:v-hat-v}.
\eal%
Then we have the following theorem stating the uniform convergence of $\hat h(\bW(0),\cdot,\cdot)$ to the NTK $K$ defined in (\ref{eq:kernel-two-layer}) and uniform convergence
of $\hat v_R(\bW(0),\bx)$ to $0$.

\begin{theorem}
[{\citep[Theorem 6.1]{Yang2025-generalization-two-layer-regression}},{\citep[Theorem VI.7, VI.8]{yang2024gradientdescentfindsoverparameterized}}]
\label{theorem:good-random-initialization}
The following results hold with
$\eta \lsim 1$, $m \gsim \max\set{n^{2/d},\Theta(T^{\frac 53})}$, and $m/\log m \ge d$.

\begin{itemize}[leftmargin=.26in]
\item[(1)] With probability at least $1-1/n$ over the random initialization $\bW(0) = \set{\bbw_r(0)}_{r=1}^m$,
\bal\label{eq:good-initialization-sup-hat-h}
\sup_{\bx \in \cX,\by \in \cX} \abth{ K(\bx,\by) - \hat h(\bW(0),\bx,\by) } \le C_1(m/2,d,1/n) \lsim \sqrt{\frac{d \log m}{m}},
\eal%

\item[(2)] With probability at least $1-1/n$ over the random initialization $\bW(0) = \set{\bbw_r(0)}_{r=1}^m$,
\bal\label{eq:good-initialization-sup-hat-V_R}
&\sup_{\bx \in \cX}\abth{\hat v_R(\bW(0),\bx)} \le \frac{2R}{\sqrt {2\pi} \kappa} + C_2(m/2,d,1/n) \lsim \sqrt{d} m^{-\frac 15} T^{\frac 12}.
\eal%
 \end{itemize}
\end{theorem}
%\begin{proof}%[\textup{Proof of Theorem~\ref{theorem:good-random-initialization}}]
%Note that $\hat h(\bW,\bx,\by) = \frac {1}{m} \sum\limits_{r=1}^m h(\bbw_r,\bx,\by) = \frac {1}{m/2} \sum\limits_{r'=1}^{m/2} h(\bbw_{2r'}(0),\bx,\by)$,
%then part (1) directly follows from Theorem~\ref{theorem:sup-hat-g} in the appendix. Similarly, part (2) directly follows from Theorem~\ref{theorem:V_R} and noting that $R \le \sqrt  R$ with $M  \gsim T^2$.
%\end{proof}

\vspace{-0.2in}
\subsection{Basic Definitions}\label{sec:basic-definition}
We introduce the following definitions for the proof of Theorem~\ref{theorem:minimax-nonparametric-regression-Kint}. Let the gram matrix of $\Kint$ over the training features $\bS$ be $\bKint \in \RR^{n \times n}, \bKint_{ij} = \Kint(\bbx_i,\bbx_j)$ for $i,j \in [n]$, and $\bKint_n \defeq \bKint/n$. Similarly, $\hbKint \in \RR^{n \times n}$ is the gram matrix of $\hKint$ over $\bS$, and
$\hbKint_n = \hbKint/n$. Let the singular value decomposition of $\bKint_n$ be $\bKint_n = \bUint \bSigmaint {\bUint}^{\top}$, where $\bSigmaint$ is a diagonal matrix with its diagonal elements $\set{\hlambdaint_i}_{i=1}^n$ being the eigenvalues of $\bKint_n$ and sorted in a non-increasing order. We have $\hlambdaint_1 \in (0,1/8)$,
and we show in Proposition~\ref{proposition:Kint-gram-nonsingular}
deferred to Section~\ref{sec:RKHS-Kint-more-results} that $\bKint_n$ is always non-singular.

%\bal\label{eq:ut}
%\bu(t) \defeq \hat \by(t) - \by.
%\eal
We define $\bu(t) \defeq \hat \by(t) - \by$. Let $\tau \le 1$ be a positive number. We define
$c_{\bu} \defeq {\mu_0}/{\min\set{2{\sqrt 2},\sqrt {2e \eta }}}
+ \sigma_0 + \tau + 1$.
For $t \ge 0$ and $T \ge 1$, we define
\bal\label{eq:def-R}
&R \defeq \frac{\eta c_{\bu}T }{2\sqrt m},
\eal\vspace{-0.2in}
\bal\label{eq:cV_S}
%&\cV_t \defeq \set{\bv \in \RR^n \colon \bv \in \Span(\bU), \bv = -\pth{\bI-\frac{\eta}{n} \bK }^t f^*(\bS)}.
\cV_t \defeq \set{\bv \in \RR^n \colon \bv = -\pth{\bI_n- \eta \bKint_n }^t f^*(\bS)},
\eal\vspace{-0.2in}
\bal\label{eq:cE_S}
%\cE_{t,\tau} \defeq &\left\{\be \colon \be = \bbe_1 + \bbe_2, \bbe_1,\bbe_2 \in \RR^n, \right. \nonumber \\
%&\left. \bbe_1 = -\pth{\bI-\eta\bKint_n}^t \bw,
%\ltwonorm{\bbe_2} \le \pth{c_{\bu}-c_{\bu} + \tau} {\sqrt n} \right\}.
\cE_{t,\tau} \defeq &\set{\be \colon \be = \bbe_1 + \bbe_2 \in \RR^n, \bbe_1 = -\pth{\bI_n-\eta\bKint_n}^t \bw,
\ltwonorm{\bbe_2} \lsim {\sqrt n} \tau },
\eal
where the upper bound for $\ltwonorm{\bbe_2}$ only hides an absolute positive constant independent of~$t$. We define
\bal\label{eq:set-of-good-random-initialization}
\cW_0 \defeq \set{\bW(0) \colon (\ref{eq:good-initialization-sup-hat-h}) ,(\ref{eq:good-initialization-sup-hat-V_R}) \textup { hold}}
\eal%
as the set of all the good random initializations which satisfy (\ref{eq:good-initialization-sup-hat-h})  (\ref{eq:good-initialization-sup-hat-V_R}) in Theorem~\ref{theorem:good-random-initialization}.
Theorem~\ref{theorem:good-random-initialization} shows that we have good random initialization with high probability, that is, $\Prob{\bW(0)
\in \cW_0} \ge 1-2/n$. When $\bW(0) \in \cW_0 $, the uniform convergence results,~(\ref{eq:good-initialization-sup-hat-h})~and~(\ref{eq:good-initialization-sup-hat-V_R}), hold, which guarantees our main result about optimization of the neural network~by~PGD.

We define the set of neural network weights and the set of functions represented by the neural network during training as follows.
%\vphantom{ \int_1^2 }
\bal\label{eq:weights-nn-on-good-training}
&\cW(\bS,\bQ,\bW(0),T) \defeq \left\{\bW \colon \exists t \in [T] {\textup{ s.t. }}\vect{\bW} = \vect{\bW(0)} - \sum_{t'=0}^{t-1} \frac{\eta}{n} \bM \bZ_{\bS}(t') \bu(t'), \right. \nonumber \\
& \left. \bu(t') \in \RR^{n}, \bu(t') = \bv(t') + \be(t'),
\bv(t') \in \cV_{t'}, \be(t') \in \cE_{t',\tau}, {\textup { for all } } t' \in [0,t-1] \vphantom{\frac12}  \right\}.
\eal%

$\cW(\bS,\bQ,\bW(0),T)$ is the set of weights of the neural network trained by PGD on the training features $\bS$ and random initialization $\bW(0)$ with
the preconditioner $\bM$ generated by $\bQ$ and the steps of PGD not greater than $T$.
%By (\ref{eq:weights-nn-on-good-training}), the vectorization of any weight in $\cW(\bS,\cW_0,T)$ is expressed by $\vect{\bW(0)} - \sum_{t'=0}^{t-1} \frac{\eta}{n}  \bZ_{\bS}(t') \bu(t')$.
The set of functions represented by the two-layer NN with weights in $\cW(\bS,\bQ,\bW(0),T)$ is then defined as
\bal\label{eq:random-function-class}
\cFnn(\bS,\bQ,\bW(0),T) \defeq \set{f_t = f(\bW(t),\cdot) \colon \exists \, t \in [T], \bW(t) \in \cW(\bS,\bQ,\bW(0),T)}.
\eal%

We define the function class $\cFext(B,w)$ for any $B,w > 0$ as
\bal\label{eq:def-cF-ext-general}
\cFext(B,w) &\defeq \set{f \colon f = h+e, h \in \cH_{\Kint}(B),
\supnorm{e} \le w}.
\eal
In Theorem~\ref{theorem:bounded-NN-class-informal}, we will show that
the neural network function at each step of PGD, $f_t$, belongs to the space $\cFext(B_h,w)$ with high probability where
\bal\label{eq:B_h}
B_h &\defeq \mu_0 +1+ {\sqrt 2}.
\eal

\subsection{Detailed Roadmap and Key Technical Results}
\label{sec:detailed-roadmap-key-results}

%Our main result, Theorem~\ref{theorem:LRC-population-NN-fixed-point}, is proved by
%Theorem~\ref{theorem:LRC-population-NN-eigenvalue} and Theorem~\ref{theorem:empirical-loss-bound} deferred to Section~\ref{sec:detailed-roadmap-key-results}.
The summary of the  approaches and key technical results in the proofs are presented as follows.
%we illustrate in Figure~\ref{fig:proof-roadmap} the roadmap containing the intermediate key theoretical results which lead
%to Theorem~\ref{theorem:LRC-population-NN-fixed-point}.
%\vspace{0.1in}\noindent \textbf
%\noindent \textbf{Summary of the  approaches and key technical results in the proofs.}
Our main result is built upon the following three significant technical results of independent interest. In the following text, $\delta \in (0,1)$ denotes a small probability.

First, we establish the uniform convergence of $\hKint(\cdot,\bx')$ to $K(\cdot,\bx')$ for every fixed $\bx' \in \cX$. Such uniform convergence is
presented in the following theorem, Theorem~\ref{theorem:hatKint-close-to-Kint-supnorm-informal}, which is built upon the martingale based concentration inequality for Banach space-valued process~\citep[Theorem 2]{Pinelis1992}.
\begin{theorem}
\label{theorem:hatKint-close-to-Kint-supnorm-informal}
For every fixed $\bx' \in \cX$ and every $\delta \in (0,1)$, with probability at least
$1-\delta$ over $\bQ = \set{\bbq_i}_{i=1}^N$,  we have
\bal\label{eq:hatKint-close-to-Kint-supnorm-informal}
\sup_{\bx \in \cX}\abth{\hKint(\bx,\bx') - \Kint(\bx,\bx')} \lsim \sqrt{\frac{\log 1/{\delta}}{N}}.
\eal
\end{theorem}

Then, using the novel PGD algorithm and the uniform convergence in Theorem~\ref{theorem:hatKint-close-to-Kint-supnorm-informal}, we have the following optimization result
in Lemma~\ref{lemma:empirical-loss-convergence-informal}.
\begin{lemma}\label{lemma:empirical-loss-convergence-informal}
Suppose the network width $m$ and $N$ are sufficiently large and finite,
the neural network $f(\bW(t),\cdot)$ trained by PGD using Algorithm~\ref{alg:PGD}
with the learning rate $\eta = \Theta(1) \in (0,8)$ on the random initialization $\bW(0) \in \cW_0$. Then for every $\delta \in (0,1)$, with
high probability over the random noise $\bw$ and the random sample $\bQ$,
$\bW(t) \in \cW(\bS,\bQ,\bW(0),T)$.
Moreover, for all $t \in [0,T]$, $\bu(t) = \bv(t) + \be(t)$ where
$\bu(t) = \hat \by(t) -  \by$, $\bv(t) \in \cV_t$, $\be(t) \in \cE_{t,\tau}$,
and $\ltwonorm{\bu(t)} \le c_{\bu} \sqrt n$.
\end{lemma}
Based on Lemma~\ref{lemma:empirical-loss-convergence-informal}, we obtain a novel and principled decomposition of the neural network function at any step of PGD into a function in  $\cH_{\Kint}$ with bounded RKHS-norm, and an error function with a small $L^{\infty}$-norm with high probability.

\begin{theorem}\label{theorem:bounded-NN-class-informal}
Suppose the network width $m$ and $N$ are sufficiently large and finite,
and the neural network
$f_t = f(\bW(t),\cdot) $ is trained by PGD using Algorithm~\ref{alg:PGD} with the learning rate $\eta = \Theta(1) \in (0,8)$ and the random initialization $\bW(0) \in \cW_0$. Then for every $t \in [T]$, with high probability over $\bw$ and $\bQ$,
$f_t$ has the following decomposition on $\cX$: $f_t = h_t + e_t \in \cFext(B_h,w)$,
where $h_t \in \cH_{\Kint}(B_h)$ with $B_h$ defined in (\ref{eq:B_h}),
$e_t  \in L^{\infty}$ with sufficiently small magnitude on $\cX$, that is, $\supnorm{e} \le w$.
\end{theorem}

Second, a new technique based on local Rademacher complexity is developed to tightly bound the nonparametric regression risk in
Theorem~\ref{theorem:LRC-population-NN-eigenvalue-informal}, which is based on
 the Rademacher complexity of a localized subset of a much larger function class presented in
Lemma~\ref{lemma:LRC-population-NN} in the appendix. We then use
Theorem~\ref{theorem:bounded-NN-class-informal} and Lemma~\ref{lemma:LRC-population-NN}
to derive Theorem~\ref{theorem:LRC-population-NN-eigenvalue-informal}.

\begin{theorem}\label{theorem:LRC-population-NN-eigenvalue-informal}
Suppose $w  \in (0,1)$, the network width $m$ and $N$ are sufficiently large and finite,
and the neural network
$f_t = f(\bW(t),\cdot)$ is
trained by PGD using Algorithm~\ref{alg:PGD} with the learning rate  $\eta = \Theta(1) \in (0,8)$
on the random initialization $\bW(0) \in \cW_0$, and $T \le \hat T$.
Then for every $t \in [T]$, with high probability over $\bw$, the random training features $\bS$, and $\bQ$,
\bal\label{eq:LRC-population-NN-bound-eigenvalue-informal}
&\Expect{P}{(f_t-f^*)^2} - 2 \Expect{P_n}{(f_t-f^*)^2}
\lsim \min_{0 \le Q \le n} \pth{\frac{ Q}{n} +
\pth{\frac{\sum\limits_{q = Q+1}^{\infty}\lambdaint_q}{n}}^{1/2}} + \eps_n^2 +w.
\eal
\end{theorem}
Third, we obtain the following sharp upper bound for the training loss $\Expect{P_n}{(f_t-f^*)^2}$.
\begin{lemma}\label{lemma:empirical-loss-bound-informal}
 For every $t \in [T]$, with high probability over $\bw$,
\bal\label{eq:empirical-loss-bound-informal}
\Expect{P_n}{(f_t-f^*)^2} &\le
\frac{3}{\eta t} \pth{\frac{\mu_0^2}{2e} +\frac{1}{\eta T}+2}.
\eal
\end{lemma}

We can then prove Theorem~\ref{theorem:minimax-nonparametric-regression-Kint} using the upper bound for the regression risk in (\ref{eq:LRC-population-NN-bound-eigenvalue-informal}) of Theorem~\ref{theorem:LRC-population-NN-eigenvalue-informal} where $w$ is set to $n^{-\frac{2\alpha(s+2)}{2\alpha(s+2)+1}}$. With $\hat T \asymp n^{\frac{2\alpha(s+2)}{2\alpha(s+2)+1}}$,
it follows from (\ref{eq:empirical-loss-bound-informal}) of Lemma~\ref{lemma:empirical-loss-bound-informal} that
$\Expect{P_n}{(f_t-f^*)^2} \lsim n^{-\frac{2\alpha(s+2)}{2\alpha(s+2)+1}}$. Then we have
$\Expect{P}{(f_t-f^*)^2}  \lsim n^{-\frac{2\alpha(s+2)}{2\alpha(s+2)+1}}$ with
$Q \asymp n^{\frac{1}{2\alpha(s+2)+1}}$ in (\ref{eq:LRC-population-NN-bound-eigenvalue-informal}). The formal proof of
Theorem~\ref{theorem:minimax-nonparametric-regression-Kint} is presented below.

\vspace{.1in}
\begin{proof}
{\textbf{\textup{\hspace{-3pt}of
Theorem~\ref{theorem:minimax-nonparametric-regression-Kint}}}.}
%[\textup{\textbf{Detailed Proof of Theorem~\ref{theorem:minimax-nonparametric-regression-Kint}}}]
%\begin{proof}
We apply Theorem~\ref{theorem:LRC-population-NN} and Lemma~\ref{lemma:empirical-loss-bound} in the appendix to prove this theorem,
and Theorem~\ref{theorem:LRC-population-NN} and Lemma~\ref{lemma:empirical-loss-bound} are the formal versions of
Theorem~\ref{theorem:LRC-population-NN-eigenvalue-informal} and Lemma~\ref{lemma:empirical-loss-bound-informal}, respectively.

First, with the condition on $m$ in this theorem,
Theorem~\ref{theorem:good-random-initialization} holds, and $\Prob{\bW(0) \in \cW_0} \ge 1-2/n$.  When $P$ is the uniform distribution on $\cX$, $\lambda_j \asymp j^{-2\alpha}$ for $j \ge 1$  with $2\alpha = d/(d-1)$, which is shown by
\citep[Lemma 3.1]{HuWLC21-regularization-minimax-uniform-spherical}. It then follows from
Theorem~\ref{theorem:spectrum-Kint} in Section~\ref{sec:RKHS-Kint-more-results}
that $\lambdaint_j  = \lambda_j^{s+2} \asymp j^{-2\alpha(s+2)}$ for $j \ge 1$.
With $\eta = \Theta(1)$, it follows by
Lemma~\ref{lemma:empirical-loss-bound} that with probability at least
$1-\exp\pth{- \Theta(n\hat\eps_n^2)}$ over the random noise $\bw$,
\bals
\Expect{P_n}{(f_t-f^*)^2} &\lsim \frac{1}{\eta t}.
\eals
Plugging such bound for $\Expect{P_n}{(f_t-f^*)^2}$ in
(\ref{eq:LRC-population-NN-bound})
of Theorem~\ref{theorem:LRC-population-NN}
leads to
\bal\label{eq:LRC-population-NN-fixed-point-seg1}
&\Expect{P}{(f_t-f^*)^2} \lsim \frac{1}{\eta t}  + \pth{\frac{Q}{n} + \pth{\frac{\sum\limits_{q = Q+1}^{\infty}\lambdaint_q}{n}}^{1/2}} + \eps_n^2 +w,
\eal
where $Q \asymp n^{\frac{1}{2\alpha(s+2)+1}}$ on the RHS
of (\ref{eq:LRC-population-NN-fixed-point-seg1}).
%\begin{sloppypar}
%\end{sloppypar}
It can be verified that
\bals
\sum\limits_{q = Q+1}^{\infty} \lambdaint_q
\lsim \int_Q^{\infty} x^{-2\alpha(s+2)} \diff x \lsim 1/(2\alpha(s+2)-1)
 \cdot n^{-\frac{2\alpha(s+2)-1}{2\alpha(s+2)+1}}.
\eals
As a result, we have
\bal\label{eq:LRC-population-NN-fixed-point-seg2}
&\Expect{P}{(f_t-f^*)^2} - \frac{3}{\eta t} \pth{\frac{\mu_0^2}{2e}+\frac{1}{\eta}+2}
\lsim n^{-\frac{2\alpha(s+2)}{2\alpha(s+2)+1}} + \eps_n^2 +w.
\eal
Due to the definition of $\hat T$ and $\hat \eps_n^2$, we have
\bal\label{eq:LRC-population-NN-fixed-point-seg3}
\hat \eps_n^2 \le \frac {1}{\eta \hat T} \le \frac {2}{\eta (\hat T+1)}
\le 2 \hat \eps_n^2.
\eal
It is well known, such as~\citet[Corollary 3]{RaskuttiWY14-early-stopping-kernel-regression},
that $\eps_n^2 \asymp n^{-\frac{2\alpha(s+2)}{2\alpha(s+2)+1}}$. It follows from Lemma~\ref{lemma:hat-eps-eps-relation} that
that $\hat \eps_n^2 \asymp \eps_n^2$ with probability
at least $1-4\exp(-\Theta(n\eps_n^2))$ over $\bS$. In addition,
combined with the fact that $T \asymp \hat T$, for any
$t \in [c_t T,T]$, we have
\bals
\frac{1}{\eta t} \asymp \frac{1}{\eta \hat T} \asymp
\frac{1}{\eta T} \asymp \hat \eps_n^2 \asymp \eps_n^2 \asymp n^{-\frac{2\alpha(s+2)}{2\alpha(s+2)+1}}.
\eals
Also, we set $w = n^{-\frac{2\alpha(s+2)}{2\alpha(s+2)+1}}$ in
(\ref{eq:LRC-population-NN-fixed-point-seg2}),
then with $\eta = \Theta(1)$,
\bal\label{eq:LRC-population-NN-fixed-point-seg4}
&\Expect{P}{(f_t-f^*)^2} \lsim n^{-\frac{2\alpha(s+2)}{2\alpha(s+2)+1}}.
\eal
With
$N \gsim n^{\frac{8\alpha(s+2)}{2\alpha(s+2)+1}} \log (n/{\delta})$,
the requirement on $N$, (\ref{eq:N-cond-bounded-NN-class})
in Theorem~\ref{theorem:bounded-NN-class}~that
$N \gsim \max\set{T^2\log{(n/{\delta})}/w^2, T^4\log{(n/{\delta})}}$
is satisfied.
In addition, with $m \gsim  n^{\frac{25\alpha(s+2)}{2\alpha(s+2)+1}}d^{\frac 52}$
and $w = n^{-\frac{2\alpha(s+2)}{2\alpha(s+2)+1}}$, the condition
(\ref{eq:m-cond-bounded-NN-class}) on $m$ in
Theorem~\ref{theorem:bounded-NN-class} that
$m \gsim \max\{ {T^{\frac {15}{2}} d^{\frac 52}}/{w^5}, \newline T^{\frac{25}{2}}d^{\frac 52}\}$ is satisfied.
\end{proof}

\subsection{Difference from Existing Kernel Learning Theory}
\label{sec:difference-existing-kernel-theory}
In this section, we demonstrate that our result is fundamentally different from the existing kernel learning theory, such as the
minimax lower rate in~\citet{Caponnetto2007_optimal_rates_least_square} for kernel regression using the regular NTK defined in (\ref{eq:kernel-two-layer}). In particular, our rate of regression risk by the novel PGD algorithm is sharper and fundamentally different from that in~\citet{Caponnetto2007_optimal_rates_least_square}. Under the same source condition on the target function that $f^* \in \cH_{\Kint}(\mu_0) =\bth{\cH_K}^{s'}(\mu_0)$, the existing minimax lower rate in~\citet[Theorem 2]{Caponnetto2007_optimal_rates_least_square} for kernel regression using the regular NTK (\ref{eq:kernel-two-layer}) is $\cO(\tau(\delta) n^{-\frac{d s'}{ds'+d-1}})$ with probability $1-\delta$, and $\tau(\delta) \to \infty$ as $\delta \to 0$. That is, to ensure the rate holds with probability approaching to $1$ (or $1-\delta$ with $\delta \to 0$), there is an additional cost $\tau(\delta) \to \infty$ as $\delta \to 0$~\citep{Caponnetto2007_optimal_rates_least_square}. This is the fundamental reason that the rate obtained by~\citet[Proposition 13]{Li2024-edr-general-domain} is $\cO(n^{-\frac{d s'}{ds'+d-1}})\log^2(1/\delta)$ with the additional logarithmic factor $\log^2(1/\delta)$ compared to our rate in Theorem~\ref{theorem:minimax-nonparametric-regression-Kint}.

%\vspace{-.01in}
In a strong contrast, the two-layer NN trained by our novel PGD achieves the sharper and minimax-optimal rate of $\cO(n^{-\frac{2\alpha s'}{2\alpha s'+1}})$ in Theorem~\ref{theorem:minimax-nonparametric-regression-Kint}. The fundamental reason for our PGD to achieve such a sharper rate is that, canonical kernel regression methods~\citep{Caponnetto2007_optimal_rates_least_square,Yao2007_gd_early_stopping} only apply to the kernel with the original capacity condition, such as the regular NTK (\ref{eq:kernel-two-layer}) with the EDR $\lambda_j \asymp j^{-2\alpha}$.
%In fact, it is required in~\citet{Caponnetto2007_optimal_rates_least_square,Yao2007_gd_early_stopping,Li2024-edr-general-domain} that the target function satisfies a condition specified by the kernel $K$ with: $f^* \in T_K^{1/2} f’ $ with all $f’ \in \cH_K$, where $\cH_K $ is the Reproducing Kernel Hilbert Space (RKHS) associated with $K$.
On the other hand, the two-layer NN, trained by our novel PGD, approximately performs kernel regression with a completely different new kernel, the integral kernel $\Kint$ with the smoother capacity condition ($\lambdaint_j = \lambda_j^{s'} \asymp j^{-2\alpha s'}$), compared to the original capacity condition.  Our key insight is that, the interpolation space $\bth{\cH_K}^{s'}$ is in fact the RKHS associated with the integral kernel: $\bth{\cH_K}^{s'} = \cH_{\Kint}$. Kernel regression with the integral kernel and the target function $f^* \in \cH_{\Kint} (\mu_0)$ renders the minimax optimal rate of $\cO(n^{-\frac{ds'}{d s'+d-1}})$ according to the analytical results in~\citet{Stone1985,Yang1999-minimax-rates-convergence,
Yuan2016-minimax-additive-models}, which is the same rate as that in Theorem~\ref{theorem:minimax-nonparametric-regression-Kint}.
%As a result, one cannot apply the existing theories including those in Rosasco and Caponetto [1-2] to obtain our sharper rate of $n^{-\frac{2d}{3d-1}}$.

%\vspace{0.1in}
\noindent  \textbf{Beyond the Regular NTK Limit.} We emphasize that our result is beyond the NTK limit or the linear region of regular NTK (\ref{eq:kernel-two-layer}), because the function represented by the two-layer NN trained by our novel PGD is arbitrarily close to a function
$h_t \in \cH_{\Kint}(B_h)$, and $\cH_{\Kint}$ is a different RKHS from $\cH_K$ associated with the regular NTK (\ref{eq:kernel-two-layer}).
%In addition, the integral kernel $\Kint$ is not the regular NTK of any known neural network.
Training the two-layer NN with the vanilla GD cannot achieve our sharp rate, and it is technically nontrivial to induce such new kernel $\Kint$ when training the two-layer NN with the novel PGD algorithm, to be detailed by our novel proof strategies in the next subsection. Our results also give better lower bound for the network width compared to the existing literature,  which is also to be detailed in the next subsection. We remark that even though PGD is only used in the training process to change the training dynamics of the two-layer NN beyond the conventional NTK limit, the trained two-layer NN approximates kernel regression with the new integral kernel, so the trained two-layer NN generalizes beyond the conventional~NTK~limit.

\subsection{Novel Proof Strategy and Insights}
\label{sec:novel-proof-strategy}
Our results are built upon two significantly novel proof strategies. First, uniform convergence to the NTK is established during the training process by PGD, so that we can have a novel decomposition of the neural network function at any step of PGD into a function in the RKHS associated with the integral kernel $\Kint$ and an error function with small $L^{\infty}$-norm with high probability in Theorem~\ref{theorem:bounded-NN-class-informal}.
We remark that PGD is carefully designed so that the induced NTK of the two-layer NN (\ref{eq:two-layer-nn}) is the integral kernel $\Kint$. In a strong contrast, the vanilla GD extensively used in the existing works for the analysis of over-parameterized neural networks, such as~\citet{SuhKH22-overparameterized-gd-minimax,Li2024-edr-general-domain,yang2024gradientdescentfindsoverparameterized,
Yang2025-generalization-two-layer-regression},
induces the regular NTK such as (\ref{eq:kernel-two-layer}).
The uniform convergence of $\hKint$ to $\Kint$ is established in this work, using the martingale based concentration inequality for Banach space-valued process~\citep[Theorem 2]{Pinelis1992}.
As explained in Section~\ref{sec:summary-main-results} and Section~\ref{sec:main-results}, PGD leads to a sharper rate of the nonparametric regression risk due to the lower kernel complexity of $\Kint$ than that of the regular NTK such as (\ref{eq:kernel-two-layer}).

Second, based on such novel decomposition in Theorem~\ref{theorem:bounded-NN-class-informal}, a new technique based on local Rademacher complexity is developed
to tightly bound the Rademacher complexity of the function class comprising all the neural network functions obtained by PGD iterations, leading to the sharp bound for the regression risk in Theorem~\ref{theorem:LRC-population-NN-eigenvalue-informal}.

Our results indicate that PGD is another way of escaping the usual linear regime of NTK and obtaining sharper generalization bound, because PGD induces a different kernel, the integral kernel $\Kint$, with lower kernel complexity during the training than the regular NTK (\ref{eq:kernel-two-layer}) induced by the regular GD on the two-layer NN (\ref{eq:two-layer-nn}).
%In fact, Theorem~\ref{theorem:bounded-NN-class} in the appendix shows that the kernel induced by the neural network (\ref{eq:two-layer-nn}) trained with PGD can be approximated by $\Kint$ with high probability. As explained in ``Significance of Theorem~\ref{theorem:minimax-nonparametric-regression-Kint}'', the kernel complexity of $\Kint$ is lower than that of $K$, leading to the sharper rate of the risk (\ref{eq:minimax-nonparametric-regression-Kint}) by PGD.

\vspace{0.1in}
\noindent \textbf{Better Lower Bound for Network Width $m$.} The lower bound for the network width $m$ required for our result, $m \gsim  n^{\frac{25\alpha(s+2)}{2\alpha(s+2)+1}}d^{\frac 52}$ with $\alpha = d/(2(d-1))$ in Theorem~\ref{theorem:minimax-nonparametric-regression-Kint}, is smaller than that required by the current state-of-the-art. In particular, \citet[Theorem 3.11]{SuhKH22-overparameterized-gd-minimax} requires that $m/\log^3 m \gtrsim L^{20} n^{24}$ where $L$ is the number of layers of the DNN used in that work, and $m/\log^3 m \gtrsim 2^{20} n^{24}$ even with $L=2$ for the two-layer NN used in this work. \citet{Li2024-edr-general-domain} requires that $m/(\log m)^{12} \gtrsim n^{24}$ for regression with the target function $f^* \in \bth{\cH_K}^{s'}$, which is the source condition considered in this paper when $s' \ge 3$. Both the lower bounds for $m$ in
\citet{SuhKH22-overparameterized-gd-minimax,Li2024-edr-general-domain} are much larger than our lower bound for $m$ when $n \to \infty$ and $d$ is fixed, which is the setup considered in existing results about training over-parameterized neural networks for nonparametric regression with sharp rates and algorithmic guarantees~\citep{HuWLC21-regularization-minimax-uniform-spherical,
SuhKH22-overparameterized-gd-minimax,Li2024-edr-general-domain,
yang2024gradientdescentfindsoverparameterized,
Yang2025-generalization-two-layer-regression}.

\begin{figure}[b!]
\vspace{-0.2in}
\begin{center}
\includegraphics[width=5.5in]{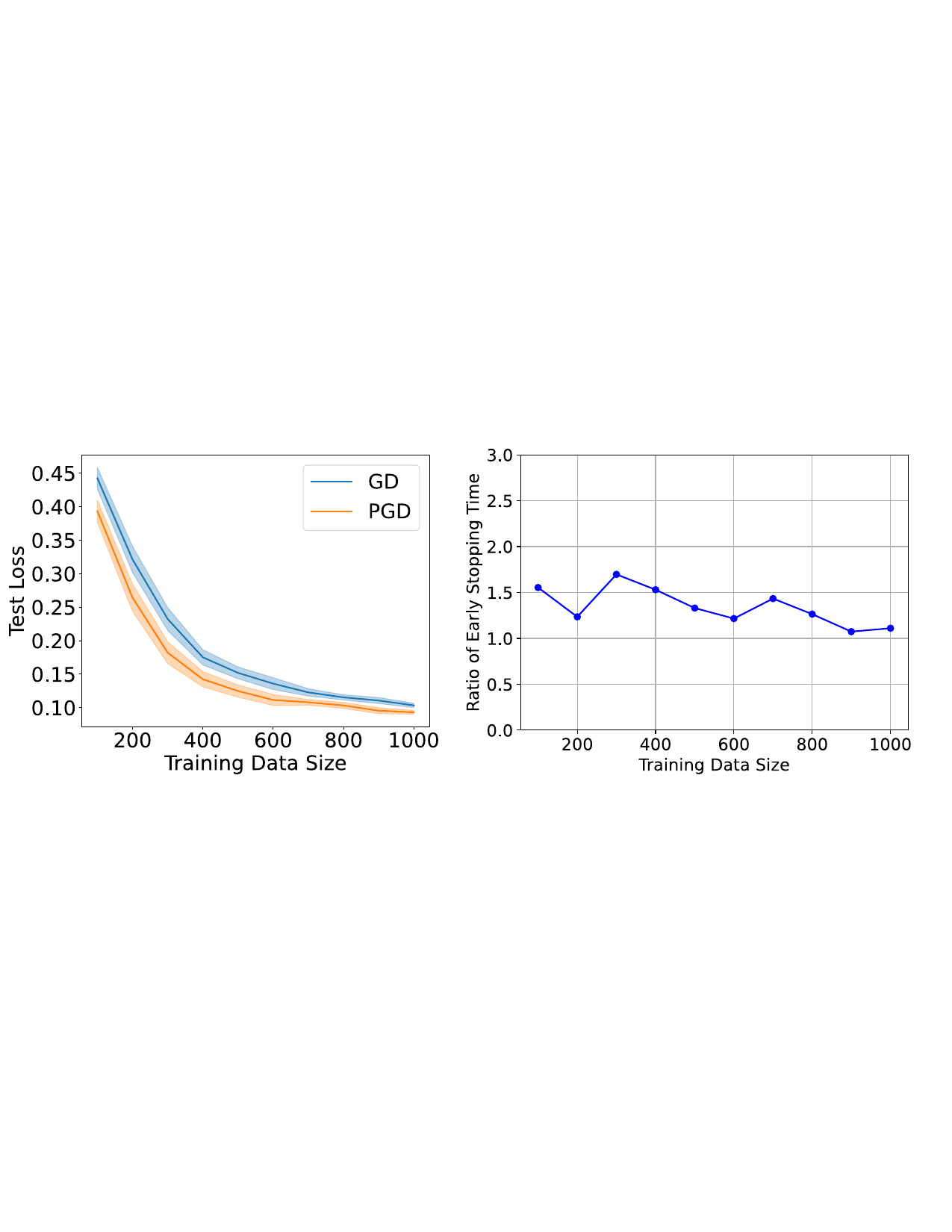}
\end{center}
\vspace{-0.3in}
\caption{Left: illustration of the test loss by GD and PGD for varying $n$ in $[100,1000]$ with a step size of $100$. The shaded area in each plot indicates the standard deviation across $10$ random initializations of the neural network.
Right: illustration of the ratio of early stopping time.}
\label{fig:test-loss-GD-PGD}%\vspace{-0.2in}
\end{figure}
\section{Simulation Results}
\label{sec:simulation}
\begin{figure}[!htbp]
\begin{center}
\includegraphics[width=5in]{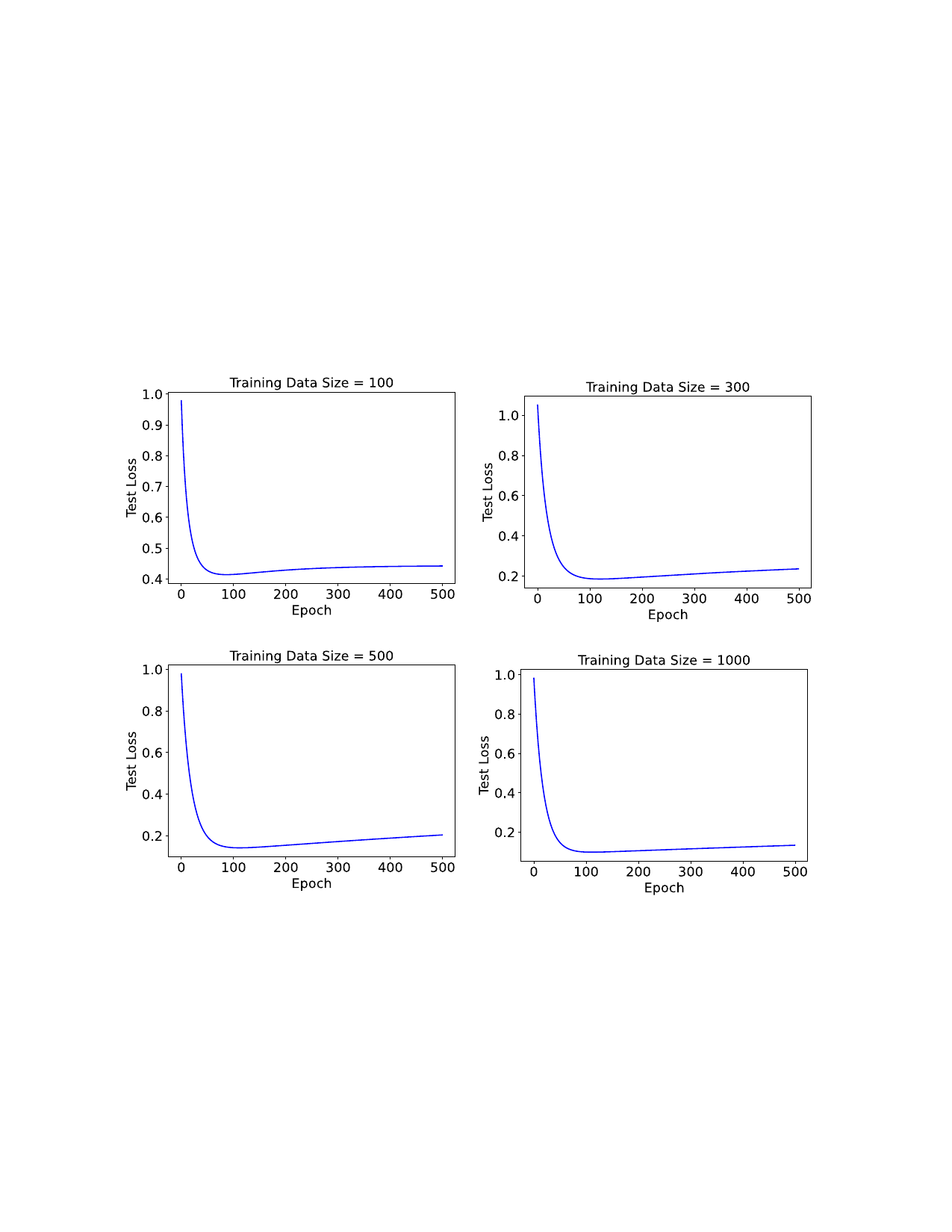}
\end{center}
\vspace{-0.4in}
\caption{Illustration of the test loss by PGD,  averaged over $10$ random initializations of the neural network.}
\label{fig:simulation}\vspace{-0.25in}
\end{figure}

We present simulation results for the proposed PGD in this section. We randomly sample $n$ points $\set{\bbx_i}_{i=1}^n$ distributed uniformly on the unit sphere $\unitsphere{49}$ in $\RR^{50}$. The sample size $n$ ranges from $100$ to $1000$ with a step size of $100$.
We set the target function to $f^*(\bx) = \bs^{\top} \bx$, where $\bx \in \unitsphere{49}$ and $\bs \sim \Unif{\cX}$ is randomly sampled. The variance of the noise is set to $\sigma_0^2 = 1$. We also uniformly and independently sample $1000$ points on the unit sphere in $\RR^{50}$ to serve as the test set.
We train the two-layer NN (\ref{eq:two-layer-nn}) by Algorithm~\ref{alg:PGD} with the network width $m = 10000$, $N = 8000$, and the learning rate is set to $\eta = 1$. We also set $s = 1$, so that $K^{(s)} = K$. The training is performed on an NVIDIA A100 GPU, and we report the test loss in
Figure~\ref{fig:test-loss-GD-PGD} and Figure~\ref{fig:simulation}. It can be observed from Figure~\ref{fig:test-loss-GD-PGD} that PGD always demonstrates better generalization than the vanilla GD through lower test losses across different training data size. Figure~\ref{fig:simulation} illustrates the test loss with respect to different PGD steps for $n = 100, 300, 500, 1000$.
It can be observed from Figure~\ref{fig:simulation} that early stopping consistently improves generalization for neural network training, as the test loss initially decreases and then increases due to overfitting.
%\newpage
%\vspace{-0.2in}

For each $n \in \set{100, 200, \ldots, 1000}$, we identify the PGD step that achieves the minimum test loss, denoted as $\hat{t}_n$, which serves as the empirical early stopping time. Theoretically, the early stopping time is predicted to scale as $\hat T \asymp n^{\frac{2\alpha(s+2)}{2\alpha(s+2)+1}} \asymp n^{\frac{3d}{4d-1}}$ with $s = 1$ and $2\alpha = d/(d-1)$. We compute the ratio of early stopping time, $\hat{t}_n /n^{\frac{3d}{4d-1}}$, averaged over $10$ random initializations of the neural network for each $n$ and illustrated in the right plot of Figure~\ref{fig:test-loss-GD-PGD}. We observe that the ratio of early stopping time is relatively stable and it lies within the interval $[1.0736, 1.6972]$, suggesting that the theoretically predicted early stopping time is empirically proportional to the empirical early stopping time.

%gradient descent (GD) described in Algorithm~\ref{alg:GD}
%\begin{algorithm}[!htbp]
%        \renewcommand{\algorithmicrequire}{\textbf{Input:}}
%\renewcommand\algorithmicensure {\textbf{Output:} }
%\caption{Training the Two-Layer NN by GD}
%\label{alg:GD}
%\begin{algorithmic}[1]
%\State $\bW(T) \leftarrow$ Training-by-GD($T,\bW(0)$)
%\State \textbf{\bf input: } $T,\bW(0)$
%\State \textbf{\bf for } $t=1,\ldots,T$ \,\,\textbf{\bf do }
%\State \quad Perform the $t$-th step of GD by
%\bals
%\vect{\bW(t+1)} - \vect{\bW(t)} = - \frac{\eta}{n} \bZ_{\bS}(t)
%(\hat \by(t) -  \by)
%\eals
%\State \textbf{\bf end for }
%\State \textbf{\bf return} $\bW(T)$
%\end{algorithmic}
%\end{algorithm}

%The code for the simulation results of this section is available at \url{https://anonymous.4open.science/r/PGD-two-layer-M-FD12/}.

%\vspace{-.1in}
\section{Conclusion}
%\vspace{-.05in}
We study nonparametric regression
by training an over-parameterized two-layer NN
where the target function is in an interpolation space with spectral bias.
We show that, if the neural network is trained with a novel and carefully designed Preconditioned Gradient Descent (PGD) with early stopping, a sharper and minimax optimal rate of the nonparametric regression risk can be obtained, compared to the current state-of-the-art when the training features follow the spherical uniform distribution. We compare our results to the current state-of-the-art with a detailed roadmap of proofs.

% Acknowledgements and Disclosure of Funding should go at the end, before appendices and references

%\acks{All acknowledgements go at the end of the paper before appendices and references.
%Moreover, you are required to declare funding (financial activities supporting the
%submitted work) and competing interests (related financial activities outside the submitted work).
%More information about this disclosure can be found on the JMLR website.}

% Manual newpage inserted to improve layout of sample file - not
% needed in general before appendices/bibliography.

\newpage
\appendix

We present the basic mathematical results required in our proofs in Section~\ref{sec::math-tools}, then present proofs in the subsequent sections.

\section{Mathematical Tools}
\label{sec::math-tools}
We introduce the basic definitions and mathematical results as the
basic tools for the subsequent results in the next sections of this appendix.
\begin{definition}\label{def:RC}
Let $\{\sigma_i\}_{i=1}^n$ be $n$ i.i.d. random variables such that $\Pr[\sigma_i = 1] = \Pr[\sigma_i = -1] = \frac{1}{2}$. The Rademacher complexity of a function class $\cF$ is defined as
\bal\label{eq:RC}
&\cfrakR(\cF) = \Expect{\set{\bbx_i}_{i=1}^n, \set{\sigma_i}_{i=1}^n}{\sup_{f \in \cF} {\frac{1}{n} \sum\limits_{i=1}^n {\sigma_i}{f(\bbx_i)}} }.
\eal%
The empirical Rademacher complexity is defined as
\bal\label{eq:empirical-RC}
&\hat \cfrakR(\cF) = \Expect{\set{\sigma_i}_{i=1}^n} { \sup_{f \in \cF} {\frac{1}{n} \sum\limits_{i=1}^n {\sigma_i}{f(\bbx_i)}} },
\eal%
For simplicity of notations, Rademacher complexity and empirical Rademacher complexity are also denoted by $\Expect{}{\sup_{f \in \cF} {\frac{1}{n} \sum\limits_{i=1}^n {\sigma_i}{f(\bbx_i)}} }$ and $\Expect{\sigma}{\sup_{f \in \cF} {\frac{1}{n} \sum\limits_{i=1}^n {\sigma_i}{f(\bbx_i)}} }$ respectively. %We also denote the Rademacher complexity by $\Expect{\bS,\set{\sigma_i}_{i=1}^n}{}$ to emphasize the variables with respect to which the expectation is computed.
\end{definition}

For data $\set{\bbx}_{i=1}^n$ and a function class $\cF$, we define the notation $R_n \cF$ by $R_n \cF \coloneqq \sup_{f \in \cF} \frac{1}{n} \sum\limits_{i=1}^n \sigma_i f(\bbx_i)$.

\begin{theorem}[{\citet[Theorem 2.1]{bartlett2005}}]
\label{theorem:Talagrand-inequality}
Let $\cX,P$ be a probability space, $\set{\bbx_i}_{i=1}^n$ be independent random variables distributed according to $P$. Let $\cF$ be a class of functions that map $\cX$ into $[a, b]$. Assume that there is some $r > 0$
such that for every $f \in \cF$,$\Var{f(\bbx_i)} \le r$. Then, for every $x > 0$, with probability at least $1-e^{-x}$,
\bal\label{eq:Talagrand-inequality}
&\sup_{f \in \cF} \big( \E_{P} [f(\bx)] - \E_{\bx \sim P_n } [f(\bx)] \big) \le \inf_{\alpha > 0} \Bigg( 2(1+\alpha) \E_{\set{\bbx_i}_{i=1}^n,\set{\sigma_i}_{i=1}^n} [R_n \cF] + \sqrt{\frac{2rx}{n}} \nonumber \\
&\phantom{\quad \quad}+ (b-a) \pth{\frac{1}{3}+\frac{1}{\alpha}} \frac{x}{n} \Bigg),
\eal%
and with probability at least $1-2e^{-x}$,
\bal\label{eq:Talagrand-inequality-empirical}
\sup_{f \in \cF} \big( \E_{P} [f(\bx)] - \E_{\bx \sim P_n } [f(\bx)] \big)
&\le \inf_{\alpha \in (0,1)} \Bigg(\frac{2(1+\alpha)}{1-\alpha} \E_{\set{\sigma_i}_{i=1}^n} [R_n \cF] + \sqrt{\frac{2rx}{n}}
\nonumber \\
&\phantom{\quad \quad} + (b-a) \pth{ \frac{1}{3}+\frac{1}{\alpha} + \frac{1+\alpha}{2\alpha(1-\alpha)} } \frac{x}{n} \Bigg).
\eal%
$P_n$ is the empirical distribution over $\set{\bbx_i}_{i=1}^n$ with
$\Expect{\bx \sim P_n}{f(\bx)} = \frac{1}{n} \sum\limits_{i=1}^n f(\bbx_i)$. Moreover, the same results hold for $\sup_{f \in \cF} \big( \E_{\bx \sim P_n } [f(\bx)] - \E_{P} [f(\bx)] \big)$.
\end{theorem}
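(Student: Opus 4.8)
The plan is to follow the standard route for this Bousquet--Bartlett--Mendelson style bound: a Talagrand concentration inequality for the one-sided supremum of the empirical process, then symmetrization to introduce the Rademacher complexity, and finally an elementary splitting step governed by the free parameter $\alpha$. \textbf{Step 1 (reduction and concentration).} First I would translate and rescale so that, up to the multiplicative factor $b-a$, the members of $\cF$ (and their centerings) are $[-1,1]$-valued; passing to a countable dense subclass if necessary makes the supremum measurable without changing its value. Writing $Z \defeq \sup_{f \in \cF}\pth{\Expect{P}{f(\bx)} - \Expect{\bx \sim P_n}{f(\bx)}}$, I would apply Bousquet's form of Talagrand's concentration inequality to the centered, rescaled functions $\set{(\Expect{P}{f} - f)/(b-a) : f \in \cF}$, using the hypothesis $\sum_{i=1}^n \Var{f(\bbx_i)} \le nr$ to control the weak variance. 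This gives, with probability at least $1 - e^{-x}$,
\bal
Z \;\le\; \E Z + \sqrt{\frac{2rx}{n} + \frac{4(b-a)x}{n}\,\E Z} + \frac{(b-a)x}{3n}. \nonumber
\eal

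\textbf{Step 2 (symmetrization and splitting).} Next I would invoke the classical symmetrization inequality $\E Z \le 2\,\E\bth{R_n\cF}$ and substitute it into the right-hand side above, which is increasing in $\E Z$. Using $\sqrt{u+v}\le\sqrt u+\sqrt v$ to separate the cross term and then $2\sqrt{uv}\le u+v$ with $u = 2\alpha\,\E\bth{R_n\cF}$, $v = (b-a)x/(\alpha n)$ to bound $\sqrt{\frac{8(b-a)x}{n}\,\E\bth{R_n\cF}} \le 2\alpha\,\E\bth{R_n\cF} + \frac{(b-a)x}{\alpha n}$, I would collect terms to obtain, with probability at least $1-e^{-x}$,
\bal
Z \;\le\; 2(1+\alpha)\,\E\bth{R_n\cF} + \sqrt{\frac{2rx}{n}} + (b-a)\pth{\frac{1}{3} + \frac{1}{\alpha}}\frac{x}{n}. \nonumber
\eal
Taking the infimum over $\alpha > 0$ yields \eqref{eq:Talagrand-inequality}; running the argument with $\set{-f : f \in \cF}$ handles the reversed one-sided supremum $\sup_{f\in\cF}\pth{\Expect{\bx\sim P_n}{f}-\Expect{P}{f}}$.

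\textbf{Step 3 (empirical version).} To upgrade $\E\bth{R_n\cF}$ to the empirical Rademacher complexity $\hat\cfrakR(\cF) = \Expect{\set{\sigma_i}}{R_n\cF}$, I would apply the concentration inequality a second time, now to $R_n\cF$ itself, viewed as the supremum of an empirical process over the pairs $(\bbx_i,\sigma_i)$: its bounded differences are $(b-a)/n$ and its weak variance is again controlled in terms of $\E\bth{R_n\cF}$, so Bousquet's inequality with a parameter $\alpha\in(0,1)$ produces, on an event of probability at least $1-e^{-x}$, an inequality of the form $\E\bth{R_n\cF}\le\frac{1}{1-\alpha}\hat\cfrakR(\cF)+\frac{1+\alpha}{2\alpha(1-\alpha)}\cdot\frac{(b-a)x}{2n}$. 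Intersecting this event with the one from Steps 1--2 (union bound, total failure probability at most $2e^{-x}$), substituting, and again taking the infimum over $\alpha\in(0,1)$ gives \eqref{eq:Talagrand-inequality-empirical}, with the factor $\frac{2(1+\alpha)}{1-\alpha}$ on $\hat\cfrakR(\cF)$ and the extra $\frac{1+\alpha}{2\alpha(1-\alpha)}$-type term arising from this second application.

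\textbf{Main obstacle.} The conceptual structure is routine; the delicate part is the bookkeeping of absolute constants. One must invoke exactly the version of the Talagrand/Bousquet inequality phrased in terms of $\sup_{f}\sum_i\Var{f(\bbx_i)}$ and $\E Z$ (rather than a uniform second-moment bound), and then track carefully how the rescaling by $b-a$, the factor $2$ from symmetrization, and the $\alpha$-dependent AM--GM split interact so that the surviving constants are precisely $2(1+\alpha)$ and $\frac{1}{3}+\frac{1}{\alpha}$ in the population bound and $\frac{2(1+\alpha)}{1-\alpha}$, $\frac{1+\alpha}{2\alpha(1-\alpha)}$ in the empirical bound; the measurability of the suprema, handled by restricting to a countable subclass, is a minor but necessary technical point.
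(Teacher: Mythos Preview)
The paper does not prove this statement at all: it is quoted verbatim as \cite[Theorem 2.1]{bartlett2005} in the ``Mathematical Tools'' section and used as a black box throughout, with no proof or sketch provided. Your proposal is a faithful outline of the original Bartlett--Bousquet--Mendelson argument (Bousquet's version of Talagrand's inequality, symmetrization, AM--GM splitting for the cross term, and a second concentration step to pass to the empirical Rademacher complexity), so there is nothing in the paper to compare it against beyond the citation itself.
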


In addition, we have the contraction property for Rademacher complexity, which is due to Ledoux
and Talagrand~\citep{Ledoux-Talagrand-Probability-Banach}.

\begin{theorem}\label{theorem:RC-contraction}
Let $\phi$ be a contraction,that is, $\abth{\phi(x) - \phi(y)} \le \mu \abth{x-y}$ for $\mu > 0$. Then, for every function class $\cF$,
\bal\label{eq:RC-contraction}
&\Expect{\set{\sigma_i}_{i=1}^n} {R_n \phi \circ \cF} \le \mu \Expect{\set{\sigma_i}_{i=1}^n} {R_n \cF},
\eal%
where $\phi \circ \cF$ is the function class defined by $\phi \circ \cF = \set{\phi \circ f \colon f \in \cF}$.
\end{theorem}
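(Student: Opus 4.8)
The plan is to establish this classical Ledoux--Talagrand comparison inequality by the standard argument that peels off one Rademacher sign at a time. First I would reduce to the case $\mu = 1$: since $\phi/\mu$ is $1$-Lipschitz and the signs factor through scalar multiplication, $R_n\pth{(\phi/\mu)\circ\cF} = \tfrac1\mu R_n(\phi\circ\cF)$, so the general statement follows from the case $\mu=1$. Throughout, the features $\set{\bbx_i}_{i=1}^n$ are fixed and only the signs $\set{\sigma_i}_{i=1}^n$ are random, so the argument is purely combinatorial; I may assume the suprema below are measurable, e.g.\ by first restricting to a countable dense subclass of $\cF$ and passing to the limit, or by simply working with $\varepsilon$-maximizers.

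The crux is a \emph{one-variable comparison lemma}: for any family $\set{(a_f,b_f)}_{f\in\cF}$ of reals, any $1$-Lipschitz $\psi\colon\RR\to\RR$, and a single Rademacher sign $\sigma$,
\bal
\Expect{\sigma}{\sup_{f\in\cF}\pth{a_f + \sigma\,\psi(b_f)}} \le \Expect{\sigma}{\sup_{f\in\cF}\pth{a_f + \sigma\, b_f}}. \nonumber
\eal
Unpacking, the left-hand side equals $\tfrac12\sup_{f}\pth{a_f+\psi(b_f)} + \tfrac12\sup_{f}\pth{a_f-\psi(b_f)}$, and I would bound it by picking $\varepsilon$-maximizers $f_1,f_2$ of the two suprema and estimating $a_{f_1}+\psi(b_{f_1})+a_{f_2}-\psi(b_{f_2})$. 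If $b_{f_1}\ge b_{f_2}$ then $\psi(b_{f_1})-\psi(b_{f_2})\le b_{f_1}-b_{f_2}$, so this is at most $\pth{a_{f_1}+b_{f_1}}+\pth{a_{f_2}-b_{f_2}}$; if $b_{f_1}<b_{f_2}$ then $\psi(b_{f_1})-\psi(b_{f_2})\le b_{f_2}-b_{f_1}$, so it is at most $\pth{a_{f_1}-b_{f_1}}+\pth{a_{f_2}+b_{f_2}}$. In either case it is at most $\sup_f\pth{a_f+b_f}+\sup_f\pth{a_f-b_f} = 2\,\Expect{\sigma}{\sup_f\pth{a_f+\sigma b_f}}$, and letting $\varepsilon\downarrow 0$ proves the lemma.

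To conclude I would telescope. For $k=0,1,\dots,n$ set
\bal
\Phi_k \defeq \Expect{\set{\sigma_i}_{i=1}^n}{\sup_{f\in\cF}\pth{\sum_{i=1}^{k}\sigma_i f(\bbx_i) + \sum_{i=k+1}^{n}\sigma_i \phi(f(\bbx_i))}}, \nonumber
\eal
so that $\tfrac1n\Phi_0 = \Expect{\set{\sigma_i}_{i=1}^n}{R_n(\phi\circ\cF)}$ and $\tfrac1n\Phi_n = \Expect{\set{\sigma_i}_{i=1}^n}{R_n\cF}$. Fix $k\in[n]$; conditioning on $\set{\sigma_i\colon i\neq k}$ and applying the one-variable lemma with $a_f = \sum_{i<k}\sigma_i f(\bbx_i)+\sum_{i>k}\sigma_i\phi(f(\bbx_i))$, $\psi=\phi$, and $b_f=f(\bbx_k)$ gives $\Expect{\sigma_k}{\sup_f\pth{a_f+\sigma_k\phi(f(\bbx_k))}}\le\Expect{\sigma_k}{\sup_f\pth{a_f+\sigma_k f(\bbx_k)}}$ for every realization of the remaining signs; taking expectation over $\set{\sigma_i\colon i\neq k}$ (tower property) yields $\Phi_{k-1}\le\Phi_k$. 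Chaining $\Phi_0\le\Phi_1\le\dots\le\Phi_n$ and dividing by $n$ settles the case $\mu=1$, and reinstating the scaling gives the claimed bound. I expect the one-variable lemma to be the only genuinely delicate point — in particular the sub-case where $\phi$ reverses the order of $b_{f_1}$ and $b_{f_2}$, where one must swap which supremum absorbs which term — whereas the reduction to $\mu=1$ and the conditioning/Fubini bookkeeping in the telescoping step are routine.
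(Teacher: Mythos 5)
Your proof is correct, and it is the standard argument for the Ledoux--Talagrand contraction principle (reduction to $\mu=1$, the one-sign comparison lemma via $\varepsilon$-maximizers with the two cases on the order of $b_{f_1},b_{f_2}$, then telescoping coordinate by coordinate with the tower property). The paper does not prove this statement itself --- it cites it directly from Ledoux and Talagrand --- so there is nothing to compare against beyond noting that your write-up is the canonical proof of the cited result; the only point worth keeping explicit in a final version is the measurability caveat you already flagged, and the observation that the one-sided definition of $R_n\cF$ used in the paper (no absolute value inside the supremum) is exactly what makes the argument go through without assuming $\phi(0)=0$.
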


\begin{definition}[Sub-root function, {\citep[Definition 3.1]{bartlett2005}}]
\label{def:sub-root-function}
\begin{sloppypar}
A function $\psi \colon [0,\infty) \to [0,\infty)$ is sub-root if it is nonnegative,
nondecreasing and if $\frac{\psi(r)}{\sqrt r}$ is nonincreasing for $r >0$.
\end{sloppypar}
\end{definition}

\begin{theorem}[{\citet[Theorem 3.3]{bartlett2005}}]
\label{theorem:LRC-population}
Let $\cF$ be a class of functions with ranges in $[a, b]$ and assume
that there are some functional $T \colon \cF \to \RR+$ and some constant $\bar B$ such that for every $f \in \cF$ , $\Var{f} \le T(f) \le \bar B P(f)$. Let $\psi$ be a sub-root function and let $r^*$ be the fixed point of $\psi$.
Assume that $\psi$ satisfies, for any $r \ge r^*$,
$\psi(r) \ge \bar B \cfrakR(\set{f \in \cF \colon T (f) \le r})$. Fix $x > 0$,
then for any $K_0 > 1$, with probability at least $1-e^{-x}$,
\bals
\forall f \in \cF, \quad \Expect{P}{f} \le \frac{K_0}{K_0-1} \Expect{P_n}{f} + \frac{704K_0}{\bar B} r^*
+ \frac{x\pth{11(b-a)+26 \bar B K_0}}{n}.
\eals
Also, with probability at least $1-e^{-x}$,
\bals
\forall f \in \cF, \quad \Expect{P_n}{f} \le \frac{K_0+1}{K_0} \Expect{P}{f}  + \frac{704K_0}{\bar B} r^*
+ \frac{x\pth{11(b-a)+26 \bar B K_0}}{n}.
\eals
\end{theorem}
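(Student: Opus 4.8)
The plan is to deduce both displayed inequalities from the one‑sided deviation bound of Theorem~\ref{theorem:Talagrand-inequality}, applied to the sub‑level sets of the functional $T$, combined with a peeling (stratification) of $\cF$ and the self‑bounding relation $T(f)\le\bar B\,\Expect{P}{f}$; the sub‑root property of $\psi$ is exactly what makes the peeling close with a coefficient of $\Expect{P}{f}$ strictly below $1$. First I would fix $r\ge r^*$ and set $\cF_r\defeq\set{f\in\cF\colon T(f)\le r}$. Every $f\in\cF_r$ satisfies $\Var{f}\le T(f)\le r$, so Theorem~\ref{theorem:Talagrand-inequality} (whose right‑hand side already carries the Rademacher complexity, i.e.\ the symmetrization step is built in) gives, for each $x>0$ and $\alpha>0$, with probability at least $1-e^{-x}$,
\bals
\sup_{f\in\cF_r}\pth{\Expect{P}{f}-\Expect{P_n}{f}}\le 2(1+\alpha)\,\cfrakR(\cF_r)+\sqrt{\frac{2rx}{n}}+(b-a)\pth{\frac13+\frac1\alpha}\frac xn.
\eals
By hypothesis $\bar B\,\cfrakR(\cF_r)\le\psi(r)$, and sub‑rootness of $\psi$ makes $\psi(r)/\sqrt r$ nonincreasing, so $\psi(r)\le\sqrt{r/r^*}\,\psi(r^*)=\sqrt{rr^*}$ for $r\ge r^*$; hence $\cfrakR(\cF_r)\le\sqrt{rr^*}/\bar B$ and the right‑hand side above is at most a nondecreasing quantity $V(r)$ that grows only like $\sqrt r$.

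Next I peel, since I really want to evaluate the bound at the random radius $r=\lambda\pth{\bar B\,\Expect{P}{f}+r^*}$. Apply the previous step to $\cF_{\lambda^k r^*}$ for $k=0,1,\dots,K$, where $\lambda>1$ is fixed and $K$ is the least integer with $\lambda^K r^*$ exceeding the (finite) supremum of $T$ on $\cF$; a union bound over these finitely many $k$ (inflating $x$, absorbed into the constants, or arranged as in \cite{bartlett2005} so the confidence stays $1-e^{-x}$) gives an event on which $\sup_{f\in\cF_{\lambda^k r^*}}(\Expect{P}{f}-\Expect{P_n}{f})\le V(\lambda^k r^*)$ for all such $k$. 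On it, for any $f\in\cF$ pick the least $k$ with $\max\set{T(f),r^*}\le\lambda^k r^*$; then $f\in\cF_{\lambda^k r^*}$ and $\lambda^k r^*\le\lambda\pth{T(f)+r^*}\le\lambda\pth{\bar B\,\Expect{P}{f}+r^*}$, so $\Expect{P}{f}-\Expect{P_n}{f}\le V\!\pth{\lambda\pth{\bar B\,\Expect{P}{f}+r^*}}$. Expanding $V$ and applying AM--GM to the cross‑terms $\sqrt{\lambda\bar B\,\Expect{P}{f}\cdot r^*}$ and $\sqrt{\lambda\bar B\,\Expect{P}{f}\cdot x/n}$, one chooses $\alpha,\lambda$ and the split parameters as functions of $K_0$ so that the coefficient of $\Expect{P}{f}$ equals $1/K_0$:
\bals
\Expect{P}{f}-\Expect{P_n}{f}\le\frac1{K_0}\Expect{P}{f}+\frac{c_1 K_0}{\bar B}\,r^*+\frac{c_2 x}{n}\pth{(b-a)+\bar B K_0},
\eals
with $c_1,c_2$ absolute. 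Moving $\Expect{P}{f}/K_0$ to the left and multiplying by $K_0/(K_0-1)$ yields the first inequality; tracking all constants through the peeling produces the numbers $704$, $26$, $11$. The second inequality is obtained identically from the reverse direction of Theorem~\ref{theorem:Talagrand-inequality} (the bound on $\sup_f(\Expect{P_n}{f}-\Expect{P}{f})$), which is why the leading factor becomes $(K_0+1)/K_0$ rather than $K_0/(K_0-1)$.

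I expect the main obstacle to be the bookkeeping in the peeling step: one must simultaneously (i) substitute $r\asymp\bar B\,\Expect{P}{f}$ and still end up with a coefficient of $\Expect{P}{f}$ strictly less than $1$ — which succeeds only because sub‑rootness forces $V(r)=\cO(\sqrt r)$ rather than faster growth — and (ii) pass from the per‑shell failure probability to a clean $1-e^{-x}$ over the $\cO\pth{\log(\bar B(b-a)/r^*)}$ relevant shells without leaking a logarithmic factor into the constants, which \cite{bartlett2005} handles by a single application of Talagrand's inequality to a re‑weighted, star‑shaped surrogate of $\cF$ in place of a crude union over a geometric grid.
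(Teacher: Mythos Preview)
The paper does not prove this theorem at all: it is stated in the ``Mathematical Tools'' section with the citation \cite[Theorem~3.3]{bartlett2005} and used as a black box in the proofs of Theorem~\ref{theorem:LRC-population-NN-detail}, Theorem~\ref{theorem:LRC-population-NN-detail-K}, and Theorem~\ref{theorem:LRC-population-NN-K-fixed-point}. There is therefore no ``paper's own proof'' to compare your proposal against.

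That said, your sketch is a faithful outline of the argument in \cite{bartlett2005}: Talagrand's inequality applied to the $T$-sublevel sets, the sub-root bound $\psi(r)\le\sqrt{rr^*}$ for $r\ge r^*$, a peeling over geometric shells, and AM--GM to isolate a coefficient $1/K_0$ in front of $\Expect{P}{f}$. You also correctly flag the one subtle point, namely that a naive union bound over $\cO(\log(\bar B(b-a)/r^*))$ shells would leak a logarithmic factor, and that \cite{bartlett2005} avoids this by normalizing to a star-shaped class and applying Talagrand once. If you wanted to actually carry out the constants $704$, $11$, $26$, you would need to follow that reweighting argument rather than the shell-by-shell version you describe in the body of the proposal; otherwise your constants would be larger by a factor depending on the number of shells.
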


\section{Detailed Proofs}
\label{sec:proofs}

\begin{figure}[!htbp]
\begin{center}
\includegraphics[width=0.7\textwidth]{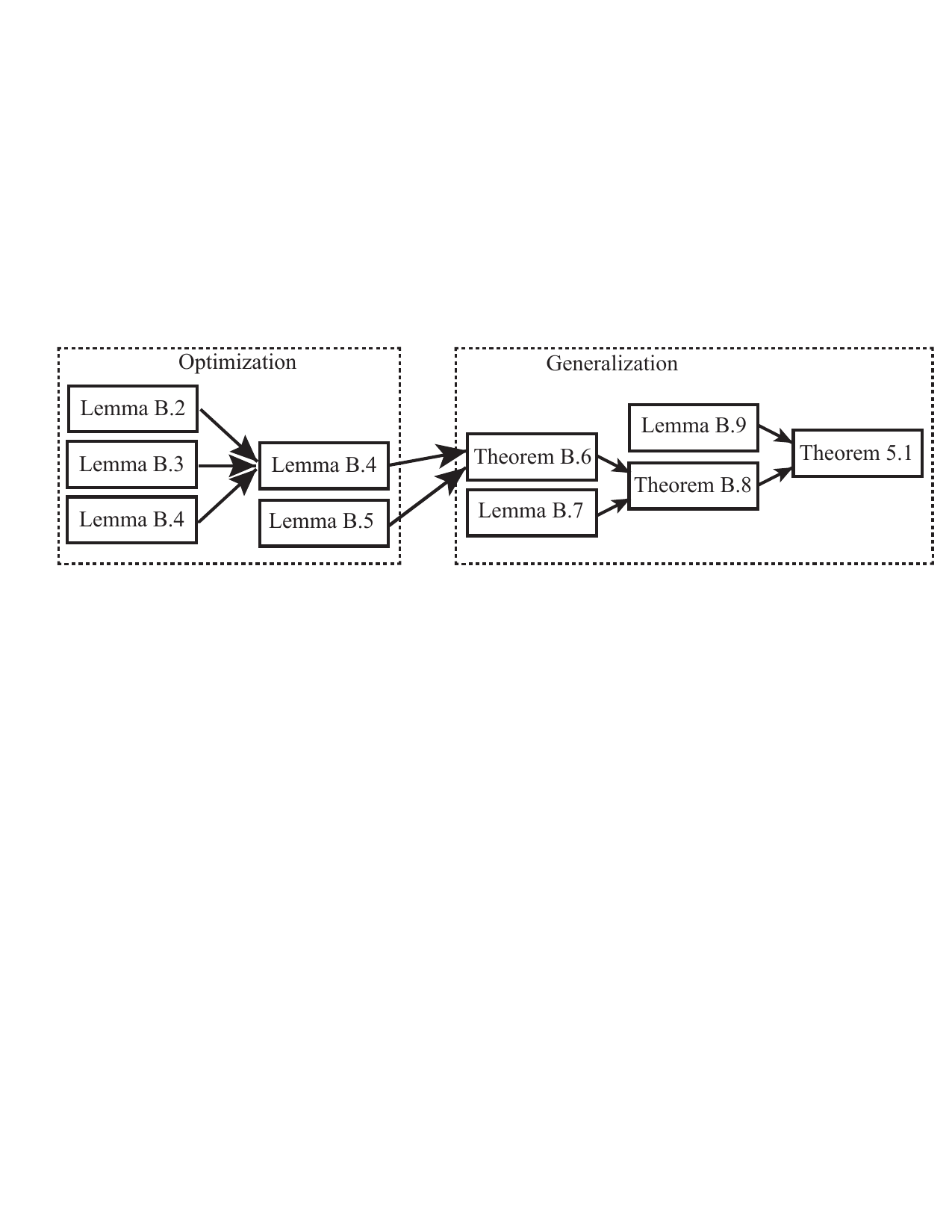}
\end{center}
\caption{Roadmap of Major Results Leading to Theorem~\ref{theorem:minimax-nonparametric-regression-Kint}.}
\label{fig:proof-roadmap-Kint}
\end{figure}

%\section{Proof of Theorem~\ref{theorem:minimax-nonparametric-regression-Kint}}
%\label{sec:detailed-proof-minimax-nonparametric-regression-Kint}

%\subsection{Proof of Theorem~\ref{theorem:LRC-population-NN}}

Because Theorem~\ref{theorem:minimax-nonparametric-regression-Kint}
is proved by Theorem~\ref{theorem:LRC-population-NN} and Lemma~\ref{lemma:empirical-loss-bound}, we illustrate in Figure~\ref{fig:proof-roadmap-Kint} the roadmap containing the intermediate theoretical results which lead to Theorem~\ref{theorem:LRC-population-NN}.
Theorem~\ref{theorem:bounded-NN-class} is the formal version of
Theorem~\ref{theorem:bounded-NN-class-informal},
Theorem~\ref{theorem:LRC-population-NN} is the formal version of
Theorem~\ref{theorem:LRC-population-NN-eigenvalue-informal}, and
Lemma~\ref{lemma:empirical-loss-bound} is the formal version of
Lemma~\ref{lemma:empirical-loss-bound-informal}. Lemma~\ref{lemma:empirical-loss-convergence} is the formal version of
Lemma~\ref{lemma:empirical-loss-convergence-informal}.

\subsection{Theorem~\ref{theorem:LRC-population-NN}, Lemma~\ref{lemma:empirical-loss-bound}, and related theoretical results}
\label{sec:proofs-details}

\begin{lemma}\label{lemma:empirical-loss-convergence}
Suppose $N \gsim T^2\log{(n/{\delta})}/\tau^2$,
and
\bal\label{eq:m-cond-empirical-loss-convergence}
m \gsim T^{\frac {15}{2}} d^{\frac 52}/{\tau^5}.
\eal
Suppose the neural network $f(\bW(t),\cdot)$ trained by PGD  using Algorithm~\ref{alg:PGD}
with the learning rate $\eta = \Theta(1) \in (0,8)$ on the random initialization $\bW(0) \in \cW_0$. Then for every $\delta \in (0,1)$, with probability at least
$1 -  \exp\pth{-\Theta(n)}-\delta$
over $\bw$ and $\bQ$,
$\bW(t) \in \cW(\bS,\bQ,\bW(0),T)$.
Moreover, for all $t \in [0,T]$, $\bu(t) = \bv(t) + \be(t)$ where
$\bu(t) = \hat \by(t) -  \by$, $\bv(t) \in \cV_t$, $\be(t) \in \cE_{t,\tau}$,
and $\ltwonorm{\bu(t)} \le c_{\bu} \sqrt n$.
\end{lemma}

\begin{proof}
%{\textbf{\textup{\hspace{-3pt}of
%Lemma~\ref{lemma:empirical-loss-convergence}}}.}
First,  when $m \gsim T^{\frac {15}{2}} d^{\frac 52}/{\tau^5}$ with a proper constant, it can be verified that $\bE_{m,\eta,\delta} \le {\tau {\sqrt n}}/(2T)$ where $\bE_{m,\eta,\delta}$ is defined by
(\ref{eq:empirical-loss-Et-bound-Em}) of
Lemma~\ref{lemma:empirical-loss-convergence-contraction}.
We then use mathematical induction to prove the lemma.
%We will first prove that $\bu(t) = \bv(t) + \be(t)$ where $\bv(t) \in \cV_t$,
%$\be(t) \in \cE_{t,\tau}$, and $\ltwonorm{\bu(t)} \le c_{\bu} \sqrt n$ for %for all $t \in [0,T]$.
%for all $t \in [0,T]$, where $\sum_{t'=1}^{t} \cdot = 0$ for $t < 1$, and

When $t = 0$, we have
\bal\label{eq:empirical-loss-convergence-seg1}
\bu(0) = - \by &= \bv(0) + \be(0),
\eal
where $\bv(0) \defeq -f^*(\bS) = -\pth{\bI-\eta \bKint_n}^0 f^*(\bS)$,
$\be(0) = -\bw = \bbe_1(0) + \bbe_2(0)$ with
$\bbe_1(0) = -\big(\bI-\eta \bKint_n \big) ^0 \bw$
and $\bbe_2(0) = \bzero$. Therefore,
$\bv(0) \in \cV_{0}$ and $\be(0) \in \cE_{0,\tau}$.
Also, it follows from the proof of Lemma~\ref{lemma:yt-y-bound}
that $\ltwonorm{\bu(0)} \le c_{\bu}$ w.p. at least
$1 -  \exp\pth{-\Theta(n)}$
over $\bw$.

Suppose that for all $t_1 \in[0,t]$ with $t \in [0,T-1]$, $\bu(t_1) = \bv(t_1) + \be(t_1)$ where $\bv(t_1) \in \cV_{t_1}$, and
$\be(t_1) = \bbe_1(t_1) + \bbe_2(t_1)$ with
$\bv(t_1) \in \cV_{t_1}$ and $\be(t_1) \in \cE_{t_1,\tau}$, and
$\ltwonorm{\bu(t_1)} \le c_{\bu} {\sqrt n} $ for all $t_1 \in[0,t]$.
Then it follows from Lemma~\ref{lemma:empirical-loss-convergence-contraction} that the recursion
$\bu(t'+1)  = \pth{\bI- \eta \bKint_n }\bu(t') +\bEint(t')\bu(t')
 +\bE(t'+1)$ holds for all $t' \in [0,t]$.
 As a result, we have
\bal\label{eq:empirical-loss-convergence-seg5}
\bu(t+1)  &= \pth{\bI- \eta \bKint_n }\bu(t) +\bEint(t)\bu(t)
 +\bE(t+1) \nonumber \\
& = -\pth{\bI-\eta \bKint_n}^{t+1} f^*(\bS)
 -\pth{\bI-\eta \bKint_n }^{t+1} \bw \nonumber \\
&\phantom{=}+\sum_{t'=1}^{t+1}
 \pth{\bI-\eta \bKint_n}^{t+1-t'} \bE(t')
 +\sum_{t'=1}^{t+1} \pth{\bI-\eta \bKint_n}^{t+1-t'} \bEint(t'-1)\bu(t'-1)
\nonumber \\
&=\bv(t+1) + \be(t+1),
\eal
where $\bv(t+1)$ and $\be(t+1)$ are defined as
\bal\label{eq:empirical-loss-convergence-vt-et-def}
\bv(t+1) \defeq-\pth{\bI-\eta \bKint_n}^{t+1} f^*(\bS)\in \cV_{t+1},
\eal
\bal\label{eq:empirical-loss-convergence-et-pre}
&\be(t+1) \defeq \underbrace{-\pth{\bI-\eta \bKint_n }^{t+1} \bw}_{\bbe_1(t+1)}
\nonumber \\
&\phantom{}+ \underbrace{ \sum_{t'=1}^{t+1}
 \pth{\bI-\eta \bKint_n}^{t+1-t'} \bE(t') +\sum_{t'=1}^{t+1} \pth{\bI-\eta \bKint_n}^{t+1-t'}
\bEint(t'-1)\bu(t'-1)}_{\bbe_2(t+1)}.
\eal
We now prove the upper bound for $\bbe_2(t+1)$.
With $\eta \in (0,8)$, we have $\ltwonorm{\bI - \eta \bKint_n} \in (0,1)$.
It follows that
\bal\label{eq:empirical-loss-convergence-et-bound}
&\ltwonorm{\bbe_2(t+1)} \nonumber \\
&\le \sum_{t'=1}^{t+1} \ltwonorm{\bI-\eta \bKint_n}^{t+1-t'}\ltwonorm{\bE(t')} +\sum_{t'=1}^{t+1} \ltwonorm{\bI-\eta \bKint_n}^{t+1-t'}
\ltwonorm{\bEint(t'-1)}\ltwonorm{\bu(t'-1)}
\nonumber \\
&\stackrel{\circled{1}}{\le}  \frac{\tau {\sqrt n}}{2} +c_{\bu}{\sqrt n} \cdot
T \cdot \max_{t' \in [T]}\ltwonorm{\bEint(t'-1)}
\stackrel{\circled{2}}{\le} {\sqrt n}\tau,
%\nonumber \\
%&\le \pth{c_{\bu}-c_{\bu} + \tau} {\sqrt n},
\eal
where $\circled{1}$ follows from the fact
that $\ltwonorm{\bE(t)} \le \bE_{m,\eta,\delta} \le {\tau {\sqrt n}}/(2T)$ for all $t \in [T]$, (\ref{eq:empirical-loss-EKint-bound}) in
Lemma~\ref{lemma:empirical-loss-convergence-contraction},
and the induction hypothesis. $\circled{2}$ follows from
$N \gsim T^2\log{(n/{\delta})}/\tau^2$
with $\max_{t' \in [T]}\ltwonorm{\bEint(t'-1)} \lsim \eta \sqrt{\frac{\log (n/{\delta})}{N}}$ by Lemma~\ref{lemma:empirical-loss-convergence-contraction} and $c_{\bu} \eta = \Theta(1)$.
It follows from (\ref{eq:empirical-loss-convergence-et-bound}) that $\be(t+1) \in \cE_{t+1,\tau}$.
Also, it follows from Lemma~\ref{lemma:yt-y-bound}
that
\bals
\ltwonorm{\bu(t+1)} &\le \ltwonorm{\bv(t+1)} + \ltwonorm{\bbe_1(t+1)}
+\ltwonorm{\bbe_2(t+1)} \nonumber \\
&\le\pth{\frac{\mu_0}{ \sqrt{2e\eta } } + \sigma_0+\tau+1} {\sqrt n}
\le c_{\bu}{\sqrt n},
\eals
which completes the induction step thus the entire proof.
\end{proof}

\begin{lemma}\label{lemma:yt-y-bound}
Let $t \in [0\colon T]$, $\bv = -\pth{\bI-\eta \bKint_n}^{t} f^*(\bS)$,
 $\be = -\pth{\bI-\eta \bKint_n }^{t} \bw$, and $\eta
  \in (0,{1}/{\hlambdaint_1 })$.
Then with probability at least
$1 -  \exp\pth{-\Theta(n)}$
over $\bw$,
\bal\label{eq:yt-y-bound}
\ltwonorm{\bv} + \ltwonorm{\be} \le \pth{\frac{\mu_0}{\min\set{2{\sqrt 2}, \sqrt{2e\eta }} } + \sigma_0+1} {\sqrt n}.
\eal
\end{lemma}
\begin{proof}
%Define matrix $\hat \bS \in \RR^{n \times n}$ as a diagonal matrix with
%\bal\label{eq:yt-y-bound-S}
%{\hat \bS }_{ii} = 1 - \eta \hat \lambda_i ,
%\forall i \in [n].
%\eal
When $t \ge 1$, we have $\bv = -\pth{\bI-\eta\bKint_n}^t f^*(\bS)$,
and
\bal\label{eq:yt-y-bound-seg1}
\ltwonorm{\bv(t)}^2 &=\sum\limits_{i=1}^{n}
\pth{1-\eta \hlambdaint_i }^{2t}
\bth{{\bU}^{\top} f^*(\bS)}_i^2  \stackrel{\circled{1}}{\le}
\sum\limits_{i=1}^{n}
\frac{1}{2e\eta \hlambdaint_i  t}
\bth{{\bU}^{\top} f^*(\bS)}_i^2
&\stackrel{\circled{2}}{\le}
\frac{n \mu_0^2}{ 2e\eta t }.
\eal

Here $\circled{1}$ follows Lemma~\ref{lemma:auxiliary-lemma-1}, $\circled{2}$ follows
by Lemma~\ref{lemma:bounded-Ut-f-in-RKHS}.
Moreover, it follows from the concentration inequality about quadratic forms of sub-Gaussian random variables in~\citet{quadratic-tail-bound-Wright1973} that
\bal\label{eq:yt-y-bound-eps-1}
\Pr\bth{\ltwonorm{\bw}^2 -
\Expect{}{\ltwonorm{\bw}^2} > n}
\le \exp\pth{-\Theta(n)},
\eal
and $\Expect{}{\ltwonorm{\bw}} \le
\sqrt{\Expect{}{\ltwonorm{\bw}^2}} = \sqrt{n} \sigma_0$. Therefore,
$\Pr\bth{\ltwonorm{\bw} -
\sqrt{n} \sigma_0 > \sqrt n} \le
 \exp\pth{-\Theta(n)}$.
As a result, we have
\bals
\ltwonorm{\bv} + \ltwonorm{\be}
&\le
\sqrt{\frac{n \mu_0^2}{ 2e\eta }} + \ltwonorm{\bw} \le
  \pth{\frac{\mu_0}{ \sqrt{2e\eta  } } + \sigma_0+1 }{\sqrt n}, \quad \forall t \in [T].
\eals

When $t = 0$, $\ltwonorm{\bv} = \ltwonorm{f^*(\bS)} \le \mu/(2 {\sqrt 2}) \cdot {\sqrt n}$, which completes the proof of (\ref{eq:yt-y-bound}).
\end{proof}

\begin{lemma}\label{lemma:empirical-loss-convergence-contraction}
Let $0<\eta<1$, $0 \le t \le T-1$ for $T \ge 1$, and suppose that $\ltwonorm{\hat \by(t') - \by} \le
 c_{\bu}{\sqrt{n}}  $ holds for all $0 \le t' \le t$ and
 the random initialization $\bW(0) \in \cW_0$. Then
\bal\label{eq:empirical-loss-convergence-contraction}
\hat \by(t+1) - \by  &= \pth{\bI- \eta \bKint_n }\pth{\hat \by(t) - \by} +\bEint(t)\pth{\hat \by(t) - \by}+\bE(t+1),
\eal
where
\bal\label{eq:empirical-loss-Et-bound}
&\ltwonorm {\bE(t+1)} \lsim \bE_{m,\eta,\delta},
\eal
and $\bE_{m,\eta,\delta}$ is defined by
\bal\label{eq:empirical-loss-Et-bound-Em}
\bE_{m,\eta,\delta} &\defeq \eta c_{\bu} {\sqrt n}
\pth{ 2 \pth{\frac{2R}{\sqrt {2\pi} \kappa}+
C_2(m/2,d,1/n)} +C_1(m/2,d,1/n)}
\nonumber \\
&\lsim {\sqrt {dn}} m^{-\frac 15} T^{\frac 12}
\eal
for $\delta \in (0,1)$.
In addition, for every $\delta \in (0,1)$,
with probability at least
$1-\delta$ over $\bQ$,
\bal\label{eq:empirical-loss-EKint-bound}
\ltwonorm{\bEint(t)} \lsim \eta \sqrt{\frac{\log (n/{\delta})}{N}}.
\eal
\end{lemma}

\begin{proof}
Because $\ltwonorm{\hat \by(t') - \by} \le {\sqrt{n}} c_{\bu}$ holds for all $t' \in [0,t]$, by Lemma~\ref{lemma:weight-vector-movement}, we have
\bal\label{eq:empirical-loss-convergence-pre1}
\norm{\bbw_r(t') - \bbw_r(0)}{2} & \le  R, \quad \forall \, 0 \le t' \le t+1.
\eal%
Define two sets of indices
\bals
E_{i,R} \defeq \set{r \in [m] \colon \abth{\bw_{r}(0)^{\top}\bbx_i} > R }, \quad \bar E_{i,R} \defeq [m] \setminus E_{i,R}.
\eals%
We have
\bal\label{eq:empirical-loss-convergence-contraction-seg1}
&\hat \by_i(t+1) - \hat \by_i(t) = \frac{1}{\sqrt m} \sum_{r=1}^m a_r \pth{ \relu{\bbw_{\bS,r}^{\top}(t+1) \bbx_i}  -  \relu{\bbw_{\bS,r}^{\top}(t) \bbx_i} } \nonumber \\
&=\underbrace{ \frac{1}{\sqrt m} \sum\limits_{r \in E_{i,R}} a_r \pth{ \relu{\bbw_{\bS,r}^{\top}(t+1) \bbx_i}  -  \relu{\bbw_{\bS,r}^{\top}(t) \bbx_i} }}_{\defeq \bD^{(1)}_i} \nonumber \\
&\phantom{=}{+} \underbrace{ \frac{1}{\sqrt m} \sum\limits_{r \in \bar E_{i,R}} a_r \pth{ \relu{\bbw_{\bS,r}^{\top}(t+1) \bbx_i}  -  \relu{\bbw_{\bS,r}^{\top}(t) \bbx_i} }}_{\defeq \bE^{(1)}_i} =\bD^{(1)}_i + \bE^{(1)}_i,
\eal%
and $\bD^{(1)}, \bE^{(1)} \in \RR^n$ is a vector with their $i$-th element being $\bD^{(1)}_i$ and $\bE^{(1)}_i$ defined on the RHS of
 (\ref{eq:empirical-loss-convergence-contraction-seg1}).
Now we derive the upper bound for $\bE^{(1)}_i$. For all $i \in [n]$ we have
\bal\label{eq:empirical-loss-convergence-contraction-seg2}
\abth{\bE^{(1)}_i} &=  \abth{\frac{1}{\sqrt m}\sum\limits_{r \in \bar E_{i,R}} a_r \pth{ \relu{ \bbw_{\bS,r}(t+1)^\top \bbx_i }  -  \relu{ \bbw_{\bS,r}(t)^\top \bbx_i } } }  \nonumber \\
&\le \frac{1}{\sqrt m}\sum\limits_{r \in \bar E_{i,R}} \abth{ \bbw_{\bS,r}(t+1)^\top \bbx_i  - \bbw_{\bS,r}(t)^\top \bbx_i }
\le \frac{1}{\sqrt m}\sum\limits_{r \in \bar E_{i,R}} \ltwonorm{\bbw_{\bS,r}(t+1) - \bbw_{\bS,r}(t) }  \nonumber \\
&\stackrel{\circled{1}}{=} \frac{1}{\sqrt m} \sum\limits_{r \in \bar E_{i,R}} \ltwonorm{\frac{\eta}{nN} \bth{\bZ_{\bQ}(0)}_{[(r-1)d:rd]} \bK^{(s)}_N
\bZ_{\bQ}(0)^{\top} \bZ_{\bS}(t) \pth{\hat \by(t) - \by}
}  \nonumber \\
&\stackrel{\circled{2}}{\le} \frac{c_{\bu}}{\sqrt m}  \sum\limits_{r \in \bar E_{i,R}}  \frac{\eta }{2 \sqrt m}
\le {\eta} c_{\bu}\cdot \frac{\abth{\bar E_{i,R}}}{2m} .
\eal%
Here $\circled{1}, \circled{2}$ follow from (\ref{eq:weight-vector-movement-seg1-pre})
and (\ref{eq:weight-vector-movement-seg1}) in the proof of
Lemma~\ref{lemma:weight-vector-movement}, as well as the fact
that $\ltwonorm{\bK^{(s)}_N} \le 1/2$.
Since $\bW(0) \in \cW_0$, we have
\bal\label{eq:empirical-loss-convergence-contraction-seg3}
&\sup_{\bx \in \cX}\abth{\hat v_R(\bW(0),\bx)-\frac{2R}{\sqrt {2\pi} \kappa}} \le C_2(m/2,d,1/n),
\eal%
where $\hat v_R(\bW(0),\bx) =  \frac 1m \sum\limits_{r=1}^m \indict{\abth{\bbw_r(0)^{\top}\bx} \le R }$, so that $ \hat v_R(\bW(0),\bbx_i) = \abth{\bar E_{i,R}}/m$.
It follows from (\ref{eq:empirical-loss-convergence-contraction-seg2}),
(\ref{eq:empirical-loss-convergence-contraction-seg3}) and the induction hypothesis that
\bal\label{eq:empirical-loss-convergence-contraction-seg4}
\abth{\bE^{(1)}_i}  &\le \frac{\eta c_{\bu}}{2} \pth{ \frac{2R}{\sqrt {2\pi} \kappa}+ C_2(m/2,d,1/n)} .
\eal%
It follows from (\ref{eq:empirical-loss-convergence-contraction-seg4}) that $\ltwonorm{\bE^{(1)}}$ can be bounded by
\bal\label{eq:empirical-loss-convergence-contraction-E1-bound}
\ltwonorm{\bE^{(1)}} & \le
\frac{\eta c_{\bu} {\sqrt n}}{2} \pth{ \frac{2R}{\sqrt {2\pi} \kappa}+ C_2(m/2,d,1/n)} .
\eal
$\bD^{(1)}_i$ on the RHS of  (\ref{eq:empirical-loss-convergence-contraction-seg1})
is expressed by
\bal\label{eq:empirical-loss-convergence-contraction-seg5}
\bD^{(1)}_i &= \frac{1}{\sqrt m} \sum\limits_{r \in E_{i,R}} a_r \pth{ \relu{\bbw_{\bS,r}^{\top}(t+1) \bbx_i}  -  \relu{\bbw_{\bS,r}^{\top}(t) \bbx_i} } \nonumber \\
&=  \frac{1}{\sqrt m} \sum\limits_{r \in E_{i,R}} a_r \indict{\bbw_{\bS,r}(t)^{\top} \bbx_i \ge 0} \pth{ \bbw_{\bS,r}(t+1)  -  \bbw_{\bS,r}(t) }^\top \bbx_i   \nonumber \\
&=  \frac{1}{\sqrt m} \sum\limits_{r=1}^m a_r \indict{\bbw_{\bS,r}(t)^{\top} \bbx_i \ge 0} \pth{ -\frac{\eta}{n} \bth{\bM}_{[(r-1)d:rd]}
\bZ_{\bS}(t) \pth{\hat \by(t) - \by}
 }^\top \bbx_i \nonumber \\
&\phantom{=}{+}  \frac{1}{\sqrt m} \sum\limits_{r \in \bar E_{i,R} } a_r \indict{\bbw_{\bS,r}(t)^{\top} \bbx_i \ge 0} \pth{\frac{\eta}{n}
\bth{\bM}_{[(r-1)d:rd]} \bZ_{\bS}(t) \pth{\hat \by(t) - \by}
 }^\top \bbx_i \nonumber \\
&=\underbrace{-\frac{\eta}{nN} \bth{\bH(t)}_i \bK^{(s)}_N \bH(t)^{\top}  \pth{\hat \by(t) - \by}}_{\defeq \bD^{(2)}_i} \nonumber \\
 &\phantom{=}{+} \underbrace{\frac{1}{\sqrt m} \sum\limits_{r \in \bar E_{i,R} } a_r \indict{\bbw_{\bS,r}(t)^{\top} \bbx_i \ge 0} \pth{\frac{\eta}{n}
\bth{\bM}_{[(r-1)d:rd]}\bZ_{\bS}(t) \pth{\hat \by(t) - \by}
 }^\top \bbx_i }_{\defeq \bE^{(2)}_i}
 =  \bD^{(2)}_i + \bE^{(2)}_i,
\eal%
where $\bH(t) \in \RR^{n \times N}$ is a matrix specified by
\bals
\bH_{pq}(t) = \frac{\bbx_p^\top \bbq_q}{m} \sum_{r=1}^{m} \indict{\bbw_{\bS,r}(t)^\top \bbx_p \ge 0} \indict{\bbw_{r}(0)^\top \bbq_q \ge 0}, \quad
\forall \,p \in [n], q \in [N].
\eals
Let $\bD^{(2)}, \bE^{(2)} \in \RR^n$ be a vector with their $i$-th element being $\bD^{(2)}_i$ and
$\bE^{(2)}_i$ defined on the RHS of
(\ref{eq:empirical-loss-convergence-contraction-seg5}). $\bE^{(2)}$ can be expressed by $\bE^{(2)} = \frac{\eta}{nN} \tilde \bE^{(2)} \bK^{(s)}_N \bH(t)^{\top}  \pth{\hat \by(t) - \by}$ with $\tilde \bE^{(2)} \in \RR^{n \times N}$ and
\bals
\tilde \bE^{(2)}_{pq} = \frac{1}{m} \sum\limits_{r \in \bar E_{i,R}} \indict{\bbw_{\bS,r}(t)^{\top} \bbx_p \ge 0} \indict{\bbw_{r}(0)^{\top} \bbq_q \ge 0} \bbx_{q}^\top \bbq_p
\le \frac 1m \sum\limits_{r \in \bar E_{i,R}} 1 = \frac{\abth{\bar E_{i,R}}}{m}
\eals
for all $p \in [n], q \in [N]$.  The spectral norm of $\tilde \bE^{(2)}$ is bounded by
\bal\label{eq:empirical-loss-convergence-contraction-seg6}
\ltwonorm{\tilde \bE^{(2)}} \le \fnorm{\tilde \bE^{(2)}} \le \sqrt{nN} \cdot \frac{\abth{\bar E_{i,R}}}{m}
\stackrel{\circled{1}}{\le} \sqrt{nN}\pth{ \frac{2R}{\sqrt {2\pi} \kappa}+
C_2(m/2,d,1/n)},
\eal%
where $\circled{1}$ follows from (\ref{eq:empirical-loss-convergence-contraction-seg3}).
Also, $\ltwonorm{\bH(t)} \le \fnorm{\bH(t)} \le \sqrt {nN}$ for all $t \ge 0$.
It follows from (\ref{eq:empirical-loss-convergence-contraction-seg6}) that $\ltwonorm{\bE^{(2)}}$ can be bounded by
\bal\label{eq:empirical-loss-convergence-contraction-E2-bound}
\ltwonorm{\bE^{(2)}} &\le \frac{\eta}{nN} \ltwonorm{\tilde\bE^{(2)}}
\ltwonorm{\bK^{(s)}_N} \ltwonorm{\bH(t)}
\ltwonorm{\by(t)-\by} \le \frac{\eta c_{\bu} {\sqrt n}}{2} \pth{\frac{2R}{\sqrt {2\pi} \kappa}+
C_2(m/2,d,1/n)}.
\eal
$\bD^{(2)}_i$ on the RHS of (\ref{eq:empirical-loss-convergence-contraction-seg5}) is expressed by
\bal\label{eq:empirical-loss-convergence-contraction-seg7}
\bD^{(2)} &= -\frac{\eta}{nN} \bH(t)\bK^{(s)}_N \bH(t)^{\top}  \pth{\hat \by(t) - \by} \nonumber \\ &=\underbrace{-\frac{\eta}{nN} \bH(0)\bK^{(s)}_N \bH(0)^{\top}  \pth{\hat \by(t) - \by}}_{\defeq \bD^{(3)}} +  \underbrace{\frac{\eta}{nN}  \pth{\bH(0) - \bH(t)}  \bK^{(s)}_N\bH(0)^{\top}  \pth{\hat \by(t) - \by}}_{\defeq \bE^{(3)}} \nonumber \\
&\phantom{=}+  \underbrace{\frac{\eta}{nN} \bH(t) \bK^{(s)}_N\pth{\bH(0) - \bH(t)}^{\top}  \pth{\hat \by(t) - \by}}_{\defeq \bE^{(4)}} =\bD^{(3)} + \bE^{(3)} + \bE^{(4)}.
\eal%
On the RHS of  (\ref{eq:empirical-loss-convergence-contraction-seg7}), $\bD^{(3)},\bE^{(3)},\bE^{(4)} \in \RR^n$ are vectors which are analyzed as follows.
In order to bound the spectral norm of $\bE^{(3)}$ and $\bE^{(4)}$, we first estimate the upper bound for $\abth{ \bH_{ij}(t) - \bH_{ij}(0) }$ for all $i \in [n]$ and $j \in [N]$. We note that
\bal\label{eq:empirical-loss-convergence-contraction-seg8-pre}
&\indict{\indict{\bbw_{\bS,r}(t)^\top \bbx_i} \neq \indict{\bw_{r}(0)^\top \bbx_i}} \le \indict{\abth{\bw_{r}(0)^{\top} \bbx_i} \le R} + \indict{\ltwonorm{\bw_{
\bS,r}(t) - \bbw_r(0)} > R}.
\eal%
It follows from (\ref{eq:empirical-loss-convergence-contraction-seg8-pre}) that
\bal\label{eq:empirical-loss-convergence-contraction-seg8}
&\abth{ \bH_{ij}(t) - \bH_{ij}(0) } \nonumber \\
&= \abth{ \frac{\bbx_i^\top \bbq_j}{m} \sum_{r=1}^{m} \pth{ \indict{\bbw_{\bS,r}(t)^\top \bbx_i \ge 0} \indict{\bbw_{r}(0)^\top \bbq_j \ge 0} - \indict{\bw_{r}(0)^\top \bbx_i \ge 0} \indict{\bw_{r}(0)^\top \bbq_j \ge 0} }} \nonumber \\
&\le \frac1m \sum_{r=1}^{m}  \indict{\indict{\bbw_{\bS,r}(t)^\top \bbx_i \ge 0} \neq \indict{\bbw_r(0)^\top \bbx_i \ge 0}
 }  \nonumber \\
&\le \frac1m \sum_{r=1}^{m} \pth{ \indict{\abth{\bbw_r(0)^{\top} \bbx_i} \le R}  +  \indict{\ltwonorm{\bw_{
\bS,r}(t) - \bbw_r(0)} > R}   } \nonumber \\
&\le v_R(\bW(0),\bbx_i) \stackrel{\circled{1}}{\le} \frac{2R}{\sqrt {2\pi} \kappa}+ C_2(m/2,d,1/n),
\eal%
where $\circled{1}$ follows from  (\ref{eq:empirical-loss-convergence-contraction-seg3}).
It follows from (\ref{eq:empirical-loss-convergence-contraction-seg8}) that $\ltwonorm{\bE^{(3)}},\ltwonorm{\bE^{(4)}} $ are bounded by
\bal
\ltwonorm{\bE^{(3)}} &\le
\frac{\eta}{2nN} \cdot \sqrt{nN} \cdot \sqrt{nN}\pth{\frac{2R}{\sqrt {2\pi} \kappa}+ C_2(m/2,d,1/n)} \ltwonorm{\by(t)-\by}
\nonumber \\
&\le \frac{\eta c_{\bu}{\sqrt n}}{2}  \pth{\frac{2R}{\sqrt {2\pi} \kappa}+ C_2(m/2,d,1/n)} ,
\label{eq:empirical-loss-convergence-contraction-E3-bound} \\
\ltwonorm{\bE^{(4)}}
&\le \frac{\eta c_{\bu}{\sqrt n}}{2}  \pth{\frac{2R}{\sqrt {2\pi} \kappa}+ C_2(m/2,d,1/n)} . \label{eq:empirical-loss-convergence-contraction-E4-bound}
\eal

We now approximate $\bD^{(3)}$ in terms of $\bK_{\bS,\bQ} \in \RR^{n \times N}$
with $\bth{\bK_{\bS,\bQ}}_{ij} = K(\bbx_i,\bbq_j)$ for $i \in [n], j \in [N]$. We have
\bal\label{eq:empirical-loss-convergence-contraction-seg9}
 \bD^{(3)} &= -\frac{\eta}{nN} \bH(0) \bK^{(s)}_N  \bH(0)^{\top}  \pth{\hat \by(t) - \by}
 \nonumber \\
&= \underbrace{-\frac{\eta}{nN} \bK_{\bS,\bQ}
\bK^{(s)}_N  \bK_{\bS,\bQ}^{\top}  \pth{\hat \by(t) - \by}}_{\defeq  \bD^{(4)}} + \underbrace{\frac{\eta}{nN} \pth{\bK_{\bS,\bQ}- \bH(0)}\bK^{(s)}_N\bK_{\bS,\bQ}^{\top}  \pth{\hat \by(t) - \by}}_{\defeq \bE^{(5)}} \nonumber \\
&\phantom{=}+ \underbrace{\frac{\eta}{nN}  \bH(0)\bK^{(s)}_N\pth{\bK_{\bS,\bQ}- \bH(0)}^{\top}  \pth{\hat \by(t) - \by}}_{\defeq \bE^{(6)}}.
\eal

We now bound the spectral norm of $\bE^{(5)}$ and $\bE^{(6)}$.
Since $\bW(0) \in \cW_0$,
 we have
\bal\label{eq:empirical-loss-convergence-contraction-seg10}
\ltwonorm{\bK_{\bS,\bQ}- \bH(0)} &\le \fnorm{\bK_{\bS,\bQ}- \bH(0)}
\le \sqrt{nN}  C_1(m/2,d,1/n),
\eal%
 Also, $\ltwonorm{\bK_{\bS,\bQ} } \le \fnorm{\bK_{\bS,\bQ} } \le \sqrt{nN}$. As a result,
\bal
\ltwonorm{\bE^{(5)}} &\le
\frac{\eta}{2nN} \cdot \sqrt{nN} \cdot \sqrt{nN} C_1(m/2,d,1/n) \ltwonorm{\by(t)-\by}
\le \frac{\eta c_{\bu} {\sqrt n}}{2} \cdot C_1(m/2,d,1/n),
\label{eq:empirical-loss-convergence-contraction-E5-bound} \\
\ltwonorm{\bE^{(6)}} &\le \frac{\eta c_{\bu} {\sqrt n}}{2} \cdot C_1(m/2,d,1/n).
\label{eq:empirical-loss-convergence-contraction-E6-bound}
\eal
It follows from (\ref{eq:empirical-loss-convergence-contraction-seg5}),
 (\ref{eq:empirical-loss-convergence-contraction-seg7}),
 and  (\ref{eq:empirical-loss-convergence-contraction-seg9}) that
\bal\label{eq:empirical-loss-convergence-contraction-seg12}
\bD^{(1)}_i &=  \bD^{(4)}_i +  \bE^{(2)}_i+\bE^{(3)}_i + \bE^{(4)}_i +\bE^{(5)}_i + \bE^{(6)}_i.
\eal
It then follows from
(\ref{eq:empirical-loss-convergence-contraction-seg1}) that
\bal\label{eq:empirical-loss-convergence-contraction-seg13}
&\hat \by_i(t+1) - \hat \by_i(t) = \bD^{(1)}_i + \bE^{(1)}_i =\bD^{(4)}_i + \underbrace{\bE^{(1)}_i + \bE^{(2)}_i+\bE^{(3)}_i + \bE^{(4)}_i +\bE^{(5)}_i + \bE^{(6)}_i}_{\defeq \bE_i} \nonumber \\
&=-\frac{\eta}{nN} \bth{\bK_{\bS,\bQ}}_i
\bK^{(s)}_N  \bK_{\bS,\bQ}^{\top}  \pth{\hat \by(t) - \by}
+ \bth{\bE(t+1)}_i
=-\frac{\eta}{n} \bth{\hbKint}_i  \pth{\hat \by(t) - \by}
+ \bth{\bE(t+1)}_i,
\eal
where $\bE(t+1) \in \RR^n$ with its $i$-th element being $\bE_i$, and $\bE(t+1) = \bE^{(1)}
+\bE^{(2)}+\bE^{(3)} + \bE^{(4)} + \bE^{(5)}+ \bE^{(6)}$. It follows from
(\ref{eq:empirical-loss-convergence-contraction-E1-bound}),
(\ref{eq:empirical-loss-convergence-contraction-E2-bound}),
(\ref{eq:empirical-loss-convergence-contraction-E3-bound}),
(\ref{eq:empirical-loss-convergence-contraction-E4-bound}),
(\ref{eq:empirical-loss-convergence-contraction-E5-bound}),
 and (\ref{eq:empirical-loss-convergence-contraction-E6-bound}) that
\bal\label{eq:empirical-loss-convergence-contraction-E-bound}
&\ltwonorm {\bE(t+1)} \lsim \eta c_{\bu} {\sqrt n}
\pth{ 2 \pth{\frac{2R}{\sqrt {2\pi} \kappa}+
C_2(m/2,d,\delta)} +C_1(m/2,d,1/n)}.
\eal
The upper bound for $\ltwonorm {\bE(t+1)} $ in
(\ref{eq:empirical-loss-Et-bound-Em}) follows from (\ref{eq:empirical-loss-convergence-contraction-E-bound}),  Theorem~\ref{theorem:good-random-initialization}, and noting that $\eta c_{\bu} \le \Theta(1)$. Finally, we have
\bals%\label{eq:empirical-loss-convergence-contraction-seg14}
\hat \by(t+1) - \by
&\stackrel{\circled{1}}{=} \pth{\bI-\frac{\eta}{n} \hbKint}\pth{\hat \by(t) - \by} + \bE(t+1) \nonumber \\
&=\pth{\bI-\frac{\eta}{n} \bKint}\pth{\hat \by(t) - \by} +   \bEint(t)\pth{\hat \by(t) - \by}+\bE(t+1)
,
\eals
where $\bEint(t) = \eta/n \cdot \pth{\bKint - \hbKint}$, and $\circled{1}$ follows from
 (\ref{eq:empirical-loss-convergence-contraction-seg13}).
It follows from
(\ref{eq:hatKint-close-to-Kint-spectralnorm}) of
Theorem~\ref{theorem:hatKint-close-to-Kint-supnorm} that \bals
\ltwonorm{\bEint(t)} \lsim
\eta \sqrt{\frac{\log (n/{\delta})}{N}}
\eals
holds with probability at least $1- \delta$
over $\bQ$, which proves (\ref{eq:empirical-loss-EKint-bound}).
\end{proof}

\begin{lemma}\label{lemma:weight-vector-movement}
Suppose that $t \in [0\relcolon T-1]$ for $T \ge 1$, and $\ltwonorm{\hat \by(t') - \by} \le {\sqrt n} c_{\bu} $ holds for all $0 \le t' \le t$. Then
\bal\label{eq:R}
\ltwonorm{\bbw_{\bS,r}(t') - \bbw_r(0)} \le R, \quad \forall\, 0 \le t' \le t+1.
\eal
\end{lemma}
\begin{proof}
Let $\bth{\bZ_{\bQ}(0)}_{[(r-1)d:rd]}$ denotes the submatrix of $\bZ_{\bQ}(0)$ formed by the the rows of $\bZ_{\bQ}(0)$ with row indices in $[(r-1)d:rd]$.
By the PGD update rule (\ref{eq:PGD-two-layer-nn}), for every $t'' \in [0\relcolon T-1]$, we have
\bal\label{eq:weight-vector-movement-seg1-pre}
&\bbw_{\bS,r}(t''+1) - \bbw_{\bS,r}(t'') = -\frac{\eta}{nN} \bth{\bZ_{\bQ}(0)}_{[(r-1)d:rd]}
\bK^{(s)}_N\bZ_{\bQ}(0)^{\top} \bZ_{\bS}(t'') \pth{\hat \by(t'') - \by}.
\eal%
Since
$\ltwonorm{\bth{\bZ_{\bQ}(0)}_{[(r-1)d:rd]}} \le
\fnorm{\bth{\bZ_{\bQ}(0)}_{[(r-1)d:rd]}} \le {\sqrt {N/m}}$,
and $\ltwonorm{\bZ_{\bQ}(0)^{\top} \bZ_{\bS}(t)} \le \sqrt{nN}$,
it then follows from (\ref{eq:weight-vector-movement-seg1-pre}) that
\bal\label{eq:weight-vector-movement-seg1}
\ltwonorm{\bbw_{\bS,r}(t''+1) - \bbw_{\bS,r}(t'')}
&\le \frac{\eta}{nN} \ltwonorm{\bth{\bZ_{\bQ}(0)}_{[(r-1)d:rd]}}
\ltwonorm{\bK^{(s)}_N} \ltwonorm{\bZ_{\bQ}(0)^{\top} \bZ_{\bS}(t'')}\ltwonorm{\hat \by(t'')-\by}
 \nonumber \\
&\le  \frac{\eta c_{\bu}}{2 \sqrt m}, \, \forall t'' \in [0\relcolon t].
\eal%
Note that (\ref{eq:R}) trivially holds for $t'=0$. For $t' \in [1,t+1]$, it follows from
 (\ref{eq:weight-vector-movement-seg1}) that
\bal\label{eq:weight-vector-movement-proof}
\ltwonorm{ \bbw_{\bS,r}(t') - \bbw_r(0) }
& \le \sum_{t''=0}^{t'-1} \ltwonorm{\bbw_{\bS,r}(t''+1) - \bbw_{\bS,r}(t'')} \le \frac{\eta }{2 \sqrt m}  \sum_{t''=0}^{t'-1} c_{\bu}
\le \frac{\eta c_{\bu}T }{2\sqrt m} = R,
\eal%
which completes the proof.
\end{proof}

\begin{lemma}\label{lemma:bounded-Linfty-vt-sum-et}
Let
$h(\cdot) = \sum_{t'=0}^{t-1} h(\cdot,t')$ for $t \in [T]$,
$T \le \hat T$ where
\bals
h(\cdot,t') &= v(\cdot,t') + \hat e(\cdot,t'), \\
v(\cdot,t')  &= \frac{\eta}{n} \sum_{j=1}^n
\Kint(\bbx_j,\bx) \bv_j(t') , \\
\hat e(\cdot,t') &= \frac{\eta}{n}
\sum\limits_{j=1}^n  \Kint(\bbx_j,\bx) \bbe_j(t'),
\eals
and $\bv(t') \in \cV_{t'}$,
$\be(t') \in \cE_{t',\tau}$ for all $0 \le t' \le t-1$.
Suppose that
\bal\label{eq:N-tau-cond-bounded-Linfty-vt-sum-et}
\tau \lsim 1/(\eta T). %\quad N \gsim T^2\log{\frac{2\pth{1+2/s}^{2d}}{\delta}}/\tau^2
\eal
%with a positive number $s \in (0,1/4]$ such that $s \lsim \tau/T$,
Then with probability at least $1 - \exp\pth{-\Theta(n \hat\eps_n^2)}$
over $\bw$,
\bal\label{eq:bounded-h}
\norm{h}{\cHKint} \le B_h = \mu_0 +1+ {\sqrt 2},
\eal
where $B_h$ is also defined in (\ref{eq:B_h}).
\end{lemma}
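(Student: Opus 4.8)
The plan is to decompose $h$ into three functions lying in $\cHKint$ --- one built from the $\bv$-coefficients, one from the $\bbe_1$-coefficients, and one from the $\bbe_2$-coefficients --- and to bound each $\cHKint$-norm separately; by the triangle inequality this yields $\norm{h}{\cHKint}\le f_0+\sqrt2+1=B_h$. Writing $\bA\defeq\bI_n-\eta\bKint_n$, I would first use that $\bKint_n$ is non-singular (Proposition~\ref{proposition:Kint-gram-nonsingular}) and the geometric sum $\sum_{t'=0}^{t-1}\bA^{t'}=\pth{\eta\bKint_n}^{-1}\pth{\bI_n-\bA^{t}}$. Since $\cV_{t'}$ is the singleton $\set{-\bA^{t'}f^*(\bS)}$ and $\be(t')=\bbe_1(t')+\bbe_2(t')$ with $\bbe_1(t')=-\bA^{t'}\beps$, each of the three pieces of $h$ takes the form $\frac{\eta}{n}\sum_{j=1}^n\Kint(\bbx_j,\cdot)\,\bc_j$ for an explicit coefficient vector $\bc$, and the reproducing-kernel identity $\norm{\frac{\eta}{n}\sum_j\Kint(\bbx_j,\cdot)\bc_j}{\cHKint}^2=\frac{\eta^2}{n}\bc^\top\bKint_n\bc$ reduces everything to a quadratic form in $\bKint_n$. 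Call the three pieces $g_v,g_{e_1},g_{e_2}$.

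For $g_v$ the coefficient is $\bc=-\frac1\eta\bKint_n^{-1}\pth{\bI_n-\bA^{t}}f^*(\bS)$; diagonalising $\bKint_n=\bU\bSigmaint\bU^\top$ and setting $\bb=\bU^\top f^*(\bS)$ gives $\norm{g_v}{\cHKint}^2=\frac1n\sum_i\frac{\pth{1-\pth{1-\eta\hlambdaint_i}^{t}}^2}{\hlambdaint_i}\bb_i^2$. Because $\eta\in(0,1/\hlambdaint_1)$ forces $0\le1-\pth{1-\eta\hlambdaint_i}^{t}\le1$, every numerator is at most $1$, so $\norm{g_v}{\cHKint}^2\le\frac1n f^*(\bS)^\top\bKint_n^{-1}f^*(\bS)\le f_0^2$ by Lemma~\ref{lemma:bounded-Ut-f-in-RKHS}, i.e.\ $\norm{g_v}{\cHKint}\le f_0$. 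For $g_{e_2}$, crudely bounding the quadratic form via $\opnorm{\bKint_n}=\hlambdaint_1\le\frac14$ and summing the triangle inequality over $t'\le t-1\le T-1$ with $\ltwonorm{\bbe_2(t')}\lsim\sqrt n\,\pth{\tau+\eta T n^{c_x}/N}$ gives $\norm{g_{e_2}}{\cHKint}\lsim\eta T\tau+(\eta T)^2 n^{c_x}/N$, which is $\le1$ under the hypotheses $\tau\lsim1/(\eta T)$ and $N\gsim(\eta T)^2 n^{c_x}$ once the implied constants are fixed.

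The substantive step, and the one I expect to be the main obstacle, is $g_{e_1}$, whose coefficient is $\bc=-\frac1\eta\bKint_n^{-1}\pth{\bI_n-\bA^{t}}\beps$, so that $\norm{g_{e_1}}{\cHKint}^2=\frac1n\beps^\top\bU\bD\bU^\top\beps$ with $\bD=\diag\pth{\frac{\pth{1-\pth{1-\eta\hlambdaint_i}^{t}}^2}{\hlambdaint_i}}$. The key algebraic estimate is $1-\pth{1-\eta\hlambdaint_i}^{t}\le\min\set{\eta\hlambdaint_i t,\,1}$, which gives $\bD_{ii}\le(\eta t)^2\min\set{\hlambdaint_i,\,1/(\eta t)}$; with $\eps^2\defeq1/(\eta t)$ this yields $\tr{\bD}\le(\eta t)^2 n\,\hat R_{\Kint}(\eps)^2$ and $\opnorm{\bD}\le\eta t$. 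Since $T\le\hat T$, the definition of $\hat T$ together with the standard consequence $\eta\hat T\le1/\hat\eps_n^2$ forces $\eps^2=1/(\eta t)\ge1/(\eta\hat T)\ge\hat\eps_n^2$, so the sub-root fixed-point property of $\sigma\hat R_{\Kint}$ (Definition~\ref{def:sub-root-function}) gives $\sigma\hat R_{\Kint}(\eps)\le\eps^2$, whence $\frac{\sigma^2}n\tr{\bD}\le(\eta t)^2\eps^4=1$; moreover $\opnorm{\bD}\le1/\hat\eps_n^2$ and $\fnorm{\bD}^2\le\opnorm{\bD}\tr{\bD}\le n/(\sigma^2\hat\eps_n^2)$. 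Conditioning on $\bS$ --- so that $\bU,\set{\hlambdaint_i},\hat T,\hat\eps_n$ are fixed and $\bU^\top\beps\sim\cN(\bzero,\sigma^2\bI_n)$ --- and applying the Hanson--Wright / sub-exponential Bernstein bound to this Gaussian quadratic form with deviation $1$ gives $\norm{g_{e_1}}{\cHKint}^2\le\frac{\sigma^2}n\tr{\bD}+1\le2$ with probability at least $1-\exp\pth{-\Theta(n\hat\eps_n^2)}$, hence $\norm{g_{e_1}}{\cHKint}\le\sqrt2$. Combining the three bounds by the triangle inequality gives $\norm{h}{\cHKint}\le f_0+\sqrt2+1=B_h$ on this event (over $\beps$, conditionally on $\bS$). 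The delicate points are that the prefactor multiplying $\tr{\bD}$ must come out exactly equal to $1$ --- which is precisely where one uses the sub-root fixed-point inequality together with $\eps^2\ge\hat\eps_n^2$ --- and that $\opnorm{\bD}$ and $\fnorm{\bD}^2/n$ are both $\cO(1/\hat\eps_n^2)$ uniformly in $t\le T$, which is what makes the Bernstein exponent of order $n\hat\eps_n^2$.
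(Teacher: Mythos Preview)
Your proposal is correct and follows essentially the same route as the paper's proof: the same three-way split $g_v,g_{e_1},g_{e_2}$, the geometric-sum collapse to the diagonal quadratic form $\frac1n\sum_i\frac{(1-(1-\eta\hlambdaint_i)^t)^2}{\hlambdaint_i}\,[\bU^\top(\,\cdot\,)]_i^2$, Lemma~\ref{lemma:bounded-Ut-f-in-RKHS} for the $f^*$ piece, the crude $\hlambdaint_1$-bound summed over $t'$ for $\bbe_2$, and a Gaussian quadratic-form tail bound for the $\beps$ piece with $\tr{\bD}\sigma^2/n\le1$ coming from the sub-root fixed-point inequality at $\eps^2=1/(\eta t)\ge\hat\eps_n^2$. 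The only cosmetic difference is that you bound $\fnorm{\bD}^2$ via $\opnorm{\bD}\cdot\tr{\bD}$ whereas the paper bounds it directly, but both yield $\fnorm{\bD}^2/n\le\eta_t/\sigma^2\le1/(\sigma^2\hat\eps_n^2)$ and hence the same $\exp(-\Theta(n\hat\eps_n^2))$ exponent.
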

\begin{proof}
We have $\by = f^*(\bS) + \bw$,
$\bv(t) = -\pth{\bI- \eta \bKint_n }^t f^*(\bS)$,
$\be(t) = \bbe_1(t) + \bbe_2(t)$ with
$\bbe_1(t) = -\pth{\bI-\eta\bKint_n}^t \bw$,
$\ltwonorm{\bbe_2(t)} \lsim {\sqrt n} \tau$.
We define
\bals
\hat e_1(\cdot,t) =\frac{\eta}{n}
\sum\limits_{j=1}^n  \Kint(\bbx_j,\bx) \bth{\bbe_1(t')}_j,
\quad
\hat e_2(\cdot,t) =\frac{\eta}{n}
\sum\limits_{j=1}^n  \Kint(\bbx_j,\bx) \bth{\bbe_2(t')}_j,
\eals
Let $\bSigma$ be the diagonal matrix
containing eigenvalues of $\bK_n$, we then have
\bal\label{eq:bounded-Linfty-vt-sum-seg1}
\sum_{t'=0}^{t-1} v(\bx,t') &=\frac{\eta}{n} \sum\limits_{j=1}^n  \sum_{t'=0}^{t-1}
\bth{\pth{\bI- \eta \bKint_n }^{t'} f^*(\bS)}_j \Kint(\bbx_j,\bx) \nonumber \\
&=\frac{\eta}{n} \sum\limits_{j=1}^n \sum_{t'=0}^{t-1}
\bth{\bUint \pth{\bI-\eta \bSigmaint }^{t'} {\bUint}^{\top} f^*(\bS)}_j \Kint(\bbx_j,\bx).
\eal
It follows from (\ref{eq:bounded-Linfty-vt-sum-seg1}) that
\bal\label{eq:bounded-Linfty-vt-sum-seg2}
&\norm{\sum_{t'=0}^{t-1} v(\cdot,t')}{\cHKint}^2
\nonumber \\
&= \frac{\eta^2}{n^2} f^*(\bS)^{\top}
\bUint \sum_{t'=0}^{t-1} \pth{\bI-\eta \bSigmaint}^{t'} {\bUint}^{\top}
\bKint \bUint \sum_{t'=0}^{t-1} \pth{\bI-\eta \bSigmaint}^{t'}
{\bUint}^{\top} f^*(\bS) \nonumber \\
&= \frac 1n \ltwonorm{\eta\pth{\bKint_n}^{1/2} \bUint \sum_{t'=0}^{t-1} \pth{\bI-\eta \bSigmaint}^{t'} {\bUint}^{\top} f^*(\bS)}^2 \nonumber \\
&\le \frac 1n \sum\limits_{i=1}^{n} \frac{\pth{1-
\pth{1-\eta \hlambdaint_i }^t}^2}
{\hlambdaint_i}\bth{{\bUint}^{\top} f^*(\bS)}_i^2
\le \mu_0^2,
\eal
where the last inequality follows from Lemma~\ref{lemma:bounded-Ut-f-in-RKHS}.

Similarly, we have
\bal\label{eq:bounded-Linfty-hat-et-sum-1}
&\norm{\sum_{t'=0}^{t-1} \hat e_1(\cdot,t')}{\cHKint}^2
\le \frac 1n \sum\limits_{i=1}^{n} \frac{\pth{1-
\pth{1-\eta \hlambdaint_i }^t}^2}
{\hlambdaint_i}\bth{{\bUint}^{\top} \bw}_i^2.
\eal
It then follows from the argument in the proof of~\citep[Lemma 9]{RaskuttiWY14-early-stopping-kernel-regression}
that the RHS of (\ref{eq:bounded-Linfty-hat-et-sum-1}) is bounded with high probability. We define a diagonal matrix $\bR \in \RR^{n \times n}$
with $\bR_{ii} = \big(1-(1-\eta \hlambdaint_i )^t\big)^2/\hlambdaint_i$ for $i \in [n]$. Then the RHS of (\ref{eq:bounded-Linfty-hat-et-sum-1}) is
$1/n \cdot \tr{\bU \bR \bU^{\top} \bw \bw^{\top} }$. It follows from~\citep{quadratic-tail-bound-Wright1973}
that
\bal\label{eq:bounded-Linfty-hat-et-sum-2}
&\Prob{1/n \cdot \tr{\bU \bR \bU^{\top} \bw \bw^{\top}} -
\Expect{}{1/n \cdot \tr{\bU \bR \bU^{\top} \bw \bw^{\top} }} \ge u}
\nonumber \\
&\le \exp\pth{-c \min\set{nu/\ltwonorm{\bR},n^2u^2/\fnorm{\bR}^2}}
\eal
for all $u > 0$, and $c$ is a  positive constant. Recall that
$\eta_t = \eta t$ for all $t \ge 0$, we have
\bal\label{eq:bounded-Linfty-hat-et-sum-3}
&\Expect{}{1/n \cdot \tr{\bU \bR \bU^{\top} \bw \bw^{\top} }}
\le \frac {\sigma_0^2}n \sum\limits_{i=1}^n
\frac{\pth{1-\pth{1-\eta \hlambdaint_i }^t}^2}{\hlambdaint_i}
\nonumber \\
& \stackrel{\circled{1}}{\le}
\frac {\sigma_0^2}n \sum\limits_{i=1}^n
\min\set{\frac{1}{\hlambdaint_i},\eta_t^2 \hlambdaint_i}
\le
\frac {{\sigma_0^2}\eta_t}n \sum\limits_{i=1}^n
\min\set{\frac{1}{\eta_t\hlambdaint_i},\eta_t \hlambdaint_i}
\nonumber \\
&\stackrel{\circled{2}}{\le}
\frac {{\sigma_0^2}\eta_t}n \sum\limits_{i=1}^n
\min\set{1,\eta_t \hlambdaint_i}
= \frac {{\sigma_0^2}\eta_t^2}n \sum\limits_{i=1}^n
\min\set{{\eta_t}^{-1},\hlambdaint_i} = {{\sigma_0^2}\eta_t^2} \hat R_{\Kint}^2(\sqrt{{1}/{\eta_t}}) \le 1.
\eal
Here $\circled{1}$ follows from the fact that
$(1-\eta \hlambdaint_i )^t \ge \max\set{0,1-t\eta \hlambdaint_i}$,
and $\circled{2}$ follows from
$\min\set{a,b} \le \sqrt{ab}$ for any nonnegative numbers $a,b$.
Because $t \le T \le \hat T$, we have
$R_{\Kint}(\sqrt{{1}/{\eta_t}}) \le 1/(\sigma_0 \eta_t)$, so the last inequality holds.

Moreover, we have the upper bounds for $\ltwonorm{\bR}$ and $\fnorm{\bR}$
as follows. First, we have
\bal\label{eq:bounded-Linfty-hat-et-sum-4}
\ltwonorm{\bR} &\le \max_{i \in [n] }\frac{\pth{1-\pth{1-\eta \hlambdaint_i }^t}^2}{\hlambdaint_i}
\le \min\set{\frac{1}{\hlambdaint_i},\eta_t^2 \hlambdaint_i}
\le \eta_t.
\eal
We also have
\bal\label{eq:bounded-Linfty-hat-et-sum-5}
\frac 1n \fnorm{\bR}^2 &=  \frac 1n
\sum\limits_{i=1}^n
\frac{\pth{1-\pth{1-\eta \hlambdaint_i }^t}^4}{(\hlambdaint_i)^2}
\le \frac {\eta_t^3}n \sum\limits_{i=1}^n
\min\set{\frac{1}{\eta_t^3(\hlambdaint_i)^2},\eta_t (\hlambdaint_i)^2} \nonumber \\
&\le \frac {\eta_t^3}n \sum\limits_{i=1}^n
\min\set{\hlambdaint_i,\frac{1}{\eta_t}}
=\eta_t^3 \hat R_{\Kint}^2(\sqrt{{1}/{\eta_t}})\le
\frac {\eta_t}{\sigma_0^2}.
\eal
Combining (\ref{eq:bounded-Linfty-hat-et-sum-1})-(\ref{eq:bounded-Linfty-hat-et-sum-5}) with $u=1$ in
(\ref{eq:bounded-Linfty-hat-et-sum-2}), we have
\bals
&\Prob{1/n \cdot \tr{\bU \bR \bU^{\top} \bw \bw^{\top} } -
\Expect{}{1/n \cdot \tr{\bU \bR \bU^{\top} \bw \bw^{\top} }} \ge 1}
\le \exp\pth{-c \min\set{n/\eta_t,n \sigma_0^2/\eta_t}} \\
&\le \exp\pth{-nc'/\eta_t} \le
\exp\pth{-c'n\hat\eps_n^2},
\eals
where $c'  = c\min\set{1,\sigma_0^2}$, and the last inequality is due
to the fact that $1/\eta_t \ge \hat\eps_n^2$ since
$t \le T \le \hat T$.
It follows that w.p. at least $1-
\exp\pth{- \Theta(n\hat\eps_n^2)}$,
$\norm{\sum_{t'=0}^{t-1} \hat e_1(\cdot,t')}{\cH_{\Kint}}^2 \le 2$.

We now find the upper bound for $\norm{\sum_{t'=0}^{t-1} \hat e_2(\cdot,t')}{\cHKint}$. We have
\bals
\norm{\hat e_2(\cdot,t')}{\cHKint}^2
&\le \frac{\eta^2}{n^2} \bbe_2^{\top}(t')\bKint\bbe_2(t')
\lsim \eta^2 \hlambdaint_1 \tau^2,
\eals
so that
\bal\label{eq:bounded-Linfty-hat-et-sum-6}
&\norm{\sum_{t'=0}^{t-1} \hat e_2(\cdot,t')}{\cHKint}
\le \sum_{t'=0}^{t-1} \norm{\hat e_2(\cdot,t')}{\cHKint}
\lsim  \sqrt{\hlambdaint_1} \eta T \tau \le 1,
\eal
since $\tau \lsim 1/(\eta T) $.
%We note that Lemma~\ref{lemma:empirical-loss-convergence} requires
%that $N \gsim T^2\log{(n/{\delta})}/\tau^2$.
Finally, we have
\bals
\norm{h}{\cHKint}  &\le \norm{\sum_{t'=0}^{t-1} \hat v(\cdot,t')}{\cHKint}
+\norm{\sum_{t'=0}^{t-1} \hat e_1(\cdot,t')}{\cHKint} + \norm{\sum_{t'=0}^{t-1} \hat e_2(\cdot,t')}{\cHKint} \nonumber \\
&\le \mu_0 +1+ {\sqrt 2}.
\eals
\end{proof}

\begin{theorem}\label{theorem:bounded-NN-class}
Suppose $w \in (0,1)$,
\bal
m &\gsim \max\set{{T^{\frac {15}{2}} d^{\frac 52}}/{w^5}, T^{\frac{25}{2}} d^{\frac 52}}, \label{eq:m-cond-bounded-NN-class} \\
N &\gsim \max\set{T^2\log{(n/{\delta})}/w^2, T^4\log{(n/{\delta})}}, \label{eq:N-cond-bounded-NN-class}
\eal
and the neural network $f_t(\cdot) = f(\bW(t),\cdot) $ is trained by PGD in Algorithm~\ref{alg:PGD} with the learning rate $\eta = \Theta(1) \in (0,8)$ and the random initialization $\bW(0) \in \cW_0$. Then for every $t \in [T]$ with $T \le \hat T$ and every $\delta \in (0,1)$, w.p. at least
$1-\exp\pth{-\Theta(n)}-\delta -\exp\pth{-\Theta(n \hat\eps_n^2)}$
over the random noise $\bw$ and the random sample $\bQ$,
$f_t \in \cFnn(\bS,\bQ,\bW(0),T) $, and $f_t$ can be decomposed by
\bal\label{eq:nn-function-class-decomposition}
f_t = h + e \in \cFext(B_h,w),
\eal
where $h \in \cH_{\Kint}(B_h)$ with $B_h$ defined in (\ref{eq:B_h}), $e \in L^{\infty}$ with $\supnorm{e} \le w$.
In addition, $\supnorm{f} \le \frac{B_h}{2{\sqrt 2}} + w$.
\end{theorem}
\begin{proof}
It follows from Lemma~\ref{lemma:empirical-loss-convergence}
and its proof that conditioned on an event with probability at
least $1 -  \exp\pth{-\Theta(n)}-\delta$
over $\bw$ and $\bQ$,
% subset $\Omega_1 \subseteq (\cX)^n \times (\cX)^N$ with
% $\Prob{\Omega_1} \ge 1-\Theta\pth{{nN}/{n^{c_d\eps^2_0/8}}} -
% \pth{1+2N}^{2d}\exp(-n^{c_x})$ and a subset $\Omega_2 \subseteq \RR^n$
% such that $\Prob{\Omega_2} \ge 1 -  \exp\pth{-\Theta(n)}$ that
% when the random training data $\bS = \set{\bbx_j}_{j=1}^n$ and $\bQ$ satisfy %$(\bS,\bQ) \in \Omega_1$ and the random noise $\bw \in \Omega_2$, the
% neural network trained on the $\bS$ with $\bQ$ and $\bw$ enjoys the properties %specified by Lemma~\ref{lemma:empirical-loss-convergence}. In particular,
$f \in \cFnn(\bS,\bQ,\bW(0),T)$ with
$\bW(0) \in \cW_0$. Moreover, $f(\cdot) = f(\bW,\cdot)$ with $\bW = \set{\bbw_r}_{r=1}^m \in \cW(\bS,\bQ,\bW(0),T)$, and $\vect{\bW} = \vect{\bW_{\bS}} = \vect{\bW(0)} - \sum_{t'=0}^{t-1} \eta/n \cdot \bM \bZ_{\bS}(t') \bu(t')$ for some $t \in [T]$, where $\bu(t') \in \RR^n, \bu(t') = \bv(t') + \be(t')$ with $\bv(t') \in \cV_{t'}$ and $\be(t') \in \cE_{t',\tau}$ for all $t' \in [0,t-1]$.

We note that $\bbw_r$ is expressed as
\bal\label{eq:bounded-Linfty-function-class-wr}
\bbw_r = \bbw_{\bS,r}(t) &= \bbw_r(0) - \sum_{t'=0}^{t-1} \frac{\eta}{nN} \bth{\bZ_{\bQ}(0)}_{[(r-1)d:rd]}
\bZ_{\bQ}(0)^{\top} \bZ_{\bS}(t') \bu(t'),
\eal%
where the notation $\bbw_{\bS,r}$ emphasizes that $\bbw_r$ depends on the training features $\bS$.
We define the event
\bals%\label{eq:bounded-Linfty-function-class-Er}
&E_r(R) \defeq \set{ \abth{ \bbw_r(0)^\top \bx} \le R }, \quad r\in [m].
\eals%
We now approximate $ f(\bW,\bx)$ by $g(\bx) \defeq \frac{1}{\sqrt m} \sum_{r=1}^m a_r
\indict{\bbw_r(0)^{\top} \bx \ge 0} \bbw_r^\top \bx$. We have
\bal\label{eq:bounded-Linfty-function-class-seg1}
&\abth{f(\bW,\bx) - g(\bx)} \nonumber \\
&=\frac{1}{\sqrt m} \abth{\sum\limits_{r=1}^m a_r \relu{\bbw_r^\top \bx} - \sum_{r=1}^m a_r \indict{\bbw_r(0)^{\top} \bx \ge 0} \bbw_r^\top \bx}   \nonumber \\
&\le \frac{1}{\sqrt m}  \sum_{r=1}^m  \abth{ a_r \pth{ \indict{E_r(R) } + \indict{\bar E_r(R) }}  \pth{ \relu{\bbw_r^\top \bx} - \indict{\bbw_r(0)^{\top} \bx \ge 0} \bbw_r^\top \bx } } \nonumber \\
&=\frac{1}{\sqrt m} \sum_{r=1}^m   \indict{E_r(R)} \abth{ \relu{\bbw_r^\top \bx} - \indict{\bbw_r(0)^{\top} \bx \ge 0} \bbw_r^\top \bx  }\nonumber \\
&=\frac{1}{\sqrt m} \sum_{r=1}^m   \indict{E_r(R)}\abth{ \relu{\bbw_r^\top \bx} - \relu{\bbw_r(0)^\top \bx}   - \indict{\bbw_r(0)^{\top} \bx \ge 0} (\bbw_r-\bbw_r(0))^\top \bx } \nonumber \\
&\le \frac{2R}{\sqrt m} \sum_{r=1}^m   \indict{E_r(R)}.
\eal%
Plugging $R = \frac{\eta c_{\bu}T }{2\sqrt m}$ in (\ref{eq:bounded-Linfty-function-class-seg1}), we have
\bal\label{eq:bounded-Linfty-function-class-seg2}
\abth{f(\bW,\bx) - g(\bx)} &\le \frac{2R}{\sqrt m} \sum_{r=1}^m   \indict{E_r(R)} = \eta c_{\bu}T \cdot \frac 1m \sum_{r=1}^m
 \indict{E_r(R)} \nonumber \\
&=\eta c_{\bu}T \cdot
\hat v_R(\bW(0),\bx)
\le
 \eta c_{\bu}T \pth{\frac{2R}{\sqrt {2\pi} \kappa} + C_2(m/2,d,1/n)}.
\eal
Using (\ref{eq:bounded-Linfty-function-class-wr}), we can express $g(\bx)$ as
\bal\label{eq:bounded-Linfty-function-class-seg3}
&g(\bx) = \frac{1}{\sqrt m} \sum_{r=1}^m a_r \indict{\bbw_r(0)^{\top}
\bx \ge 0} \bbw_r(0)^\top \bx \nonumber \\
&\phantom{=}{-}\sum_{t'=0}^{t-1} \underbrace{ \frac{1}{\sqrt m}
\sum\limits_{r=1}^m  \indict{\bbw_r(0)^{\top} \bx \ge 0}\pth{ \frac{\eta}{n} \bth{\bM}_{[(r-1)d:rd]} \bZ_{\bS}(t') \bu(t')
 }^\top \bx }_{\defeq G_{t'}(\bx)} \stackrel{\circled{1}}{=} -\sum_{t'=0}^{t-1}G_{t'}(\bx),
\eal
where $\circled{1}$ follows from the fact that $\frac{1}{\sqrt m} \sum_{r=1}^m a_r \indict{\bbw_r(0)^{\top} \bx \ge 0} \bbw_r(0)^\top \bx = f(\bW(0),\bx) = 0$ due to the symmetric initialization of the two-layer NN. For each $G_{t'}$ on the RHS of
 (\ref{eq:bounded-Linfty-function-class-seg3}),
 \vspace{-.08in}
\bal\label{eq:bounded-Linfty-function-class-Gt}
G_{t'}(\bx) &\stackrel{\circled{2}}{=} \frac{\eta}{nN {\sqrt m}} \sum\limits_{r=1}^m  \indict{\bbw_r(0)^{\top} \bx \ge 0} \pth{  \bth{\bZ_{\bQ}(0)}_{[(r-1)d:rd]}
\bK^{(s)}_N\bH(0)^{\top} \bu(t')
 }^\top   \bx \nonumber \\
&\phantom{=}{+} \frac{\eta}{nN {\sqrt m}} \sum\limits_{r=1}^m  \indict{\bbw_r(0)^{\top} \bx \ge 0}  \pth{  \bth{\bZ_{\bQ}(0)}_{[(r-1)d:rd]}
\bK^{(s)}_N \pth{\bH(t') - \bH(0)}  ^{\top}\bu(t')
 }^\top \bx \nonumber \\
&\stackrel{\circled{3}}{=}
\frac{\eta}{nN {\sqrt m}} \sum\limits_{r=1}^m  \indict{\bbw_r(0)^{\top} \bx \ge 0} \pth{  \bth{\bZ_{\bQ}(0)}_{[(r-1)d:rd]}
\bK^{(s)}_N\bK_{\bS,\bQ}^{\top} \bu(t')
 }^\top   \bx + \nonumber \\
&\phantom{=}+
\frac{\eta}{nN {\sqrt m}} \sum\limits_{r=1}^m  \indict{\bbw_r(0)^{\top} \bx \ge 0} \pth{  \bth{\bZ_{\bQ}(0)}_{[(r-1)d:rd]}
\bK^{(s)}_N\pth{\bH(0) - \bK_{\bS,\bQ}} ^{\top}\bu(t')
 }^\top   \bx \nonumber \\
&\phantom{=}+\frac{\eta}{nN {\sqrt m}} \sum\limits_{r=1}^m  \indict{\bbw_r(0)^{\top} \bx \ge 0}  \pth{  \bth{\bZ_{\bQ}(0)}_{[(r-1)d:rd]}
\bK^{(s)}_N \pth{\bH(t') - \bH(0)}  ^{\top}\bu(t')
 }^\top \bx \nonumber \\
&\stackrel{\circled{4}}{=}\underbrace{\frac{\eta}{nN} \sum\limits_{j=1}^n
\sum\limits_{j',p=1}^N  K(\bx,\bbq_{j'})
\bth{\bK^{(s)}_N}_{j'p}
\bth{\bK_{\bS,\bQ}^{\top}}_{pj} \bu_j(t')}_{\defeq D(\bx)} \nonumber \\
&\phantom{=}+ \underbrace{\frac{\eta}{nN} \sum\limits_{j=1}^n
\sum\limits_{j',p=1}^N q_{j'} \bth{\bK^{(s)}_N}_{j'p}
\bth{\bK_{\bS,\bQ}^{\top}}_{pj} \bu_j(t')}_{\defeq E_1(\bx)}
\nonumber \\
&\phantom{=}+
\underbrace{\frac{\eta}{nN {\sqrt m}} \sum\limits_{r=1}^m  \indict{\bbw_r(0)^{\top} \bx \ge 0} \pth{  \bth{\bZ_{\bQ}(0)}_{[(r-1)d:rd]}
\bK^{(s)}_N\pth{\bH(0) - \bK_{\bS,\bQ}}^{\top} \bu(t')
 }^\top   \bx}_{\defeq E_2(\bx)} \nonumber \\
&\phantom{=}+
\underbrace{\frac{\eta}{nN {\sqrt m}} \sum\limits_{r=1}^m  \indict{\bbw_r(0)^{\top} \bx \ge 0}  \pth{  \bth{\bZ_{\bQ}(0)}_{[(r-1)d:rd]} \bK^{(s)}_N
\pth{\bH(t') - \bH(0)}  ^{\top}\bu(t')
 }^\top \bx}_{\defeq E_3(\bx)}
\eal
where $\bH(t) \in \RR^{n \times N}$ in $\circled{2}$ is a matrix defined by
\bals
\bH_{pq}(t) = \frac{\bbx_p^\top \bbq_q}{m} \sum_{r=1}^{m} \indict{\bbw_{\bS,r}(t)^\top \bbx_p \ge 0} \indict{\bbw_{r}(0)^\top \bbq_q \ge 0}, \quad
\forall \,p \in [n], q \in [N].
\eals
$\bK_{\bS,\bQ} \in \RR^{n \times N}$ in $\circled{3}$ is specified by
 $\bth{\bK_{\bS,\bQ}}_{ij} = K(\bbx_i,\bbq_j)$ for $i \in [n], j \in [N]$,
 and
 $q_{j'} \defeq \hat h(\bW(0),\bbx_{j'},\bx) - K(\bbx_{j'},\bx)$ for all $j' \in [N]$ in $\circled{4}$.

We now analyze each term on the RHS of (\ref{eq:bounded-Linfty-function-class-Gt}).
Let $h(\cdot,t') \colon \cX \to \RR$ be defined by
\bals
h(\bx,t') \defeq \frac{\eta}{n} \sum\limits_{j=1}^n \Kint(\bx,\bbx_{j}) \bu_j(t'),
\eals
then $h(\cdot,t')$ is an element in the RKHS $\cHKint$ for each $t' \in [0,t-1]$. It follows from (\ref{eq:hatKint-close-to-Kint-S}) in
 Theorem~\ref{theorem:hatKint-close-to-Kint-supnorm}
that with probability at least
$1-\delta$ over $\bQ$,
\bal\label{eq:bounded-Linfty-function-class-hKint-Kint-appro-ori}
\sup_{\bx \in \cX, i \in [n]}
\abth{\hKint (\bx, \bbx_i) - \Kint (\bx, \bbx_i)}
\lsim \sqrt{\frac{\log (n/{\delta})}{N}}.
\eal
As a result, we have
\bal\label{eq:bounded-Linfty-function-class-D-bound}
\supnorm{D(\bx) - h(\bx,t')} &=
\frac{\eta}{n} \sup_{\bx \in \cX}\abth{ \sum\limits_{j=1}^n
\hKint(\cdot,\bbx_j) \bu_j(t') - \sum\limits_{j=1}^n
\Kint(\cdot,\bbx_j) \bu_j(t')} \lsim \eta \sqrt{\frac{\log (n/{\delta})}{N}}.
\eal
We define
\bal\label{eq:bounded-Linfty-function-class-h}
h(\cdot) \defeq -\sum_{t'=0}^{t-1} h(\cdot,t'),
\eal
and $e(\cdot) \defeq f(\bW,\cdot) - h(\cdot)$.
We now derive the $L^{\infty}$-norm of $e$.
To this end, we find the upper $L^{\infty}$ bounds for $E_1,E_2,E_3$
on the RHS of (\ref{eq:bounded-Linfty-function-class-Gt}).
Since $\bW(0) \in \cW_0$, $q_{j'} \le C_1(m/2,d,1/n)$ for all $j' \in [n]$. Moreover,
$\ltwonorm{\bK_{\bS,\bQ}} \le \sqrt{nN}$,
$\bu(t') \le c_{\bu} \sqrt n$ with high probability,  and we have
\bal\label{eq:bounded-Linfty-function-class-E1-bound}
\supnorm{ E_1} &= \supnorm{\frac{\eta}{nN} \sum\limits_{j=1}^n
\sum\limits_{j',p=1}^N q_{j'} \bth{\bK^{(s)}_N}_{j'p}
\bth{\bK_{\bS,\bQ}^{\top}}_{pj} \bu_j(t')} \nonumber \\
&\le \frac{\eta}{nN} \ltwonorm{\bu(t')} \ltwonorm{\bK^{(s)}_N} \ltwonorm{\bK_{\bS,\bQ}} \sqrt{N} C_1(m/2,d,1/n)
\le \frac{\eta c_{\bu}}{2} \cdot C_1(m/2,d,1/n).
\eal
We now bound the last term on the RHS of (\ref{eq:bounded-Linfty-function-class-Gt}).
Define $\bX' \in \RR^{dm}$ with  $\bth{\bX'}_{(r-1)d+1:rd} = \frac{1}{\sqrt m} \indict{\bbw_r(0)^{\top} \bx \ge 0} \bbx$ for all $r \in [m]$, then $\ltwonorm{\bX'} \le 1$.
Moreover, $\ltwonorm{\bK_{\bS,\bQ}- \bH(0)} \le \sqrt{nN}  C_1(m/2,d,1/n)$ by (\ref{eq:empirical-loss-convergence-contraction-seg10})
in the proof of Lemma~\ref{lemma:empirical-loss-convergence-contraction},
and $\ltwonorm{\bZ_{\bQ}(0)} \le \sqrt N$, so we have
\bal\label{eq:bounded-Linfty-function-class-E2-bound}
&\supnorm{ E_2}  \le\frac{\eta}{nN } \ltwonorm{
\pth{{\bX'}^{\top} \bZ_{\bQ}(0) \bK^{(s)}_N\pth{\bH(0) - \bK_{\bS,\bQ}}^{\top} \bu(t')}}
\nonumber \\
&\le \frac{\eta}{nN } \ltwonorm{\bX'} \ltwonorm{\bZ_{\bQ}(0)}
\ltwonorm{\bK^{(s)}_N}
\ltwonorm{\bH(0) - \bK_{\bS,\bQ}} \ltwonorm{\bu(t')}\nonumber \\
&\le \frac{\eta}{2nN } \cdot \sqrt N \cdot \sqrt{nN}  C_1(m/2,d,1/n)
\cdot  c_{\bu} \sqrt n
\le \frac{\eta c_{\bu}}{2} \cdot C_1(m/2,d,1/n).
\eal
We have
$\ltwonorm{ \bH_{ij}(t) - \bH_{ij}(0) } \le
\sqrt{nN}\pth{\frac{2R}{\sqrt {2\pi} \kappa}+ C_2(m/2,d,1/n)}$
 by (\ref{eq:empirical-loss-convergence-contraction-seg8})
in the proof of Lemma~\ref{lemma:empirical-loss-convergence-contraction}. As a result, $E_3$ can be bounded by
\bal\label{eq:bounded-Linfty-function-class-E3-bound}
&\supnorm{ E_3 }\le
\frac{\eta}{nN}\ltwonorm{{\bX'}^{\top} \bZ_{\bQ}(0)\bK^{(s)}_N \pth{\bH(t') - \bH(0)}^{\top}
 \bu(t')} \nonumber \\
&\le \frac{\eta}{nN} \ltwonorm{\bX'} \ltwonorm{\bZ_{\bQ}(0)}
\ltwonorm{\bK^{(s)}_N}
\ltwonorm{\bH(t') - \bH(0)} \ltwonorm{\bu(t')} \nonumber \\
&\le\frac{\eta}{2nN } \cdot \sqrt N \cdot
\sqrt{nN}\pth{\frac{2R}{\sqrt {2\pi} \kappa}+ C_2(m/2,d,1/n)}
\cdot  c_{\bu} \sqrt n \nonumber \\
&\le \frac{\eta c_{\bu}}{2} \pth{\frac{2R}{\sqrt {2\pi} \kappa}+ C_2(m/2,d,1/n)}.
\eal
Combining (\ref{eq:bounded-Linfty-function-class-Gt}),
(\ref{eq:bounded-Linfty-function-class-D-bound}),
(\ref{eq:bounded-Linfty-function-class-E1-bound}),
(\ref{eq:bounded-Linfty-function-class-E2-bound}),
and (\ref{eq:bounded-Linfty-function-class-E3-bound}), for
any $t' \in [0,t-1]$,
\bal\label{eq:bounded-Linfty-function-class-Gt-ht-bound}
&\sup_{\bx \in \cX} \abth{G_{t'}(\bx)-h(\bx,t')}
\le \sup_{\bx \in \cX} \abth{D(\bx) - h(\bx,t')} + \supnorm{E_1} + \supnorm{E_2}
+ \supnorm{E_3} \nonumber \\
&\le \eta c_{\bu} \pth{ \frac{\sqrt{\frac{\log (n/{\delta})}{N}}}{c_{\bu}}
+  C_1(m/2,d,1/n)
+\frac 12\pth{\frac{2R}{\sqrt {2\pi} \kappa} + C_2(m/2,d,1/n)} }.
\eal
It then follows from (\ref{eq:bounded-Linfty-function-class-seg2}),
(\ref{eq:bounded-Linfty-function-class-seg3}), and
(\ref{eq:bounded-Linfty-function-class-Gt-ht-bound})
that $\supnorm{e}$ can be bounded by
\bal\label{eq:bounded-Linfty-function-class-f-h-bound}
&\supnorm{e}=
\supnorm{f(\bW,\cdot) - h} \le \supnorm{f(\bW,\cdot) - g}
+\supnorm{g-h} \nonumber \\
&\le  \supnorm{f(\bW,\cdot) - g}  + \sum\limits_{t'=0}^{t-1}
\supnorm{G_{t'}-h(\cdot,t')} \nonumber \\
& \stackrel{\circled{2}}{\le}
\eta c_{\bu}T \pth{\frac{2R}{\sqrt {2\pi} \kappa} + C_2(m/2,d,1/n)}
\nonumber \\
&+ \eta c_{\bu} T \pth{ \frac{\sqrt{\frac{\log (n/{\delta})}{N}}}{c_{\bu}}
+ C_1(m/2,d,1/n)
+\frac 12\pth{\frac{2R}{\sqrt {2\pi} \kappa} + C_2(m/2,d,1/n)} }
\nonumber \\
&\le \eta c_{\bu} T \pth{
\frac{\sqrt{\frac{\log (n/{\delta})}{N}}}{c_{\bu}}+C_1(m/2,d,1/n)
+\frac 32\pth{\frac{2R}{\sqrt {2\pi} \kappa} + C_2(m/2,d,1/n)}  }
\defeq \Delta_{m,n,N,\eta}.
\eal
We now give estimates for $\Delta_{m,n,N,\eta}$.
Since $\bW(0) \in \cW_0$,
it follows from
Theorem~\ref{theorem:good-random-initialization} that
\bals
\Delta_{m,n,N,\eta}
&\lsim T \sqrt{\frac{\log (n/{\delta})}{N}}+
 \sqrt{d} m^{-\frac 15} T^{\frac 32},
\eals
By direct calculations, for any $w > 0$, when
\bals
N \gsim T^2 \log (n/{\delta})/w^2,
\quad
m \gsim {T^{\frac {15}{2}} d^{\frac 52}}/{w^5},
\eals
we have
$\Delta_{m,n,N,\eta} \le w$. It follows from Lemma~\ref{lemma:bounded-Linfty-vt-sum-et} that
with probability at least $1- \exp\pth{-\Theta(n \hat\eps_n^2)}$ over the random noise $\bw$,
\bal\label{eq:bounded-Linfty-function-class-h-bound}
\norm{h}{\cHKint} \le B_h,
\eal
where $B_h$ is defined in (\ref{eq:B_h}),
and $\tau$ is required to satisfy
$\tau \lsim 1/(\eta T)$. Let $\tau = \Theta(1/(\eta T))$.
 Lemma~\ref{lemma:empirical-loss-convergence} requires that
$m \gsim T^{\frac {15}{2}} d^{\frac 52}/{\tau^5}$. As a result,
we have
\bals
m \gsim T^{\frac{25}{2}} d^{\frac 52}.
\eals
We note that the requirement on $N$, $N \gsim T^2\log{(n/{\delta})}/\tau^2$, in Lemma~\ref{lemma:empirical-loss-convergence}
is satisfied with $N \gsim T^4\log{(n/{\delta})}$. Finally, it follows from
 the Cauchy-Schwarz inequality that
 $\supnorm{h} \le {B_h}/(2{\sqrt 2})$
 since $\sup_{\bx \in \cX} \Kint(\bx,\bx) \in (0,1/8]$.
It then follows from (\ref{eq:bounded-Linfty-function-class-f-h-bound})
 (\ref{eq:bounded-Linfty-function-class-h-bound}) and the above
 calculations that
\bals
\supnorm{f(\bW,\cdot)}  &\le \supnorm{h}  + \supnorm{e(\bx)} \le \frac{B_h}{2{\sqrt 2}} + w,
\eals
which completes the proof.
\end{proof}

We then have the following lemma about the Rademacher complexity of a localized function class $\set{f \in \cFext(B,w) \colon \Expect{P}{f^2} \le r}$.
\begin{lemma}\label{lemma:LRC-population-NN}
For every $B,w > 0$ every $r > 0$,
\bal\label{eq:LRC-population-NN}
&\cfrakR
\pth{\set{f \in \cFext(B,w) \colon \Expect{P}{f^2} \le r}}
\le \varphi_{B,w}(r),
\eal%
where
\bal\label{eq:varphi-LRC-population-NN}
\varphi_{B,w}(r) &\defeq
\min_{Q \colon Q \ge 0} \pth{({\sqrt r} + w) \sqrt{\frac{Q}{n}} +
B
\pth{\frac{\sum\limits_{q = Q+1}^{\infty}\lambdaint_q}{n}}^{1/2}} + w.
\eal

\end{lemma}
\begin{proof}
We first decompose the Rademacher complexity of the function class
$\{f \in \cFext(B,w) \colon \newline \Expect{P}{f^2} \le r\}$ into two terms as follows:
\bal\label{eq:lemma-LRC-population-NN-decomp}
&\cfrakR \pth{\set{f \colon f \in \cFext(B,w) , \Expect{P}{f^2} \le r}} \nonumber \\
&\le \underbrace{\frac 1n \Expect{}
{\sup_{f \in \cFext(B,w) \colon \Expect{P}{f^2} \le r }
{ \sum\limits_{i=1}^n {\sigma_i}{h(\bbx_i)}}
}}_{\defeq \cR_1} +
\underbrace{\frac 1n \Expect{}
{\sup_{f \in \cFext(B,w) \colon \Expect{P}{f^2} \le r } {{ \sum\limits_{i=1}^n {\sigma_i}{e(\bbx_i)}}
} }}_{\defeq \cR_2}.
\eal

We now analyze the upper bounds for $\cR_1, \cR_2$ on the RHS
of (\ref{eq:lemma-LRC-population-NN-decomp}).

\textbf{Derivation for the upper bound for $\cR_1$.}

According to definition of $\cFext(B,w)$ in (\ref{eq:def-cF-ext-general}), for any $f \in  \cFext(B,w)$,
we have $f = h + e$ with $h \in \cHKint(B)$,
$e \in L^{\infty}$, $\supnorm{e} \le w$.

When $\Expect{P}{f^2} \le r$, it follows from the triangle inequality
that $\norm{h}{L^2} \le \norm{f}{L^2} + \norm{e}{L^2} \le {\sqrt r} + w \defeq r_h$.
We now consider $h \in \cH_{\Kint}(B)$ with $\norm{h}{L^2} \le r_h$ in  the remaining of this proof. We have
\bal\label{eq:lemma-LRC-population-NN-seg1}
\sum\limits_{i=1}^n {\sigma_i}{f(\bbx_i)} &=
\sum\limits_{i=1}^n {\sigma_i}\pth{h(\bbx_i) + e(\bbx_i)}
\nonumber \\
&=
\iprod{h}
{\sum\limits_{i=1}^n {\sigma_i}{\Kint(\cdot,\bbx_i)}}_{\cHKint} +
\sum\limits_{i=1}^n {\sigma_i}e(\bbx_i).
\eal
Because $\set{\vint_q = \sqrt{\lambdaint_q} e_q}_{q \ge 1}$ is an orthonormal basis of $\cHKint$, for any $0 \le Q \le n$, we further express the first term
on the RHS of (\ref{eq:lemma-LRC-population-NN-seg1}) as
\bal\label{eq:lemma-LRC-population-NN-seg2}
&\iprod{h}
{\sum\limits_{i=1}^n {\sigma_i}{\Kint(\cdot,\bbx_i)}}_{\cHKint} =
\nonumber \\
&\iprod{\sum\limits_{q=1}^Q \sqrt{\lambdaint_q}
\iprod{h}{\vint_q}_{\cHKint} \vint_q }
{\sum\limits_{q=1}^Q
\iprod{\sum\limits_{i=1}^n{\sigma_i}{\Kint(\cdot,\bbx_i)}}{\vint_q}_{\cHKint}\frac{\vint_q}{\sqrt{\lambdaint_q}}}_{\cHKint}
\nonumber \\
&\phantom{=}+\iprod{h}
{\sum\limits_{q > Q} \iprod{\sum\limits_{i=1}^n {\sigma_i}{\Kint(\cdot,\bbx_i)}}{\vint_q}_{\cHKint}\vint_q}_{\cHKint}.
\eal
Due to the fact that $h \in \cHKint$,
$h = \sum\limits_{q =1}^{\infty}  \bbeta^{(h)}_q \vint_q
=\sum\limits_{q =1}^{\infty}  \sqrt{\lambdaint_q} \bbeta^{(h)}_q e_q $
with $\vint_q = \sqrt{\lambdaint_q} e_q$. Therefore,
$\norm{h}{L^2}^2 = \sum\limits_{q=1}^{\infty} \lambdaint_q {\bbeta^{(h)}_q}^2$, and
\bal\label{eq:lemma-LRC-population-NN-seg3}
\norm{\sum\limits_{q=1}^Q \sqrt{\lambdaint_q}\iprod{h}{\vint_q}_{\cHKint}\vint_q}{\cHKint}
&= \norm{\sum\limits_{q=1}^Q \sqrt{\lambdaint_q} \bbeta^{(h)}_q \vint_q}{{\cHKint}}
\nonumber \\
&= \sqrt{\sum\limits_{q=1}^Q \lambdaint_q  {\bbeta^{(h)}_q}^2}
\le
\norm{h}{L^2} \le r_h.
\eal
According to Mercer's Theorem, because the kernel $K$ is continuous symmetric positive definite, it has the decomposition
\bals
\Kint(\cdot,\bbx_i) = \sum\limits_{j=1}^{\infty} \lambdaint_j e_j(\cdot)
e_j(\bbx_i),
\eals
so that we have
 \bal\label{eq:lemma-LRC-population-NN-seg4}
\iprod{\sum\limits_{i=1}^n{\sigma_i}{\Kint(\cdot,\bbx_i)}}{\vint_q}_{\cHKint}
&=\iprod{\sum\limits_{i=1}^n{\sigma_i} \sum\limits_{j=1}^{\infty} \lambdaint_j e_j e_j(\bbx_i) }{\vint_q}_{\cHKint} \nonumber \\
&=\iprod{\sum\limits_{i=1}^n{\sigma_i} \sum\limits_{j=1}^{\infty}
\sqrt{\lambdaint_j}e_j(\bbx_i) \cdot \vint_j  }{\vint_q}_{\cHKint} \nonumber \\
&=\sum\limits_{i=1}^n{\sigma_i} \sqrt{\lambdaint_q}e_q(\bbx_i).
\eal
Combining (\ref{eq:lemma-LRC-population-NN-seg2}),
(\ref{eq:lemma-LRC-population-NN-seg3}), and
(\ref{eq:lemma-LRC-population-NN-seg4}), we have
\bal\label{eq:lemma-LRC-population-NN-seg5}
&\iprod{h}
{\sum\limits_{i=1}^n {\sigma_i}{\Kint(\cdot,\bbx_i)}} \stackrel{\circled{1}}{\le}
\norm{\sum\limits_{q=1}^Q \sqrt{\lambdaint_q}\iprod{h}{\vint_q}_{\cHKint}\vint_q}{\cHKint}
\cdot \nonumber \\
&
\hspace{1.8in}\norm{\sum\limits_{q=1}^Q \frac{1}{\sqrt{\lambdaint_q}}
\iprod{\sum\limits_{i=1}^n{\sigma_i}{\Kint(\cdot,\bbx_i)}}{\vint_q}_{\cHKint}\vint_q}{\cHKint}
\nonumber \\
&\phantom{\le}+ \norm{h}{\cHKint} \cdot
\norm{\sum\limits_{q = Q+1}^{\infty} \iprod{\sum\limits_{i=1}^n{\sigma_i}{\Kint(\cdot,\bbx_i)}}{\vint_q}_{\cHKint}\vint_q}{\cHKint}
\nonumber \\
&\le \norm{h}{L^2}
\norm{\sum\limits_{q=1}^Q \sum\limits_{i=1}^n{\sigma_i} e_q(\bbx_i)\vint_q}{\cHKint}
+ B
\norm{\sum\limits_{q = Q+1}^{\infty} \sum\limits_{i=1}^n{\sigma_i} \sqrt{\lambdaint_q}e_q(\bbx_i) \vint_q}{\cHKint} \nonumber \\
&\le r_h \sqrt{\sum\limits_{q=1}^Q \pth{\sum\limits_{i=1}^n{\sigma_i} e_q(\bbx_i)}^2}
+ B
\sqrt{\sum\limits_{q = Q+1}^{\infty} \pth{\sum\limits_{i=1}^n {\sigma_i} \sqrt{\lambdaint_q}e_q(\bbx_i)}^2},
\eal
where $\circled{1}$ is due to Cauchy-Schwarz inequality.
Moreover, by Jensen's inequality we have
\bal\label{eq:lemma-LRC-population-NN-seg6}
\Expect{}{\sqrt{\sum\limits_{q=1}^Q \pth{\sum\limits_{i=1}^n{\sigma_i} e_q(\bbx_i)}^2}}
&\le \sqrt{ \Expect{}{\sum\limits_{q=1}^Q \pth{\sum\limits_{i=1}^n{\sigma_i} e_q(\bbx_i)}^2 } }\nonumber \\
&\le \sqrt{
\Expect{}{\sum\limits_{q=1}^Q \sum\limits_{i=1}^n e_q^2(\bbx_i) }}
=\sqrt{nQ}.
\eal
and similarly,
\bal\label{eq:lemma-LRC-population-NN-seg7}
\Expect{}{\sqrt{\sum\limits_{q = Q+1}^{\infty} \pth{\sum\limits_{i=1}^n{\sigma_i\sqrt{\lambdaint_q}} e_q(\bbx_i)}^2}}
&\le \sqrt{
\Expect{}{\sum\limits_{q = Q+1}^{\infty} \lambdaint_q \sum\limits_{i=1}^n e_q^2(\bbx_i) }}
=\sqrt{n\sum\limits_{q = Q+1}^{\infty}\lambdaint_q}.
\eal
Since (\ref{eq:lemma-LRC-population-NN-seg5})-(\ref{eq:lemma-LRC-population-NN-seg7}) hold for all $Q \ge 0$,
it follows that
\bal\label{eq:lemma-LRC-population-NN-seg8}
\Expect{}{\sup_{h \in \cHKint(B), \norm{h}{L^2} \le r_h } {\frac{1}{n} \sum\limits_{i=1}^n {\sigma_i}{h(\bbx_i)}} }
\le \min_{Q \colon Q \ge 0} \pth{ r_h \sqrt{nQ} +
B
\sqrt{n\sum\limits_{q = Q+1}^{\infty}\lambdaint_q}}.
\eal
It follows from (\ref{eq:lemma-LRC-population-NN-decomp}),
(\ref{eq:lemma-LRC-population-NN-seg1}), and
(\ref{eq:lemma-LRC-population-NN-seg8}) that
\bal\label{eq:lemma-LRC-population-NN-R1}
\cR_1 &\le \frac 1n \Expect{}{\sup_{h \in \cHKint(B), \norm{h}{L^2} \le r_h } { \sum\limits_{i=1}^n {\sigma_i}{h(\bbx_i)}} } \nonumber \\
&\le \min_{Q \colon Q \ge 0} \pth{r_h \sqrt{\frac{Q}{n}} +
B
\pth{\frac{\sum\limits_{q = Q+1}^{\infty}\lambdaint_q}{n}}^{1/2}}.
\eal

\textbf{Derivation for the upper bound for $\cR_2$.}

Because  $\abth{1/n \sum_{i=1}^n \sigma_i e(\bbx_i) }\le w$
when $\supnorm{e} \le w$, we have

\bal\label{eq:lemma-LRC-population-NN-R2}
\cR_2 \le \frac 1n \Expect{}{\sup_{e \in L^{\infty} \colon \supnorm{e} \le w } {{ \sum\limits_{i=1}^n {\sigma_i}{e(\bbx_i)}} } }
\le w.
\eal

It follows from (\ref{eq:lemma-LRC-population-NN-R1})
and (\ref{eq:lemma-LRC-population-NN-R2}) that
\bals
&\cfrakR \pth{\set{f \colon f \in \cFext(B,w), \Expect{P}{f^2} \le r}}
\le \min_{Q \colon Q \ge 0} \pth{r_h \sqrt{\frac{Q}{n}} +
B
\pth{\frac{\sum\limits_{q = Q+1}^{\infty}\lambdaint_q}{n}}^{1/2}}
+ w.
\eals

Plugging $r_h$ in the RHS of the above inequality
 completes the proof.
\end{proof}

\begin{theorem}\label{theorem:LRC-population-NN}
Suppose $w  \in (0,1)$ and
$m,N$ satisfy (\ref{eq:m-cond-bounded-NN-class}) and
(\ref{eq:N-cond-bounded-NN-class}), respectively.
Suppose the neural network
$f_t = f(\bW(t),\cdot)$ is
trained by PGD in Algorithm~\ref{alg:PGD}
with the learning rate $\eta = \Theta(1)\in (0,8)$
on the random initialization $\bW(0) \in \cW_0$, and $T \le \hat T$.
Then for every $t \in [T]$ and every $\delta \in (0,1)$, with probability at least
$1-\exp\pth{-\Theta(n)}-\delta -\exp\pth{-\Theta(n \hat\eps_n^2)} - \exp\pth{-\Theta(n \eps_n^2)}$ over the random noise $\bw$, the random training features $\bS$, the random sample $\bQ$,
\bal\label{eq:LRC-population-NN-bound}
&\Expect{P}{(f_t-f^*)^2} - 2 \Expect{P_n}{(f_t-f^*)^2}
\lsim \min_{0 \le Q \le n} \pth{\frac{ B_0Q}{n} + B_h
\pth{\frac{\sum\limits_{q = Q+1}^{\infty}\lambdaint_q}{n}}^{1/2}} + \eps_n^2 +w,
\eal
where $B_0 \defeq {B_h}/{(2{\sqrt 2})} + 1 +{\mu_0}/(2{\sqrt 2})$.
%where $B_0,c_0$ are absolute positive constant depending on $\mu_0$.
\end{theorem}
% \begin{proof}
% This theorem follows from plugging the empirical training loss
% bound (\ref{empirical-loss-bound}) in Lemma~\ref{lemma:empirical-loss-bound}
% in the risk bound (\ref{eq:LRC-population-NN-bound-detail})
% in Theorem~\ref{theorem:LRC-population-NN-detail}, and
% noting that $\Prob{\cW_0} \ge 1-2/n$.
% \end{proof}
\begin{proof}
We first remark that the conditions on $m,N$ are required by
Lemma~\ref{lemma:empirical-loss-convergence} and Theorem~\ref{theorem:bounded-NN-class}. It follows from Lemma~\ref{lemma:empirical-loss-convergence}
and Theorem~\ref{theorem:bounded-NN-class}
that conditioned on an event $\Omega$ with probability at least
$1-\exp\pth{-\Theta(n)}-\delta-\exp\pth{-\Theta(n \hat\eps_n^2)}$
over $\bw$, $\bS$ and $\bQ$,  we have $\bW(t) \in \cW(\bS,\bQ,\bW(0),T)$,
and
\bals
f(\bW(t),\cdot) = f_t = h+e \in \cFext(B_h,w)
\eals
with $h \in \cH_{\Kint}(B_h)$ and $\supnorm{e} \le w$.

We then derive the sharp upper bound for $\Expect{P}{(f_t-f^*)^2}
$ by applying Theorem~\ref{theorem:LRC-population} to the function class
\bals
\cF = \set{F=\pth{f - f^*}^2 \colon f \in
\cFext(B_h,w)  }.
\eals
With $B_0 = {B_h}/(2{\sqrt 2}) + 1 +{\mu_0}/{(2{\sqrt 2})} \ge {B_h}/(2{\sqrt 2}) + w +{\mu_0}/(2{\sqrt 2})$, we have
$\supnorm{F} \le B^2_0$ with $F \in \cF$, so that
$\Expect{P}{F^2} \le B^2_0\Expect{P}{F}$.
Let $T(F) = B^2_0\Expect{P}{F}$ for $F \in \cF$. Then
$\Var{F} \le \Expect{P}{F^2} \le T(F) = B^2_0\Expect{P}{F}$.
We have
\bal\label{eq:LRC-population-NN-seg1}
\cfrakR \pth{\set{F \in \cF \colon T(F) \le r}}
&= \cfrakR
\pth{ \set{(f-f^*)^2 \colon f \in \cFext(B_h,w), \Expect{P}{(f-f^*)^2}
\le \frac r{B^2_0}}} \nonumber \\
&\stackrel{\circled{1}}{\le} 2B_0 \cfrakR \pth{\set{f -f^* \colon
 f \in \cFext(B_h,w), \Expect{P}{(f-f^*)^2} \le \frac{r}{B_0^2}}}\nonumber \\
&\stackrel{\circled{2}}{\le}  4B_0 \cfrakR \pth{ \set{f \in \cFext(B_h,w) \colon \Expect{P}{f^2} \le \frac{r}{4B_0^2}} },
\eal
where $\circled{1}$ is due to the contraction property of
Rademacher complexity in Theorem~\ref{theorem:RC-contraction}.
Since $f^* \in \cFext(B_h,w)$,
$f \in \cFext(B_h,w)$, we have $\frac{f-f^*}{2} \in \cFext(B_h,w)$ due to the fact that
$\cFext(B_h,w)$ is  symmetric and convex, and it follows that $\circled{2}$
holds.

It follows from (\ref{eq:LRC-population-NN-seg1})
and Lemma~\ref{lemma:LRC-population-NN} that
\bal\label{eq:LRC-population-NN-seg2}
B^2_0 \cfrakR \pth{\set{F \in \cF \colon T(F) \le r}}
&\le 4 B_0^3 \cfrakR \pth{ \set{f \colon f \in \cFext(B_h,w) , \Expect{P}{f^2} \le
\frac{r}{4B_0^2}}} \nonumber \\
&\le 4 B_0^3 \varphi_{B_h,w}\pth{\frac{r}{4B_0^2}} \defeq \psi(r).
\eal
$\psi$ defined as the RHS of (\ref{eq:LRC-population-NN-seg2}) is a sub-root function since it is nonnegative, nondecreasing and
$\frac{\psi(r)}{\sqrt r}$ is nonincreasing. Let $r^*$ be the fixed point of $\psi$, and $0 \le r \le r^*$. It follows from~\citet[Lemma 3.2]{bartlett2005} that
$0 \le r \le \psi(r) =  4 B_0^3 \varphi\pth{\frac{r}{4B_0^2}}$.
Therefore, by the definition of $\varphi$ in (\ref{eq:varphi-LRC-population-NN}),
for every $0 \le Q \le n$, we have
\bal\label{eq:LRC-population-NN-seg3}
\frac{r}{4 B_0^3} \le \pth{ \frac{\sqrt r}{2B_0} + w} \sqrt{\frac{Q}{n}} +
B_h
\pth{\frac{\sum\limits_{q = Q+1}^{\infty}\lambdaint_q}{n}}^{1/2}+w.
\eal
Solving the quadratic inequality (\ref{eq:LRC-population-NN-seg3}) for $r$, we have
\bal\label{eq:LRC-population-NN-seg4}
r \le \frac{8B_0^4 Q}{n} + 8B_0^3
\pth{ w \pth{\sqrt{\frac{Q}{n}}+1}
+ B_h \pth{\frac{\sum\limits_{q = Q+1}^{\infty}\lambdaint_q}{n}}^{1/2}
}.
\eal
(\ref{eq:LRC-population-NN-seg4}) holds for every $0 \le Q \le n$,
so we have
\bal\label{eq:LRC-population-NN-seg5}
r \le 8 B_0^3 \min_{0 \le Q \le n} \pth{\frac{ B_0Q}{n} +w \pth{\sqrt{\frac{Q}{n}}+1}
+ B_h
\pth{\frac{\sum\limits_{q = Q+1}^{\infty}\lambdaint_q}{n}}^{1/2}}.
\eal
It then follows from (\ref{eq:LRC-population-NN-seg2}) and
Theorem~\ref{theorem:LRC-population} that with probability at least $1-\exp(-x)$
over the random training features $\bS$,
\bal\label{eq:LRC-population-NN-risk-E1-bound}
&\Expect{P}{(f_t-f^*)^2} - \frac{K_0}{K_0-1} \Expect{P_n}{(f_t-f^*)^2}-\frac{x\pth{11B_0^2+26B_0^2 K_0}}{n} \le \frac{704K_0}{B_0^2} r^*,
\eal
or
\bal\label{eq:LRC-population-NN-risk-E1-bound-simple}
&E_1- 2 \Expect{P_n}{(f_t-f^*)^2} \lsim r^* +\ \frac {x}n
\eal
with $K_0 = 2$ in (\ref{eq:LRC-population-NN-risk-E1-bound}).
It follows from (\ref{eq:LRC-population-NN-seg5})
and (\ref{eq:LRC-population-NN-risk-E1-bound-simple}) that
\bal\label{eq:LRC-population-NN-risk-E1-final}
&E_1- 2 \Expect{P_n}{(f_t-f^*)^2} \lsim \min_{0 \le Q \le n} \pth{\frac{ B_0Q}{n} +w \pth{\sqrt{\frac{Q}{n}}+1}
+ B_h
\pth{\frac{\sum\limits_{q = Q+1}^{\infty}\lambdaint_q}{n}}^{1/2}} +
\frac {x}n.
\eal
Let $x = n\eps_n^2$ in the above inequality, then (\ref{eq:LRC-population-NN-bound})
is proved.
\end{proof}

\begin{lemma}\label{lemma:empirical-loss-bound}
Suppose the neural network trained after the $t$-th step of PGD, $f_t = f(\bW(t),\cdot)$, satisfies $\bu(t) = f_t(\bS) - \by = \bv(t) + \be(t)$
with $\bv(t) \in \cV_t$ and $\be(t) \in \cE_{t,\tau}$, and $t \in [T]$ with $T \le \hat T$. If
\bal\label{eq:N-tau-cond-empirical-loss-convergence}
\tau \lsim  \frac{1}{\eta T},
\eal
Then for every $t \in [T]$, with probability at least
$1-\exp\pth{-c'n\hat \eps_n^2}$
over the random noise $\bw$, we have
\bal\label{empirical-loss-bound}
\Expect{P_n}{(f_t-f^*)^2} &\le
\frac{3}{\eta t} \pth{\frac{\mu_0^2}{2e} +\frac{1}{\eta T}+2}.
\eal
\end{lemma}
\begin{proof}

% It follows from Lemma~\ref{lemma:empirical-loss-convergence}
% and its proof that conditioned on an event $\Omega$ with probability at
% least $1 -  \exp\pth{-\Theta(n)} - \exp\pth{-\Theta(n \hat\eps_n^2)}-\Theta\pth{{nN}/{n^{c_d\eps^2_0/8}}} - \pth{1+2N}^{2d}\exp(-n^{c_x})$,
% $f \in \cFnn(\bS,\bQ,\bW(0),T)$ with
% $\bW(0) \in \cW_0$. Moreover, $f(\cdot) = f(\bW,\cdot)$ with $\bW = \set{\bbw_r}_{r=1}^m \in \cW(\bS,\bQ,\bW(0),T)$, and $\vect{\bW} = \vect{\bW_{\bS}} = \vect{\bW(0)} - \sum_{t'=0}^{t-1} \eta/n \bM \bZ_{\bS}(t') \bu(t')$ for some $t \in [T]$, where $\bu(t') \in \RR^n, \bu(t') = \bv(t') + \be(t')$ with $\bv(t') \in \cV_{t'}$ and $\be(t') \in \cE_{t',\tau}$ for all $t' \in [0,t-1]$.
% Also, $\bbe(t') = \bbe_1(t') + \bbe_2(t')$ with
% $\bbe_1(t') = -\pth{\bI_n-\eta\bKint_n}^{t'} \bw$
% and $\ltwonorm{\bbe_2(t')} \lsim {\sqrt n} \pth{\tau +{\eta T n^{c_x}}/{N}}$ for all $t' \in [0,t-1]$.

We have
\bal\label{eq:empirical-loss-bound-seg1}
f_t(\bS) = f^*(\bS) + \bw + \bv(t) + \be(t),
\eal
where $\bv(t) \in \cV_{t}$, $\be(t) \in \cE_{t,\tau}$,
$\bbe(t) = \bbe_1(t) + \bbe_2(t)$ with
$\bbe_1(t) = -\pth{\bI_n-\eta\bKint_n}^{t} \bw$
and $\ltwonorm{\bbe_2(t)} \lsim {\sqrt n} \tau$.
It follows from (\ref{eq:empirical-loss-bound-seg1}) that
\bsal\label{eq:empirical-loss-bound-seg2}
&\Expect{P_n}{(f_t-f^*)^2}
=\frac 1n \ltwonorm{f_t(\bS) - f^*(\bS)}^2  =\frac 1n \ltwonorm{\bv(t)+\bw+\be(t)}^2 \nonumber \\
&=\frac 1n \ltwonorm{-\pth{\bI- \eta \bKint_n }^t f^*(\bS)
+\pth{\bI_n -\pth{\bI_n-\eta\bKint_n}^t }\bw +\bbe_2(t)}^2 \nonumber \\
&\stackrel{\circled{1}}{\le} \frac 3n \sum\limits_{i=1}^n
\pth{1 - \eta \hlambdaint_i }^{2t}
\bth{{\bU}^{\top} f^*(\bS)}_i^2 + \frac 3n \sum\limits_{i=1}^{n} \pth{1-
\pth{1-\eta \hlambdaint_i  }^t}^2
\bth{{\bU}^{\top} \bw}_i^2 + \frac 3n \ltwonorm{\bbe_2(t)}^2 \nonumber \\
&
\nonumber \\
&\stackrel{\circled{2}}{\le} \frac{3\mu_0^2}{ 2e\eta t } + \frac 3n \sum\limits_{i=1}^{n} \pth{1-
\pth{1-\eta \hlambdaint_i  }^t}^2
\bth{{\bU}^{\top} \bw}_i^2
+ 3\tau^2\nonumber \\
&\le \frac{3}{\eta t} \pth{\frac{\mu_0^2}{2e} + \frac{1}{\eta T} }   +
3\cdot \underbrace{\frac 1n \sum\limits_{i=1}^{n} \pth{1-\pth{1-\eta \hlambdaint_i  }^t}^2
\bth{{\bU}^{\top} \bw}_i^2}_{ \defeq E_{\eps}}
\esal

Here $\circled{1}$ follows from the Cauchy-Schwarz inequality,
$\circled{2}$ follows from (\ref{eq:yt-y-bound-seg1}) in the proof of
Lemma~\ref{lemma:yt-y-bound}, and $\circled{3}$ follows from the conditions  on $N,\tau$ in (\ref{eq:N-tau-cond-empirical-loss-convergence}).

We then derive the upper bound for $E_{\eps}$ on the RHS of
(\ref{eq:empirical-loss-bound-seg2}). We define the diagonal matrix
$\bR \in \RR^{n \times n}$ with $\bR_{ii} =
\pth{1-\pth{1-\eta \lambda_i  }^t}^2$.
Then we have $E_{\eps} = 1/n \cdot  \tr{\bU \bR \bU^{\top} \bw \bw^{\top}}$.
 It follows from~\citet{quadratic-tail-bound-Wright1973}~that
\bal\label{eq:empirical-loss-bound-E-1}
&\Prob{1/n \cdot \tr{\bU \bR \bU^{\top} \bw \bw^{\top} } -
\Expect{}{1/n \cdot \tr{\bU \bR \bU^{\top} \bw \bw^{\top} }} \ge u}
\nonumber \\
&\le \exp\pth{-c \min\set{nu/\ltwonorm{\bR},n^2u^2/\fnorm{\bR}^2}}
\eal
holds for all $u > 0$, and $c$ is a positive constant. With
$\eta_t = \eta t$ for all $t \ge 0$, we have
\bal\label{eq:empirical-loss-bound-E-2}
&\Expect{}{1/n \cdot \tr{\bU \bR \bU^{\top} \bw \bw^{\top} }}
\le \frac {\sigma_0^2}n \sum\limits_{i=1}^n
\pth{1-\pth{1-\eta \hlambdaint_i }^t}^2
\stackrel{\circled{1}}{\le}
\frac {\sigma_0^2}n \sum\limits_{i=1}^n
\min\set{1,\eta_t^2 (\hlambdaint_i)^2}
\nonumber \\
&\le
\frac {{\sigma_0^2}\eta_t}n \sum\limits_{i=1}^n
\min\set{\frac{1}{\eta_t},\eta_t (\hlambdaint_i)^2}
\stackrel{\circled{2}}{\le}
\frac {{\sigma_0^2}\eta_t}n \sum\limits_{i=1}^n
\min\set{\frac{1}{\eta_t}, \hlambdaint_i}
= {{\sigma_0^2}\eta_t} \hat R_{\Kint}^2(\sqrt{{1}/{\eta_t}}) \le
\frac{1}{\eta_t}.
\eal
Here $\circled{1}$ follows from the fact that
$(1-\eta \hlambdaint_i )^t \ge \max\set{0,1-t\eta \hlambdaint_i}$,
and $\circled{2}$ follows from
$\min\set{a,b} \le \sqrt{ab}$ for any nonnegative numbers $a,b$.
Because $t \le T \le \hat T$, we have
$R_{\Kint}(\sqrt{{1}/{\eta_t}}) \le 1/(\sigma \eta_t)$, so the last inequality holds.

Moreover, we have the upper bounds for $\ltwonorm{\bR}$ and $\fnorm{\bR}$
as follows. First, we have
\bal\label{eq:empirical-loss-bound-E-3}
\ltwonorm{\bR} &\le \max_{i \in [n] }
\pth{1-\pth{1-\eta \hlambdaint_i }^t}^2 \le \min\set{1,\eta_t^2 (\hlambdaint_i)^2}
\le 1.
\eal
We also have
\bal\label{eq:empirical-loss-bound-E-4}
\frac 1n \fnorm{\bR}^2 &=  \frac 1n
\sum\limits_{i=1}^n
\pth{1-\pth{1-\eta \hlambdaint_i }^t}^4
\le \frac {\eta_t}n \sum\limits_{i=1}^n
\min\set{\frac{1}{\eta_t},\eta_t^{3} (\hlambdaint_i)^4} \nonumber \\
&\le \frac {\eta_t}n \sum\limits_{i=1}^n
\min\set{\hlambdaint_i,\frac{1}{\eta_t}}
=\eta_t\hat R_{\Kint}^2(\sqrt{{1}/{\eta_t}})\le
\frac {1}{\sigma_0^2 \eta_t}.
\eal
Combining (\ref{eq:empirical-loss-bound-E-1})-(\ref{eq:empirical-loss-bound-E-4}), we have
\bals
\Prob{1/n \cdot \tr{\bU \bR \bU^{\top} \bw \bw^{\top} } -
\Expect{}{1/n \cdot \tr{\bU \bR \bU^{\top} \bw \bw^{\top} }} \ge u}
&\le \exp\pth{-c n\min\set{u, u^2\sigma_0^2 \eta_t}}.
\eals
Let $u = 1/(\eta t)$ in the above inequality, we have
\bals
\exp\pth{-c n\min\set{u, u^2\sigma_0^2 \eta_t}}
= \exp\pth{-c' n/\eta_t} \le
\exp\pth{-c'n\hat \eps_n^2}
\eals
where $c'  = c\min\set{1,\sigma_0^2}$, and the last inequality is due
to the fact that $1/\eta_t \ge \hat\eps_n^2$ since
$t \le T \le \hat T$.
It follows that with probability at least $1-
\exp\pth{- \Theta(n\hat\eps_n^2)}$,
\bal\label{eq:empirical-loss-bound-E-5}
E_{\eps}\le u+\frac{1}{\eta_t} = \frac{2}{\eta_t}.
\eal
It then follows from (\ref{eq:empirical-loss-bound-seg2}),
(\ref{eq:empirical-loss-bound-E-1})-(\ref{eq:empirical-loss-bound-E-5})
that
\bals
\Expect{P_n}{(f_t-f^*)^2} \le
\frac{3}{\eta t} \pth{\frac{\mu_0^2}{2e} +\frac{1}{\eta}+2}
\eals
with probability at least $1-\exp\pth{-c'n\hat \eps_n^2}$.
\end{proof}

\subsection{Proofs for the Approximate Uniform Convergence for the Kernel $\Kint$}

% \subsubsection{Proofs for the Approximate Uniform Convergence for the Kernel $\Kint$}

In this subsection, we present the main theorem, Theorem~\ref{theorem:hatKint-close-to-Kint-supnorm}, regarding the approximate uniform convergence of
$\hKint (\cdot, \bx')$ to $\Kint(\cdot,\bx')$ for every fixed $\bx' \in \cX$.
Theorem~\ref{theorem:hatKint-close-to-Kint-supnorm} is the formal version of Theorem~\ref{theorem:hatKint-close-to-Kint-supnorm-informal} in
Section~\ref{sec:detailed-roadmap-key-results}.
%We then establish Lemma~\ref{lemma:hatKint-gram-close-to-Kint-gram}
We first present below the concentration inequality for independent random variables taking values in a Hilbert space $\cB$ of functions
defined on a measurable space $(S,\Sigma_S,\mu_S)$.
Let $\set{f_k}_{k=0}^{\infty}$ be a martingale
a separable Banach space $\pth{\cB,\norm{\cdot}{}}$ with respect to an increasing sequence of $\sigma$-algebras $\set{\cF_k}_{n=0}^{\infty}$ and
$f_0 = 0$. Define $d_k \defeq f_k-f_{k-1}$ for $k \ge 1$, $d_0 = 0$,
and $f^* \defeq \sup_{k \ge 0} \norm{f_k}{}$.

For a function $g \colon \cB \to \RR$, The first G\^ateaux
derivative of $g$ at a point $x \in \cB$ along a direction
$h \in \cB$ is defined as
\bals
g'(x)(h) \defeq \lim\limits_{t \to 0}
\frac{\norm{g(x+th)}{}-\norm{g(x)}{}}{t}.
\eals
The second G\^ateaux
derivative of $g$ at a point $x \in \cB$ along two directions
$h_1,h_2 \in \cB$ is defined as
\bals
g''(x)(h_1,h_2) \defeq \lim\limits_{t \to 0}
\frac{g'(x+th_2)(h_1)-g'(x)(h_1)}{t}.
\eals
\begin{sloppypar}
The class $D(A_1, A_2)$ consists of Banach spaces $\cB$
such that
$\abth{\norm{x}{}'(\Delta)} \le A_1 \norm{\Delta}{}$ and
$\abth{\norm{x}{}''(\Delta, \newline \Delta)} \le A_2 \norm{\Delta}{}^2/\norm{x}{}$ hold for all $x, \Delta \in \cB$ and $x \neq 0$.
\end{sloppypar}

\begin{lemma}
[Martingale based concentration inequality for Banach space-valued process
{\citep[Theorem 2]{Pinelis1992}}]
\label{lemma:concentration-Hilbert-space}
Suppose that $\sum_{k=1}^{\infty}
\textup{esssup} \norm{d_k}{}^2 \le 1$ where
$\textup{esssup}(f) = \inf_{a \in \RR} \set{\mu(f^{-1}(a,+\infty)) = 0}$
for a function denotes the essential supremum of a function, and
$\cB \in D(A_1,A_2)$ or $\cB \subseteq L^p(S,\Sigma,\mu)$ with
$p \ge 2$. Then for every $r > 0$,
\bal\label{eq:concentration-Hilbert-space}
\Prob{f^* > r} \le 2\exp\pth{-\frac{r^2}{2B}}
\eal
with $B = A_1^2 + A_2$ for $\cB \in D(A_1,A_2)$,
and $B = p-1$ for $\cB \subseteq L^p(S,\Sigma_S,\mu_S)$.
\end{lemma}
\begin{remark*}
It is pointed out in~\citet{Pinelis1992} that
when $\cB \subseteq L^p(S,\Sigma,\mu)$, $\cB \in D(1,p-1)$, so that
$B = A_1^2 + A_2 = p$ with $A_1 = 1, A_2 = p-1$. However, for such
specific case that $\cB \subseteq L^p(S,\Sigma,\mu)$, a sharp bound with $B = p-1$ can be achieved~\citep{Pinelis1992}.
\end{remark*}

\begin{theorem}
\label{theorem:hatKint-close-to-Kint-supnorm}
For every fixed $\bx' \in \cX$ and every $\delta \in (0,1)$, with probability at least
$1-\delta$ over $\bQ = \set{\bbq_i}_{i=1}^N$, we have
\bal\label{eq:hatKint-close-to-Kint-supnorm}
\sup_{\bx \in \cX}\abth{\hKint(\bx,\bx') - \Kint(\bx,\bx')} \lsim \sqrt{\frac{\log 1/{\delta}}{N}}.
\eal
As a result, with probability at least
$1-\delta$ over $\bQ$,
\bal
\sup_{\bx \in \cX,i \in [n]}\abth{\hKint(\bx,\bbx_i) - \Kint(\bx,\bbx_i)} \lsim  \sqrt{\frac{\log (n/{\delta})}{N}}, \label{eq:hatKint-close-to-Kint-S} \\
\ltwonorm{\hbKint - \bKint} \lsim n \sqrt{\frac{\log (n/{\delta})}{N}}.
\label{eq:hatKint-close-to-Kint-spectralnorm}
\eal
\end{theorem}
\begin{proof}
We define
\bal\label{eq:hatKint-close-to-Kint-supnorm-def-p}
p(\bq,\bx') \defeq \frac 1{N} \sum\limits_{j=1}^N K^{(s)}(\bq,\bbq_j)
K(\bbq_j,\bx'), \quad \forall \bq,\bx' \in \cX.
\eal
It follows from Theorem~\ref{theorem:Kint-bounded} and the fact that
$\abth{K(\bx,\bx')} \le 1/2$ for all $\bx,\bx' \in \cX$ that
$\sup_{\bq,\bx' \in \cX} p(\bq,\bx') = \Theta(1)$.
% Moreover, we have
% \bal\label{eq:hatKint-close-to-Kint-supnorm-seg1}
% p(\bq,\bx') = \frac {K^{(s)}(\bbq_i,\bbq_i) K(\bbq_i,\bx') }{N-1} + \frac{1}{N-1}\sum\limits_{j \neq i} K^{(s)}(\bq,\bbq_j) K(\bbq_j,\bx').
% \eal
We now fix $\bx' \in \cX$ in the following arguments.
It follows from (\ref{eq:concentration-RKHS-Hilbert-space-Ksint})
of Lemma~\ref{lemma:concentration-RKHS-Hilbert-space-Kint}
that
%the standard Hoeffding's inequality that,
for every $t > 0$ and every $i \in [N]$,
\bal\label{eq:hatKint-close-to-Kint-supnorm-seg2}
\Prob{ \norm{\frac{1}{N}\sum\limits_{j=1}^N K^{(s)}(\cdot,\bbq_j) K(\bbq_j,\bx')
-\bar K^{(s)} (\cdot,\bx') }{\cH_K} < t}
\ge 1-2\exp\pth{-\Theta(Nt^2)},
\eal
where $\bar K^{(s)} (\cdot,\bx') \defeq
\Expect{\bq}{K^{(s)}(\cdot,\bq) K(\bq,\bx')}$.
The following arguments are conditioned on the event that
(\ref{eq:hatKint-close-to-Kint-supnorm-seg2}) holds.

For each $i \in [N]$, we have  $K^{(s)}(\cdot,\bbq_i) \in \cH_K$, and
$\bar K^{(s)} (\cdot,\bx') \in \cH_K$.
It follows from (\ref{eq:hatKint-close-to-Kint-supnorm-seg2}) that, for all $\bq \in \cX$,
\bal\label{eq:hatKint-close-to-Kint-supnorm-seg3}
&\abth{p(\bq,\bx') - \bar K^{(s)} (\bq,\bx')}
=\iprod{p(\cdot,\bx') - \bar K^{(s)} (\cdot,\bx')}{K(\cdot,\bq)} \nonumber \\
&\le \norm{\frac{1}{N}\sum\limits_{j=1}^N K^{(s)}(\cdot,\bbq_j) K(\bbq_j,\bx')-\bar K^{(s)} (\cdot,\bx') }{\cH_K}
\cdot \norm{K(\cdot,\bq)}{\cH_K}
\le  \frac{t}{\sqrt 2}.
\eal
Define
\bals
\bar K^{\mathop\mathrm{(int)}}(\bx,\bx')
\defeq \frac 1N \sum\limits_{j=1}^N K(\bx,\bbq_i) \bar K^{(s)}(\bbq_i,\bx'),
\quad \forall \bx,\bx' \in \cX.
\eals
Then it follows from the definition of $\hKint$ in (\ref{eq:hatKint-def})
that for all $\bx \in \cX$,
\bal\label{eq:hatKint-close-to-Kint-supnorm-seg4}
&\abth{\hKint(\bx,\bx') - \bar K^{\mathop\mathrm{(int)}}(\bx,\bx')}
= \abth{\frac 1N \sum\limits_{j=1}^N K(\bx,\bbq_i) p(\bbq_i,\bx')
-\frac 1N \sum\limits_{j=1}^N K(\bx,\bbq_i) \bar K^{(s)}(\bbq_i,\bx')}
\nonumber \\
&\le \frac 1N \sum\limits_{j=1}^N \abth{K(\bx,\bbq_i)}
\abth{p(\bbq_i,\bx')- \bar K^{(s)}(\bbq_i,\bx')}
\le  \frac{t}{2{\sqrt 2}},
\eal
where the last inequality follows from
(\ref{eq:hatKint-close-to-Kint-supnorm-seg3}).

Given the fixed $\bx' \in \cX$,
we now approximate $\Kint(\cdot,\bx')$ by
$\bar K^{\mathop\mathrm{(int)}}(\cdot,\bx')$.
First, it can be verified from the definition of $\bar K^{(s)}$
and Theorem~\ref{theorem:Kint-bounded} that
$\sup_{\bq,\bx' \in \cX} \abth{\bar K^{(s)}(\bq,\bx')} = \Theta(1)$
and $p(\cdot,\bx')$ satisfies
$\sup_{\bq \in \cX} p(\bq,\bx') = \Theta(1)$.
It then follows from (\ref{eq:concentration-RKHS-Hilbert-space-Kint}) of
Lemma~\ref{lemma:concentration-RKHS-Hilbert-space-Kint}
that
\bal\label{eq:hatKint-close-to-Kint-supnorm-seg5}
\Prob{\norm{\bar K^{\mathop\mathrm{(int)}}(\cdot,\bx') -
\Expect{\bq}{K(\cdot,\bq)\bar K^{(s)}(\bq,\bx')} }{\cH_K}> t}
\le 2\exp\pth{-\Theta(Nt^2)},
\eal
and we have
$\Expect{\bq}{K(\cdot,\bq)\bar K^{(s)}(\bq,\bx')} = \Kint(\cdot,\bx')$.
It follows from
(\ref{eq:hatKint-close-to-Kint-supnorm-seg4})
and (\ref{eq:hatKint-close-to-Kint-supnorm-seg5})
that for with probability at least $1-4\exp\pth{-\Theta(Nt^2)}$,
for all $\bx \in \cX$,
\bals
&\abth{\hKint(\bx,\bx') - \Kint(\bx,\bx')}
\le \abth{\hKint(\bx,\bx') - \bar K^{\mathop\mathrm{(int)}}(\bx,\bx')}
+ \abth{\bar K^{\mathop\mathrm{(int)}}(\bx,\bx') -\Kint(\bx,\bx') }
\nonumber \\
&\le  \frac{t}{2{\sqrt 2}}
+ \norm{\bar K^{\mathop\mathrm{(int)}}(\cdot,\bx') -
\Expect{\bq}{K(\cdot,\bq)\bar K^{(s)}(\bq,\bx')} }{\cH_K}
\cdot \norm{K(\cdot,\bx))}{\cH_K} \le \frac{3t}{2{\sqrt 2}},
\eals
which proves (\ref{eq:hatKint-close-to-Kint-supnorm}).
(\ref{eq:hatKint-close-to-Kint-S}) and (\ref{eq:hatKint-close-to-Kint-spectralnorm}) follow from (\ref{eq:hatKint-close-to-Kint-supnorm}) by the union bound.
\end{proof}

\begin{lemma}
\label{lemma:concentration-RKHS-Hilbert-space-Kint}
Suppose that $p$ is a function defined on $\cX$ and
$\sup_{\bx \in \cX} \abth{p(\bx)} = \Theta(1)$. Then for every $r > 0$,
\bal\label{eq:concentration-RKHS-Hilbert-space-Kint}
\Prob{\norm{\frac{1}{N} \sum_{i=1}^N K(\cdot,\bbq_i)p(\bbq_i)
-\Expect{\bq}{K(\cdot,\bq)p(\bq)} }{\cH_K}> r}
\le 2\exp\pth{-\Theta(Nr^2)}.
\eal
Similarly, for every $r > 0$,
\bal\label{eq:concentration-RKHS-Hilbert-space-Ksint}
\Prob{\norm{\frac{1}{N} \sum_{i=1}^N K^{(s)}(\cdot,\bbq_i)p(\bbq_i)
-\Expect{\bq}{K^{(s)}(\cdot,\bq)p(\bq)} }{\cH_K}> r}
\le 2\exp\pth{-\Theta(Nr^2)}.
\eal
\end{lemma}
\begin{proof}
Let $\cB = \cH_K \subseteq L^2(\unitsphere{d-1}, \mu)$, then
$\cB \in D(1,1)$~\citep{Pinelis1992}.
Let $p_0 = \sup_{\bx \in \cX} \abth{p(\bx)} = \Theta(1)$. We then construct the martingale $\set{f_k}_{k \in [N]}$. For each $k \in [N]$, we define
\bals
%\resizebox{1 \textwidth}{!}{$
f_k \defeq \Expect{}{\frac{1}{{p_0 \sqrt {2N}}} \sum\limits_{i=1}^N \pth{
K(\cdot,\bbq_i)p(\bbq_i)
-\Expect{\bq}{K(\cdot,\bq)p(\bq)}}\longmid
\cF_k}, \forall k \in [N],
%$},
\eals
where $\set{\cF_k}_{k=0}^N$ is an increasing sequence of $\sigma$-algebras, $\cF_k$ is the $\sigma$-algebra generated by $\set{\bbq_t}_{t=1}^k$. $\cF_0$ is the trivial $\sigma$-algebra so that
$f_0 = 0$. We note that
\bals
%\resizebox{1 \textwidth}{!}{
f_N &= \frac{1}{{p_0\sqrt {2N}}} \sum_{i=1}^N \pth{K(\cdot,\bbq_i)p(\bbq_i)
-\Expect{\bq}{K(\cdot,\bq)p(\bq)}}, \nonumber \\
d_k &= f_k - f_{k-1}
= \frac{1}{{p_0\sqrt {2N}}} \pth{K(\cdot,\bbq_k)p(\bbq_k)- \Expect{\bq}{K(\cdot,\bq)p(\bq)}}, \forall k \in [N],
%$}
\eals
and
$f^* = \max_{k \in [N]} \norm{f_k}{}$. For every $k \in [N]$,
we have
\bal\label{eq:concentration-RKHS-Hilbert-space-Kint-seg1}
\norm{d_k}{\cH_K}
&= \norm{\frac{1}{{p_0\sqrt {2N}}} \pth{K(\cdot,\bbq_k)p(\bbq_k)
- \Expect{\bq}{K(\cdot,\bq)p(\bq)}}}{\cH_K} \nonumber \\
&\stackrel{\circled{1}}{\le} \frac{1}{p_0\sqrt {2N}}
\pth{p_0 \norm{K(\cdot,\bbq_k)}{\cH_K} + p_0
\Expect{\bq}{\norm{K(\cdot,\bq)}{\cH_K}}}
\stackrel{\circled{2}}{\le} \frac 1{\sqrt N},
\eal
where $\circled{1}$ follows from the triangle inequality and
the Jensen's inequality, and $\circled{2}$ follows from the
fact that $\norm{K(\cdot,\bbq_k)}{\cH_K} \le 1/\sqrt{2}$.

It follows from (\ref{eq:concentration-RKHS-Hilbert-space-Kint-seg1})
that $\sum_{k=1}^{\infty} \norm{d_k}{}^2 \le 1$.
Applying
Lemma~\ref{lemma:concentration-Hilbert-space} with
the martingale $\set{f_k}_{k=0}^N$ and
$\cB = \cH_K \subseteq L^2(\unitsphere{d-1}, \mu)$,
$B = 1$,
we have
$\Prob{f^* =\max_{k \in [N]} \norm{f_k}{}> r} \le
2\exp\pth{-\frac{r^2}{2}}$, and it follows that for every $r > 0$,
\bals
\Prob{\norm{\frac{1}{{p_0\sqrt {2N}}} \sum_{i=1}^N \pth{K(\cdot,\bbq_i)p(\bbq_i)-\Expect{\bq}{K(\cdot,\bq)p(\bq)}}}{\cH_K}> r} \le
2\exp\pth{-\frac{r^2}{2}},
\eals
and it follows that
\bals
\Prob{\norm{\frac{1}{N} \sum_{i=1}^N K(\cdot,\bbq_i)p(\bbq_i)-
\Expect{\bq}{K(\cdot,\bq)p(\bq)}}{\cH_K}> r}
\le 2\exp\pth{-\Theta(Nr^2)},
\eals
which completes the proof of
(\ref{eq:concentration-RKHS-Hilbert-space-Kint}) and the constant in $\Theta(Nr^2)$
depends on $p_0 = \Theta(1)$.
% Applying the above inequality to the negative martingale
% $\set{-f_k}_{k=0}^N$, we have
% \bal\label{eq:concentration-RKHS-Hilbert-space-vec-seg1}
% \Prob{\norm{\frac{1}{N} \sum_{i=1}^N K(\cdot,\bbq_i)p(\bbq_i)
% -\Expect{\bq}{K(\cdot,\bq)p(\bq)} }{\cH_K}> r} \le 4\exp\pth{-8Nr},
% \eal

For each $i \in [N]$, we have $K^{(s)}(\cdot,\bbq_i) \in \cH_K$ according to
the definition of  $K^{(s)}$ in (\ref{eq:Kint-def}) and noting that
$\set{{\sqrt \lambda_j} e_j }$ is an orthonormal basis
of $\cH_K$. Also, $\norm{K^{(s)}(\cdot,\bbq_i)}{\cH_K}
= \sqrt{K^{(s)}(\bbq_i,\bbq_i)} \le 1/{\sqrt 2}$ by
Theorem~\ref{theorem:Kint-bounded}.
Repeating the proof of
(\ref{eq:concentration-RKHS-Hilbert-space-Kint}) to
the martingale
\bals
\resizebox{1 \textwidth}{!}{$
f^{(s)}_k = \Expect{}{\frac{1}{{p_0 \sqrt {2N}}} \sum\limits_{i=1}^N \pth{
K^{(s)}(\cdot,\bbq_i)p(\bbq_i)
-\Expect{\bq}{K^{(s)}(\cdot,\bq)p(\bq)}}\longmid
\cF_k}, \forall k \in [N] \cup \set{0}
$},
\eals
proves (\ref{eq:concentration-RKHS-Hilbert-space-Ksint}).
\end{proof}

\vspace{-.1in}
\subsection{More Results about Reproducing Kernel Hilbert Spaces}
\begin{lemma}[In the proof of
{\citet[Lemma 8]{RaskuttiWY14-early-stopping-kernel-regression}}]
\label{lemma:bounded-Ut-f-in-RKHS}
For any $f \in \cH_{K}(\mu_0)$, we have
\bal\label{eq:b.ounded-Ut-f-in-RKHS}
\frac 1n \sum_{i=1}^n \frac{\bth{\bU^{\top}f(\bS')}_i^2}{\hat \lambda_i} \le \mu_0^2.
\eal
Similarly, for $f \in \cH_{\Kint}(\mu_0)$, we have
\bals
\frac 1n \sum_{i=1}^n \frac{\bth{{\bUint}^{\top}f(\bS')}_i^2}{\lambdaint_i} \le \mu_0^2.
\eals
\end{lemma}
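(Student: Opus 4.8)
The plan is to reduce the claimed inequality to the classical reproducing-kernel fact that, for a nonsingular Gram matrix $\bK$, the evaluation vector of any RKHS function obeys $f(\bS')^\top \bK^{-1} f(\bS') \le \norm{f}{\cH_K}^2$, and then to rewrite the left-hand side of the lemma in terms of $\bK$ rather than in the eigenbasis of $\bK_n$.

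First I would observe that, setting $\bg \defeq \bU^\top f(\bS')$ and using $\bK_n = \bU \bSigma \bU^\top$ with $\bSigma = \diag(\hat\lambda_1,\dots,\hat\lambda_n)$ invertible (because $\bK_n$ is non-singular, as recalled in the excerpt), one has $\tfrac1n \sum_{i=1}^n [\bU^\top f(\bS')]_i^2/\hat\lambda_i = \tfrac1n \bg^\top \bSigma^{-1}\bg = \tfrac1n f(\bS')^\top \bK_n^{-1} f(\bS') = f(\bS')^\top \bK^{-1} f(\bS')$, the last step using $\bK_n = \bK/n$. So it suffices to show $f(\bS')^\top \bK^{-1} f(\bS') \le f_0^2$ for every $f \in \cH_K(f_0)$.

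For this I would decompose $\cH_K = V \oplus V^\perp$ with $V \defeq \Span\set{K(\cdot,\bbx_i)}_{i=1}^n$, writing $f = f_V + f_\perp$. By the reproducing property, $f(\bbx_i) = \iprod{f}{K(\cdot,\bbx_i)}_{\cH_K} = \iprod{f_V}{K(\cdot,\bbx_i)}_{\cH_K} = f_V(\bbx_i)$ for every $i$, so $f(\bS') = f_V(\bS')$, while $\norm{f}{\cH_K}^2 = \norm{f_V}{\cH_K}^2 + \norm{f_\perp}{\cH_K}^2 \ge \norm{f_V}{\cH_K}^2$. Writing $f_V = \sum_{j=1}^n c_j K(\cdot,\bbx_j)$ for some $\bc \in \RR^n$ (possible since non-singularity of $\bK$ forces $\set{K(\cdot,\bbx_j)}$ to span an $n$-dimensional subspace), we get $f_V(\bS') = \bK\bc$ and $\norm{f_V}{\cH_K}^2 = \bc^\top \bK \bc$. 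Hence, by symmetry of $\bK$, $f(\bS')^\top \bK^{-1} f(\bS') = (\bK\bc)^\top \bK^{-1}(\bK\bc) = \bc^\top \bK \bc = \norm{f_V}{\cH_K}^2 \le \norm{f}{\cH_K}^2 \le f_0^2$, which combined with the previous paragraph proves the lemma. The statement for $\cH_{\Kint}(f_0)$ follows by the identical argument with $K,\bK,\bU$ and $\set{\hat\lambda_i}$ replaced by $\Kint$, $\bKint$, $\bUint$ and the eigenvalues of $\bKint_n$; the only additional input is that $\bKint_n$ is non-singular, which is Proposition~\ref{proposition:Kint-gram-nonsingular}.

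There is essentially no hard step here — this is the standard argument underlying \cite[Lemma~8]{RaskuttiWY14-early-stopping-kernel-regression}. The only point I would be careful to state is that non-singularity of $\bK$ (resp.\ $\bKint$) is used twice: once so that $\bSigma^{-1}$ is well defined in the rewriting of the left-hand side, and once so that every evaluation vector $f_V(\bS')$ is realized as $\bK\bc$ for a genuine representer expansion. It is also worth remarking that the bound is in fact an equality whenever $f \in V$, so nothing is lost beyond discarding $\norm{f_\perp}{\cH_K}^2$.
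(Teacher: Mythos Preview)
Your argument is correct and is exactly the standard RKHS computation underlying the cited result. The paper does not supply its own proof of this lemma; it simply attributes the statement to the proof of \cite[Lemma~8]{RaskuttiWY14-early-stopping-kernel-regression}, so there is nothing further to compare.
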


\begin{lemma}
\label{lemma:auxiliary-lemma-1}
For any positive real number $a \in (0,1)$ and natural number $t$,
we have
\bal\label{eq:auxiliary-lemma-1}
(1-a)^t \le e^{-ta} \le \frac{1}{eta}.
\eal
\end{lemma}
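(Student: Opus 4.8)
The plan is to establish the two inequalities in $(1-a)^t \le e^{-ta} \le \frac{1}{eta}$ separately, both reducing to the elementary bound $1+y \le e^y$, valid for every $y \in \RR$.

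First I would prove $(1-a)^t \le e^{-ta}$. Since $a \in (0,1)$, we have $1-a \in (0,1)$, so in particular $1-a \ge 0$ and $(1-a)^t$ is well defined. Applying $1+y \le e^y$ with $y = -a$ gives $0 \le 1-a \le e^{-a}$, and since $u \mapsto u^t$ is nondecreasing on $[0,\infty)$ for the positive integer $t$, raising both sides to the $t$-th power yields $(1-a)^t \le \pth{e^{-a}}^t = e^{-ta}$. This is the only place where $t$ being a natural number is used.

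Next I would prove $e^{-ta} \le \frac{1}{eta}$. Multiplying through by $eta > 0$, this is equivalent to $ta\cdot e^{1-ta} \le 1$; writing $x = ta > 0$ it becomes $x e^{-x} \le e^{-1}$. This is exactly the statement that $g(x) = x e^{-x}$ attains its global maximum on $(0,\infty)$ at $x=1$: one computes $g'(x) = e^{-x}(1-x)$, which is positive on $(0,1)$ and negative on $(1,\infty)$, so $g(x) \le g(1) = e^{-1}$ for all $x > 0$. Substituting back $x = ta$ gives $ta\, e^{-ta} \le e^{-1}$, i.e. $e^{-ta} \le \frac{1}{eta}$, as claimed. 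Note this second inequality in fact holds for any real $t > 0$, not just integers.

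I do not anticipate any real obstacle here; both steps are one-line facts. The only points requiring a modicum of care are checking that $1-a \ge 0$ so that the power $(1-a)^t$ behaves monotonically, and correctly identifying the maximizer of $x e^{-x}$ to pin down the constant $1/e$ in the second bound. Chaining the two displays then gives $(1-a)^t \le e^{-ta} \le \frac{1}{eta}$.
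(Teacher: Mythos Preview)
Your proof is correct and follows essentially the same approach as the paper, which simply cites the facts $\log(1-a) \le -a$ for $a \in (0,1)$ and $\sup_{u \in \RR} u e^{-u} \le 1/e$. Your write-up is just a more detailed unpacking of these two one-line facts.
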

\begin{proof}
The result follows from the facts that
$\log(1-a) \le a$ for $a \in (0,1)$ and $\sup_{u \in \RR}
ue^{-u} \le 1/e$.
\end{proof}

\begin{lemma}\label{lemma:hat-eps-eps-relation}
With probability at least $1-4\exp(-\Theta(n\eps_n^2))$,
\bal
\eps_n^2 \lsim\hat \eps_n^2, \quad \hat \eps_n^2 \lsim \eps_n^2.
\label{eq:bound-eps-n-hat-eps-n}
\eal
Similarly, with probability at least $1-4\exp(-\Theta(n\eps_{K,n}^2))$,
\bal
\eps_{K,n}^2 \lsim \hat \eps_{K,n}^2, \quad \hat \eps_{K,n}^2 \lsim \eps_{K,n}^2.
\label{eq:bound-eps-Kn-hat-eps-Kn}
\eal
%where $c_1$, $c_{\sigma_0,1}$, $c_{\sigma_0,2}$ are positive constants depending on $\sigma_0$.
\end{lemma}
\begin{proof}
(\ref{eq:bound-eps-Kn-hat-eps-Kn}) directly follows from
{\citet[Lemma B.7]{yang2024gradientdescentfindsoverparameterized}}.
Repeating the above arguments with $K$ replaced by $\Kint$, we obtain (\ref{eq:bound-eps-n-hat-eps-n}).
\end{proof}

\begin{lemma}\label{lemma:gn-g-LRC-bound}
Let $K$ be a PD kernel, then with probability at least $1-e^{-c_1n \eps_n^2}$,
\bal\label{eq:gn-g-LRC-bound}
\forall g \in \cH_K(1), \quad
\norm{g}{L^2}^2 \le c_2 \norm{g}{n}^2 + c_3 \eps_n^2,
\quad \norm{g}{n}^2 \le c_2 \norm{g}{L^2}^2 + c_3 \eps_n^2,
\eal
where $c_1,c_2,c_3$ are positive constants, and $c_2 > c_3$.
\end{lemma}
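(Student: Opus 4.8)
The plan is to establish this two-sided comparison of the empirical and population $L^2$ geometries on the unit ball $\cH_K(1)$ by the standard local-Rademacher-complexity peeling argument of \cite{bartlett2005}, with the additive slack pinned to the critical radius $\eps_n$ (the fixed point of the sub-root function $\sigma R_K(\sqrt{\cdot})$ for the kernel under consideration). The first observation I would record is that every $g \in \cH_K(1)$ satisfies $\supnorm{g} \le \sup_{\bx \in \cX}\sqrt{K(\bx,\bx)} \le b$ for an absolute constant $b$ (for the kernel (\ref{eq:kernel-two-layer}) one may take $b=1/\sqrt 2$), so that $\norm{g}{L^2}^2$ and $\norm{g}{n}^2$ both lie in $[0,b^2]$ and $\Var{g^2(\bbx_i)} \le \Expect{P}{g^4} \le b^2\norm{g}{L^2}^2$; this ``Bernstein-type'' variance bound, proportional to the mean of $g^2$, is what drives the localized argument.

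Next I would bound the localized population Rademacher complexity. The contraction inequality (Theorem~\ref{theorem:RC-contraction}) applied to $x \mapsto x^2/(2b)$ on $[-b,b]$ gives $\cfrakR\pth{\set{g^2 \colon g \in \cH_K(1),\ \norm{g}{L^2}\le t}} \lsim b\,\cfrakR\pth{\set{g \in \cH_K(1) \colon \norm{g}{L^2}\le t}}$, and by Mendelson's equivalence between the localized Rademacher complexity of kernel balls and the kernel complexity \cite{Mendelson02-geometric-kernel-machine} the latter is $\lsim R_K(t)$ whenever $t^2 \ge 1/n$. Since $\sigma R_K(\sqrt{\cdot})$ is sub-root with fixed point $\eps_n^2$, the sub-root property yields $\sigma R_K(\sqrt{r'}) \le \sqrt{r'\eps_n^2} \le \tfrac12(r'+\eps_n^2)$ for all $r' \ge \eps_n^2$; in particular the complexity at scale $r'$ is \emph{sub-linear} in $r'$, which will be essential.

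Then I would run the two one-sided inclusions at a fixed scale and peel. For $\norm{g}{n}^2 \le c_2\norm{g}{L^2}^2 + c_3\eps_n^2$: apply Lemma~\ref{lemma:LRC-empirical-NN-poplution-empirical-inclusion} with $\cF = \cH_K(1)$, $a=b$, and, for $r' \ge \eps_n^2$, $r \asymp r'+\eps_n^2$ chosen (using the complexity bound above and $x \asymp n\eps_n^2$) so that $r \ge 10a\,\cfrakR(\set{g\colon \norm{g}{L^2}^2 \le r'}) + 11b^2x/n$, giving $\set{g\colon\norm{g}{L^2}^2\le r'}\subseteq\set{g\colon\norm{g}{n}^2\le C(r'+\eps_n^2)}$ with probability $\ge 1-e^{-x}$. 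For the reverse inequality, apply the first bound (\ref{eq:Talagrand-inequality}) of Theorem~\ref{theorem:Talagrand-inequality} to $f = g^2$ over the \emph{population}-localized ball $\set{g\colon\norm{g}{L^2}^2\le r'}$ (whose variance is $\le b^2 r'$), obtaining $\norm{g}{L^2}^2 - \norm{g}{n}^2 \lsim \sqrt{r'\eps_n^2} + \eps_n^2$ on that ball, again with probability $\ge 1-e^{-x}$. A union bound over the dyadic scales $r'_j = 2^j\eps_n^2$, $j=0,\dots,J$ with $J \asymp \log(b^2/\eps_n^2)$, each taken with $x\asymp n\eps_n^2$, promotes both to hold for all $g\in\cH_K(1)$: on the shell containing $g$ one has $r'_j = 2r'_{j-1} < 2\norm{g}{L^2}^2$, so the first direction follows immediately, while the second closes because $\norm{g}{L^2}^2 \le \norm{g}{n}^2 + C\sqrt{2\norm{g}{L^2}^2\eps_n^2} + C\eps_n^2 \le \norm{g}{n}^2 + \tfrac12\norm{g}{L^2}^2 + C'\eps_n^2$ by the arithmetic--geometric inequality, giving $\norm{g}{L^2}^2 \le 2\norm{g}{n}^2 + 2C'\eps_n^2$; the scale $r'=\eps_n^2$ handles $g$ of tiny norm, and inflating $c_2$ makes $c_2>c_3$. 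The same argument with $\Kint$ in place of $K$ yields the version used elsewhere in the paper.

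The main obstacle — the genuinely non-mechanical point — is forcing the additive slack to be of order $\eps_n^2$ rather than $\eps_n$ or $\eps_n^2\log n$. Three things must line up: the deviation parameter must be taken $x\asymp n\eps_n^2$ so that the $x/n$ term in Talagrand's inequality merges into $\eps_n^2$; the localization must be carried out \emph{by the population norm} in both directions so that the variance of $g^2$ is controlled by the localizing radius, and the reverse inequality closes only because $R_K(\sqrt{r'}) \lsim \sqrt{r'\eps_n^2}$ is sub-linear in $r'$, allowing the cross term to be split by arithmetic--geometric mean into a small multiple of $\norm{g}{L^2}^2$ plus a multiple of $\eps_n^2$; and the $\log n$ overhead of the peeling union bound must be dominated by $n\eps_n^2$, which is where the standing assumption $n\cdot\max\set{\eps_{K,n},\eps_n}\to\infty$ enters. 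One should also verify the harmless edge cases $r'\le\eps_n^2$ and the value of the sup-norm constant $b$ for the specific kernel.
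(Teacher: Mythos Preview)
Your proposal is correct and is essentially the standard expansion of what the paper's one-line proof (``The results follow by Theorem~\ref{theorem:Talagrand-inequality}'') leaves implicit: Talagrand's inequality combined with contraction, Mendelson's equivalence $\cfrakR(\cF_t)\asymp R_K(t)$, and peeling over dyadic population-norm shells. The paper gives no further detail, so your write-up is effectively the missing argument rather than an alternative one.
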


\begin{proof}
The results follow by Theorem~\ref{theorem:Talagrand-inequality}.
\end{proof}

\begin{lemma}
[{\citep[Lemma B.9]{yang2024gradientdescentfindsoverparameterized}}]
\label{lemma:sub-root-fix-point-properties}
Suppose $\psi \colon [0,\infty) \to [0,\infty)$ is a sub-root function with the unique fixed point $r^*$. Then the following properties hold.

\begin{itemize}[leftmargin=8pt]
\item[(1)] Let $a \ge 0$, then $\psi(r) + a$ as a function of $r$ is also a sub-root function with fixed point $r^*_a$, and
$r^* \le r^*_a \le r^* + 2a$.
\item[(2)] Let $b \ge 1$, $c \ge 0$ then $\psi(br+c)$ as a function of $r$ is also a sub-root function with fixed point $r^*_b$, and
$r^*_b \le br^* +2c/b$.
\item[(3)] Let $b \ge 1$, then $\psi_b(r) = b\psi(r)$ is also a sub-root function with fixed point $r^*_b$, and
$r^*_b \le b^2r^* $.
\end{itemize}
\end{lemma}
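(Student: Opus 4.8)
The plan is to reduce all three parts to two elementary facts about a sub-root function $\psi$ with positive fixed point $r^*$: (i) for every $r>0$ one has $\psi(r)\le r$ if and only if $r\ge r^*$, the standard characterization of the fixed point (\cite[Lemma 3.2]{bartlett2005}), which also follows directly from the observation that $r\mapsto\psi(r)/r=(\psi(r)/\sqrt r)\cdot(1/\sqrt r)$ is nonincreasing and equals $1$ at $r^*$; and (ii) the one-sided bound $\psi(r)\le\sqrt{r^* r}$ for all $r\ge r^*$, which is immediate from $\psi(\cdot)/\sqrt{\cdot}$ being nonincreasing together with $\psi(r^*)=r^*$. With these in hand, every upper bound in the lemma is obtained by exhibiting the claimed value $R$ as a point at which the relevant transformed function does not exceed $R$, and then invoking (i) \emph{for that transformed function}.

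First I would check that each of the three transforms is again a genuine sub-root function, so that $r^*_a$ and $r^*_b$ are well defined. Nonnegativity and monotonicity are immediate in all cases; the only point needing attention is that dividing by $\sqrt r$ stays nonincreasing. For $\psi(r)+a$ this holds because $\bigl(\psi(r)+a\bigr)/\sqrt r=\psi(r)/\sqrt r+a/\sqrt r$ is a sum of nonincreasing functions; for $b\psi(r)$ it is a positive scalar multiple; for $\psi(br+c)$ I would write $\psi(br+c)/\sqrt r=\bigl(\psi(br+c)/\sqrt{br+c}\bigr)\cdot\sqrt{b+c/r}$, where the first factor is nonincreasing since $\psi(\cdot)/\sqrt{\cdot}$ is nonincreasing and $r\mapsto br+c$ is increasing, and the second factor is nonincreasing since $c\ge0$, so their product (of two nonnegative nonincreasing functions) is nonincreasing. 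Existence and uniqueness of a positive fixed point then follow from the cited properties of sub-root functions, once one notes each transform is not identically zero.

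Next I would prove the three bounds. For part (1): by (i) applied to $\psi(\cdot)+a$ it suffices to verify $\psi(r^*+2a)+a\le r^*+2a$, i.e.\ $\psi(r^*+2a)\le r^*+a$; by (ii), $\psi(r^*+2a)\le\sqrt{r^*(r^*+2a)}=\sqrt{(r^*)^2+2ar^*}\le\sqrt{(r^*+a)^2}=r^*+a$. The lower bound $r^*_a\ge r^*$ follows from $\psi(r^*_a)=r^*_a-a\le r^*_a$ and (i). For part (2): $\psi(b r+c)$ at $r=br^*+2c/b$ equals $\psi\bigl(b(br^*+2c/b)+c\bigr)=\psi(b^2r^*+3c)$, which by (ii) is at most $\sqrt{r^*(b^2r^*+3c)}$; squaring reduces the desired $\sqrt{r^*(b^2r^*+3c)}\le br^*+2c/b$ to $b^2(r^*)^2+3cr^*\le b^2(r^*)^2+4cr^*+4c^2/b^2$, which holds since $c,r^*\ge0$, and then (i) for $\psi(b\cdot+c)$ gives $r^*_b\le br^*+2c/b$. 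For part (3): at $r=b^2r^*$ one has $b\psi(b^2r^*)\le b\sqrt{r^*\cdot b^2r^*}=b^2r^*$ by (ii), so (i) for $b\psi$ gives $r^*_b\le b^2r^*$.

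I do not anticipate a real obstacle: the whole argument is monotonicity built into the definition of a sub-root function plus a few ``complete the square'' manipulations. If anything needs care it is the verification that $r\mapsto\psi(br+c)/\sqrt r$ is nonincreasing in part (2), which is the one place where the conclusion cannot be read off the definition directly and one must decompose the quotient into a product of two monotone factors; a minor additional point is to make explicit that each transformed function has a strictly positive fixed point so that the cited existence/uniqueness result applies.
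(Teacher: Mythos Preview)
Your proposal is correct and follows essentially the same approach as the paper: both verify the transformed functions are sub-root, then use the characterization from \cite[Lemma~3.2]{bartlett2005} together with the bound $\psi(r)\le\sqrt{r^*r}$ for $r\ge r^*$ (which is just the nonincreasing property of $\psi/\sqrt{\cdot}$) to check that the claimed upper bound $R$ satisfies $\psi_{\text{new}}(R)\le R$. Your write-up is in fact more careful than the paper in one place: for part~(2) you explicitly factor $\psi(br+c)/\sqrt r$ as a product of two nonnegative nonincreasing functions, whereas the paper simply asserts the sub-root property ``can be verified by checking the definition.''
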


\section{More Results about $\cHKint$}
\label{sec:RKHS-Kint-more-results}

\begin{theorem}
\label{theorem:Kint-bounded}
The RHS of (\ref{eq:Ks-def}), $\sum\limits_{j \ge 1} \lambda_j^{s} e_j(\bx)e_j(\bx')$,
converges uniformly on $\cX \times \cX$ with $s \ge 1$, and $K^{(s)}$ is well-defined. Moreover, $\sup_{\bx,\bx' \in \cX} \abth{K^{(s)}(\bx,\bx')} \le 1/2$.
\end{theorem}
\begin{proof}
It follows from the proof of Mercer's theorem, such as that in
\citep{Sun2005-RHKS-noncompact-domain}, that the convergence
on the RHS of
\bals
K^{(s)}(\bx,\bx') = \sum\limits_{j \ge 1} \lambda_j^{s}
e_j(\bx)e_j(\bx'), \quad \forall \bx,\bx' \in \cX,
\eals
with $s \ge 1$ is uniform and absolute.
Moreover, by the definition of the eigenvalue and eigenfunction,
we have $T_K e_1 = \lambda_1 e_1$, so that
\bals
\lambda_1^2 &=
\lambda_1^2 \int_{\cX} e_1^2(\bx) \diff \mu(\bx) =
\int_{\cX} \pth{\lambda_1 e_1(\bx)}^2 \diff \mu(\bx)
= \int_{\cX} \pth{T_K e_1}(\bx)^2 \diff \mu(\bx)
\nonumber \\
&= \int_{\cX} \pth{\int_{\cX} K(\bx,\bx') e_1(\bx') \diff \mu(\bx')}^2
\diff \mu(\bx) \nonumber \\
&\le \int_{\cX} \pth{\int_{\cX} K^2(\bx,\bx') \diff \mu(\bx') \cdot
\int_{\cX} e_1^2(\bx') \diff \mu(\bx') } \diff \mu(\bx)
\le \frac 14,
\eals
where the first inequality is due to the H\"{o}lder's inequality.
It follows that $\lambda_j \le 1/2 < 1$ for all $j \ge 1$. As a result,
for all $\bx \in \cX$, we have
\bals
K^{(s)}(\bx,\bx) = \sum\limits_{j \ge 1} \lambda_j^{s}
e_j^2(\bx) \le \sum\limits_{j \ge 1} \lambda_j
e_j^2(\bx)= K(\bx,\bx) = \frac 12.
\eals
Also, for all $\bx,\bx' \in \cX$, it follows from the Cauchy-Schwarz inequality that
\bals
\abth{K^{(s)}(\bx,\bx')} = \abth{\sum\limits_{j \ge 1} \lambda_j^{s}
e_j(\bx)e_j(\bx') }\le \pth{\sum\limits_{j \ge 1} \lambda_j^s
e_j^2(\bx)}^{1/2} \cdot
\pth{\sum\limits_{j \ge 1} \lambda_j^s
e_j^2(\bx')}^{1/2} = \frac 12.
\eals
\end{proof}

\begin{theorem}
\label{theorem:spectrum-Kint}
Let $\set{e_j}_{j \ge 1} \subseteq L^2(\cX,\mu)$ be
a countable orthonormal basis of $L^2(\cX,\mu)$ which comprise the
eigenfunctions of the integral operator $T_K \colon L^2(\cX,\mu) \to L^2(\cX,\mu), \pth{T_K f}(\bx) \defeq \int_{\cX} K(\bx,\bx') f(\bx') \diff \mu(\bx')$,
a positive, self-adjoint, and compact operator on $L^2(\cX,\mu)$.
Let $\set{\lambda_j}_{j \ge 1}$ with $\frac 12 \ge \lambda_1 \ge \lambda_2 \ge \ldots > 0$ such that $e_j$ is the eigenfunction of $T_K$ with
$\lambda_j$ being the corresponding eigenvalue.
Then  $e_j$ is the eigenfunction of $T_{\Kint}$ with
$\lambda^2_j$ being the corresponding eigenvalue. That is,
$T_{\Kint} e_j = \lambda_j^{s+2} e_j$, so that
$\lambdaint_j = \lambda_j^{s+2}$ for all $j \ge 1$.
\end{theorem}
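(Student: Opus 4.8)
The plan is to observe that the integral operator attached to $\Kint$ is nothing but the composition $T_K\circ T_K$, after which the claim is immediate.

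First I would record the preliminary facts: $\Kint$ is continuous on the compact set $\cX\times\cX$ (it is the integral of the continuous, hence bounded, function $K$ against the finite measure $\mu$), it is symmetric, and — once the operator identity below is in hand — positive semi-definite, so that $T_{\Kint}$ is a well-defined positive, self-adjoint, compact operator on $L^2(\cX,\mu)$. The heart of the argument is the identity $T_{\Kint}=T_K^2$. To establish it, I would write, for $f\in L^2(\cX,\mu)$ and $\bx\in\cX$,
\begin{align*}
(T_{\Kint} f)(\bx)
= \int_{\cX} \pth{\int_{\cX} K(\bx,\bv) K(\bv,\bx') \diff \mu(\bv)} f(\bx') \diff \mu(\bx'),
\end{align*}
and then interchange the order of integration. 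The swap is justified because $K$ is bounded on $\cX\times\cX$ and $\mu$ is a probability measure, so $\abth{K(\bx,\bv)K(\bv,\bx')f(\bx')}$ is integrable on $\cX\times\cX$; this is the one analytic point that needs a line of care. After the interchange,
\begin{align*}
(T_{\Kint} f)(\bx)
= \int_{\cX} K(\bx,\bv)\, (T_K f)(\bv) \diff \mu(\bv) = \pth{T_K(T_K f)}(\bx),
\end{align*}
so $T_{\Kint}=T_K^2$ as operators on $L^2(\cX,\mu)$.

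With this in place, the spectral statement follows in one line: since $T_K e_j=\lambda_j e_j$,
\begin{align*}
T_{\Kint} e_j = T_K^2 e_j = \lambda_j\, T_K e_j = \lambda_j^2 e_j
\qquad\text{for all } j\ge 1,
\end{align*}
which is exactly what is asserted. As a byproduct, the positive semi-definiteness of $\Kint$ deferred above drops out, since $\iprod{T_{\Kint} f}{f}_{L^2} = \iprod{T_K f}{T_K f}_{L^2} = \norm{T_K f}{L^2}^2 \ge 0$ by self-adjointness of $T_K$; and, if one wants it, the fact that $\set{e_j}_{j\ge1}$ is a complete orthonormal system adapted to $T_K$ shows that $\set{(\lambda_j^2, e_j)}_{j\ge1}$ exhausts the nonzero spectrum of $T_{\Kint}$. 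The main obstacle, such as it is, is purely the measure-theoretic bookkeeping of the Fubini step together with the verification that $\Kint$ defines a bona fide RKHS; there is no genuine difficulty here, and none of the over-parameterization or training machinery from the rest of the paper is needed for this theorem.
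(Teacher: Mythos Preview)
Your proposal is correct and is essentially the same argument as the paper's: both unfold the definition of $T_{\Kint}$, invoke Fubini to swap the order of integration, and then use $T_K e_j=\lambda_j e_j$ twice. The only cosmetic difference is that you first establish the operator identity $T_{\Kint}=T_K^2$ for general $f$ and then specialize to $e_j$, whereas the paper performs the computation directly on $e_j$; your added remarks on continuity, positive semi-definiteness, and completeness of the spectrum are not in the paper's proof but are harmless elaborations.
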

\begin{proof}
First, it follows from the Mercer's theorem that
\bals
K(\bv,\bv') = \sum\limits_{j \ge 1} \lambda_j
e_j(\bv)e_j(\bv'), \quad \forall \bv,\bv' \in \cX,
\eals
and the convergence on the RHS of the above equality is uniform
and absolute.
Then it follows from the definition of $\Kint$ in
(\ref{eq:Kint-def}) that
\bal\label{eq:spectrum-Kint-seg1}
&\Kint (\bx,\bx') =  \int_{\cX \times \cX} K(\bx,\bv) K^{(s)}(\bv,\bv')  K(\bv',\bx')  \diff \mu(\bv) \otimes \mu(\bv') \nonumber \\
&\stackrel{\circled{1}}{=} \int_{\cX}  \pth{\int_{\cX} \sum\limits_{j \ge 1} \lambda_j
e_j(\bx)e_j(\bv) \cdot
\sum\limits_{j \ge 1} \lambda_j^{s} e_j(\bv)e_j(\bv')
\diff \mu(\bv)} \cdot \sum\limits_{j \ge 1} \lambda_j
e_j(\bv')e_j(\bx') \mu(\bv') \nonumber \\
&\stackrel{\circled{2}}{=} \int_{\cX}
\pth{\sum\limits_{j \ge 1} \lambda_j^{s+1}
e_j(\bx)e_j(\bv')
} \cdot \sum\limits_{j \ge 1} \lambda_j
e_j(\bv')e_j(\bx') \mu(\bv')
\stackrel{\circled{3}}{=}
\sum\limits_{j \ge 1} \lambda_j^{s+2} e_j(\bx)e_j(\bx')
\eal
where $\circled{1}$ follows from the Fubini's Theorem,
and $\circled{2}$,$\circled{3}$ follow by the orthogonality of the
orthogonal basis $\set{e_j}_{j \ge 1}$.

It follows from (\ref{eq:spectrum-Kint-seg1}) that
for all $j \ge 1$,
\bals
&\pth{T_{\Kint} e_j}(\bx)
= \int_{\cX} \pth{\sum\limits_{j' \ge 1} \lambda_j^{s+2} e_{j'}(\bx)e_{j'}(\bx')}
e_j (\bx') \diff \mu(\bx')
=\lambda_j^{s+2} e_j(\bx),
\eals
which proves that $\lambdaint_j = \lambda_j^{s+2}$ for all $j \ge 1$.
\end{proof}

It is known, such as~\citet[Theorem 3.1]{du2018gradient-gd-dnns},
that $\bK_n$ is non-singular. Based on this fact, we have the following
propositions showing that
$\bKint_n$ is also non-singular.
\begin{proposition}
\label{proposition:Kint-gram-nonsingular}
If $\bbx_i \neq \bbx_j$ for all $i,j \in [n]$ and $i \neq j$, then
$\bKint_n$ is also non-singular.
\end{proposition}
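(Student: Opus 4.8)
The plan is to recognize $\bKint$ (and hence $\bKint_n=\bKint/n$, the scaling being irrelevant to non-singularity) as a Gram matrix in $L^2(\cX,\mu)$, and to reduce the claim to a linear-independence statement that is in turn inherited from the known non-singularity of $\bK$. For $i\in[n]$ put $\phi_i\defeq K(\bbx_i,\cdot)$, the representer of point evaluation at $\bbx_i$ in $\cH_K$. Since $K$ is symmetric, $\bKint_{ij}=\Kint(\bbx_i,\bbx_j)=\int_{\cX}K(\bbx_i,\bv)K(\bbx_j,\bv)\diff\mu(\bv)=\iprod{\phi_i}{\phi_j}_{L^2(\cX,\mu)}$, so $\bKint$ is exactly the Gram matrix of $\set{\phi_i}_{i=1}^n$ in $L^2(\cX,\mu)$; it is positive semidefinite, and it is non-singular if and only if $\set{\phi_i}_{i=1}^n$ are linearly independent in $L^2(\cX,\mu)$.

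First I would observe that $\set{\phi_i}_{i=1}^n$ are linearly independent in $\cH_K$: by the reproducing property their $\cH_K$-Gram matrix is $\big(\iprod{\phi_i}{\phi_j}_{\cH_K}\big)_{ij}=\big(K(\bbx_i,\bbx_j)\big)_{ij}=\bK$, which is non-singular by \cite[Theorem 3.1]{du2018gradient-gd-dnns} under the standing hypothesis $\bbx_i\ne\bbx_j$ for $i\ne j$, and a finite family of vectors in a Hilbert space is linearly independent precisely when its Gram matrix is non-singular. Next I would upgrade this to linear independence in $L^2(\cX,\mu)$ by showing that the canonical inclusion $\cH_K\hookrightarrow L^2(\cX,\mu)$ is injective. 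This is immediate from the spectral data already recorded before the proposition: $\set{v_j=\sqrt{\lambda_j}\,e_j}_{j\ge1}$ is an orthonormal basis of $\cH_K$, $\set{e_j}_{j\ge1}$ is orthonormal in $L^2(\cX,\mu)$, and $\lambda_j>0$ for all $j$. Indeed, for $g=\sum_{j\ge1}\beta_j v_j\in\cH_K$ one has $\beta_j^2\lambda_j=\lambda_j^2\,(\beta_j^2/\lambda_j)\le\tfrac14\,(\beta_j^2/\lambda_j)$, so $\sum_j\beta_j^2\lambda_j<\infty$ and $g$ is represented in $L^2(\cX,\mu)$ by $\sum_j\beta_j\sqrt{\lambda_j}\,e_j$; if this representative is the zero element of $L^2(\cX,\mu)$, then $\beta_j\sqrt{\lambda_j}=0$, i.e. $\beta_j=0$ for every $j$, whence $g=0$ in $\cH_K$.

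Combining these, if $\bc\in\RR^n$ satisfies $\sum_i c_i\phi_i=0$ in $L^2(\cX,\mu)$, then $g\defeq\sum_i c_i\phi_i\in\cH_K$ is the zero element of $\cH_K$ by injectivity of the inclusion, and therefore $\bc=0$ by the non-singularity of $\bK$; so $\set{\phi_i}_{i=1}^n$ are linearly independent in $L^2(\cX,\mu)$ and $\bKint_n$ is non-singular. I expect the only step needing genuine care to be the injectivity of $\cH_K\hookrightarrow L^2(\cX,\mu)$, which as noted follows from positivity of all the eigenvalues $\lambda_j$; a shorter but less general alternative is to use continuity of $K$ — then $g=\sum_i c_i\phi_i$ is a continuous function vanishing $\mu$-a.e., hence vanishing on $\supp{\mu}$, which for the uniform distribution on $\cX$ (full support) gives $g(\bbx_j)=0$ for all $j$, i.e. $\bK\bc=0$ and again $\bc=0$.
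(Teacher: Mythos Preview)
Your proof is correct and follows essentially the same route as the paper: both reduce the question to linear independence of features derived from $\bK$ and use strict positivity of the eigenvalues $\lambda_j$ to transfer that independence to the $\Kint$ side. The paper works with explicit Mercer feature maps $\Phi(\bx)=(\sqrt{\lambda_j}\,e_j(\bx))_{j}$ and $\tilde\Phi(\bx)=(\lambda_j\,e_j(\bx))_{j}$ and notes that the diagonal rescaling preserves linear independence, whereas you phrase the identical step as injectivity of the canonical inclusion $\cH_K\hookrightarrow L^2(\cX,\mu)$; your version is arguably cleaner and makes explicit the point the paper leaves as ``it can be verified by definition.''
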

\begin{proof}
\citep[Theorem 3.1]{du2018gradient-gd-dnns} shows that
$\bK_n$ is non-singular.
Define the feature mapping
$\Phi(\bx) \defeq \bth{{\sqrt \lambda_1} e_1(\bx),
{\sqrt \lambda_2} e_2(\bx), \ldots,   }$.
Since $\bth{\bK_n}_{ij} = 1/n
\cdot \Phi(\bbx_i)^{\top} \Phi(\bbx_j)$,
the non-singularity of $\bK$ indicates that the feature maps
on the data $\bS$,
$\set{\Phi(\bbx_i)}_{i=1}^n$, are linearly independent.

On the other hand, Theorem~\ref{theorem:spectrum-Kint}
shows that the $\set{\lambda^{s'}_j, e_j}_{j \ge 1}$ are the eigenvalues and the correponding eigenfunctions of the integral operator
$T_{\Kint}$, where $s' = s+2$.
Let $\tilde \Phi \defeq \bth{{\lambda_1^{\frac{s'}{2}}} e_1(\bx),
{\lambda_2^{\frac{s'}{2}}} e_2(\bx), \ldots,   }$.
Then $\bth{\bKint_n}_{ij} = 1/n \cdot \tilde
\Phi(\bbx_i)^{\top} \tilde \Phi(\bbx_j)$.
Because $\set{\Phi(\bbx_i)}_{i=1}^n$ are linearly independent,
it can be verified by definition that
$\set{\tilde \Phi(\bbx_i)}_{i=1}^n$ are also linearly independent,
so that $\bKint_n$ is not singular.
\end{proof}

\vskip 0.2in
\bibliography{ref}

\end{document}